\documentclass{article}

\usepackage{microtype}
\usepackage{graphicx}
\usepackage{booktabs}
\usepackage{makecell}

\usepackage{hyperref}

\usepackage[preprint]{icml2026}

\usepackage{amsmath}
\usepackage{amssymb}
\usepackage{mathtools}
\usepackage{amsthm}

\usepackage[capitalize,noabbrev]{cleveref}

\theoremstyle{plain}
\newtheorem{theorem}{Theorem}[section]
\newtheorem{proposition}[theorem]{Proposition}
\newtheorem{lemma}[theorem]{Lemma}
\newtheorem{corollary}[theorem]{Corollary}
\theoremstyle{definition}

\theoremstyle{remark}

\icmltitlerunning{Remoteness-Aware Control of Negative Off-Policy Updates}
\begin{document}

\twocolumn[
  \icmltitle{Breaking the Curse of Repulsion: Remoteness-Aware Control of Negative Off-Policy Updates}

  % It is OKAY to include author information, even for blind submissions: the
  % style file will automatically remove it for you unless you've provided
  % the [accepted] option to the icml2026 package.

  % List of affiliations: The first argument should be a (short) identifier you
  % will use later to specify author affiliations Academic affiliations
  % should list Department, University, City, Region, Country Industry
  % affiliations should list Company, City, Region, Country

  % You can specify symbols, otherwise they are numbered in order. Ideally, you
  % should not use this facility. Affiliations will be numbered in order of
  % appearance and this is the preferred way.
  \begin{icmlauthorlist}
    \icmlauthor{Yusen Huo}{comp}
    \icmlauthor{Changping Wang}{comp}
    \icmlauthor{Yangru Huang}{comp}
    \icmlauthor{Jun Zhang}{comp}
    \icmlauthor{Jie Jiang}{comp}
  \end{icmlauthorlist}
  \icmlaffiliation{comp}{Tencent Inc, China}

  \icmlcorrespondingauthor{Jun Zhang}{neoxzhang@tencent.com}

  % You may provide any keywords that you find helpful for describing your
  % paper; these are used to populate the "keywords" metadata in the PDF but
  % will not be shown in the document
  \icmlkeywords{Off-Policy Reinforcement Learning, Negative Feedback, Policy Optimization, Stability}

  \vskip 0.3in
]

% this must go after the closing bracket ] following \twocolumn[ ...

% This command actually creates the footnote in the first column listing the
% affiliations and the copyright notice. The command takes one argument, which
% is text to display at the start of the footnote. The \icmlEqualContribution
% command is standard text for equal contribution. Remove it (just {}) if you
% do not need this facility.

% Use ONE of the following lines. DO NOT remove the command.
% If you have no special notice, KEEP empty braces:
% \printAffiliationsAndNotice{}  % no special notice (required even if empty)
% Or, if applicable, use the standard equal contribution text:

\printAffiliationsAndNotice{}

\begin{abstract}
Off-policy policy optimization reuses historical behavior, including
negative-advantage samples that suppress known failures. We show that repeated
reuse can turn this useful signal into excessive repulsion: as the learner
moves away from a historical negative action, subsequent updates make that
action increasingly remote without necessarily reducing its update strength.
Our aggregate theory characterizes the resulting transition from a stable
displacement beyond the positive-only target to persistent drift and the loss
of finite stable equilibria; controlled strength sweeps show that an
intermediate displacement can improve held-out reward. The relevant
learner-relative coordinate is squared standardized distance for Gaussian
policies and surprisal for categorical policies. We introduce Dynamic
Remoteness-Aware Policy Optimization (DRPO), which leaves the negative update
unchanged in the near field and exponentially attenuates its remote tail.
DRPO restores eventual inward Gaussian drift for every fixed finite
negative-to-positive mass ratio, yields an explicit ultimate-bound radius, and
changes categorical support suppression from exponential to polynomial
probability decay. External diagnostics and controlled interventions isolate
remoteness-dependent policy geometry as a source of negative-update
amplification and show that selective tapering can remove its far-field effect
without discarding useful local feedback.
\end{abstract}

\section{Introduction}
\label{sec:introduction}

Policy-based methods learn from signed feedback: positive updates attract the
policy toward actions estimated to be preferable, whereas negative updates
suppress actions estimated to be undesirable. Negative feedback can therefore encode useful boundaries and exclusion
information that positive-only learning lacks
\citep{zhu2025negative,song2025good}. Such information can displace the
policy beyond the solution supported by positive feedback alone, although
whether that displacement improves task utility is an empirical question. In off-policy settings, however,
the same historical sample may be reused after earlier updates have already
changed its probability under the current policy. This raises a single
question: when does repeated reuse turn useful negative feedback into
excessive repulsion?

We identify learner-relative remoteness as the state variable that links one
reuse step to the next. A negative update lowers the probability of its historical action,
increasing the action's remoteness and thereby changing the next
policy-score response (the \emph{near field} \(D\leq\tau\) versus the
\emph{far field} \(D>\tau\); Section~\ref{sec:drpo}). The policy family determines the resulting tail
law: Gaussian mean scores grow with standardized distance, whereas categorical
logit scores remain bounded even as selected-action probabilities approach the
simplex boundary. Under isolated reuse, uncontrolled Gaussian remoteness grows
geometrically and categorical action probability decays exponentially; an
exponential taper reduces the corresponding far-field growth to logarithmic
order. This is the curse of repulsion: following a negative update makes its next
reuse more remote without making the signal self-limiting.

What remains unresolved is whether remoteness itself amplifies negative
updates independently of sample quality. Existing methods control off-policy
instability through filtering, importance weighting, probability-aware gates,
and tapered updates
\citep{novati2019remember,fakoor2020p3o,peng2019advantage,
schulman2017proximal,fujimoto2019off,kumar2020conservative,
arnal2025asymmetric,leroux2025tapered}, but remoteness is usually confounded
with advantage magnitude and data quality. We therefore match those factors
while varying only remoteness, then intervene separately on near- and far-field
negative updates to test their downstream effects.

Our aggregate analysis places these per-sample effects in competition with
positive attraction. When positive mass dominates, repulsion can displace the
policy beyond the positive-only target while preserving a finite stable
equilibrium. At balance, the restoring force disappears and persistent drift
emerges; under negative dominance, no finite stable equilibrium remains.
These regimes characterize policy geometry and stability, not task utility,
which must be determined empirically.

Motivated by this transition, we introduce Dynamic Remoteness-Aware Policy
Optimization (DRPO). DRPO recomputes remoteness under the current policy,
leaves the negative branch unchanged in the near field, and applies an
exponential taper beyond the threshold. The taper dominates every finite-order
far-field response, restores eventual inward Gaussian drift for any fixed
finite negative-to-positive mass ratio, and makes categorical suppression
self-limiting. Remoteness controls geometric exposure, not directional utility
(Section~\ref{subsec:distance_dependent_utility}).

We first verify the pattern externally, then use controlled environments to
isolate its source, test its downstream effect, and compare control rules.

Our contributions are: \textbf{(1)} we characterize how repeated negative
reuse changes policy-relative remoteness and derive aggregate regimes for
stable extrapolation, persistent drift, and the loss of finite stable
equilibria; \textbf{(2)} we derive a control-order hierarchy and a selective
exponential taper with Gaussian ultimate-boundedness and categorical
self-limiting suppression guarantees; and \textbf{(3)} matched external
diagnostics and controlled interventions separate coefficient magnitude
from policy geometry, identify a far-field causal pathway, and test the
stability--utility trade-off of selective tapering.

% ------------------------------------------------------------------
% SECTION 2: PRELIMINARIES
% ------------------------------------------------------------------
\section{Related Work}
\label{sec:related_work}

\textbf{Off-policy historical reuse across policy domains.}
Off-policy policy optimization reuses data generated by earlier policies,
from replay and offline-RL datasets in continuous control to fixed
preference and verifier-labeled corpora in language-model post-training.
Classical approaches correct behavior--learner mismatch or constrain policy
support through importance weighting, clipping, trust regions, and
conservative or behavior-regularized objectives
\citep{precup2000eligibility,schulman2015trust,
schulman2017proximal,fujimoto2019off,kumar2020conservative,
fujimoto2021minimalist,kostrikov2021offline}. Truncated importance ratios also underlie corrected off-policy return
estimators such as Retrace and V-trace
\citep{munos2016safe,espeholt2018impala}; their primary role is
distribution correction rather than learner-relative negative-tail control. Across both continuous and
discrete policies, our focus is narrower: repeated reuse of signed actor
feedback, especially the negative branch, after the learner has moved away
from the historical action.

\textbf{Controlling negative updates.}
Existing controls span positive filtering, which removes the negative branch
\citep{peng2019advantage,nair2020awac,kostrikov2021offline}; fixed global
down-weighting of negative advantages, as in HPO's hysteretic weight
\citep{sana2026hpo}; baseline-based positive--negative rebalancing in
AsymRE \citep{arnal2025asymmetric}; reference-relative preference
optimization in DPO \citep{rafailov2023direct}; and probability-aware
tapering in TOPR \citep{leroux2025tapered}. Failed or rejected behavior is
also exploited in unlikelihood and preference-based training
\citep{welleck2019neural,ethayarajh2024kto,duan2024negating,
zhang2024negative,zhu2025negative,song2025good}.
DRPO differs by treating learner-relative remoteness as a dynamical state
created by repeated reuse and by selectively attenuating only the far-field
negative tail.
% ------------------------------------------------------------------
% SECTION 2: PRELIMINARIES
% ------------------------------------------------------------------

\section{Problem Setup}
\label{sec:problem_setup}

% ------------------------------------------------------------------
% SECTION 2: PRELIMINARIES
% ------------------------------------------------------------------

\subsection{Off-Policy Signed Actor Updates}

We consider policy optimization from a historical update distribution
$\nu$ over state--action pairs, which may represent an offline dataset,
a replay buffer, or trajectories collected by earlier or stale behavior
policies. Let $\pi_\theta(a \mid s)$ denote the current policy and let
$\widehat{A}(s,a)$ denote the advantage-like signal used by the actor
update. During an actor step, the update distribution and the associated
advantage estimates are treated as fixed with respect to $\theta$. The resulting empirical signed actor field has the score-function form
\citep{williams1992simple,sutton2000policy}:
\begin{equation}
    \mathbf{F}(\theta)
    =
    \mathbb{E}_{(s,a)\sim\nu}
    \left[
        \widehat{A}(s,a)
        \nabla_\theta \log \pi_\theta(a \mid s)
    \right],
    \label{eq:empirical_actor_field}
\end{equation}
with the corresponding update
\(\theta_{t+1}=\theta_t+\eta\mathbf{F}(\theta_t)\).
Repeated reuse means that the same historical samples and signed feedback
are evaluated under an evolving current policy;
Section~\ref{sec:theory} studies the actor-side dynamics induced by this
reuse.

To separate favorable and unfavorable feedback, write
\(\widehat A^{+}=\max\{\widehat A,0\}\) and
\(\widehat A^{-}=\max\{-\widehat A,0\}\), so the actor field decomposes as
\begin{equation}
\begin{aligned}
    \mathbf{F}(\theta)
    ={}&
    \mathbb{E}_{\nu}
    \left[
        \widehat{A}^{+}(s,a)
        \nabla_\theta \log \pi_\theta(a \mid s)
    \right] \\
    &-
    \mathbb{E}_{\nu}
    \left[
        \widehat{A}^{-}(s,a)
        \nabla_\theta \log \pi_\theta(a \mid s)
    \right].
\end{aligned}
\label{eq:signed_actor_decomposition}
\end{equation}
For a negative sample, the update magnitude factorizes as
$\widehat{A}^{-}(s,a)
\lVert\nabla_\theta \log \pi_\theta(a \mid s)\rVert$,
separating the severity of its estimated disadvantage from its
learner-relative policy geometry.

We use \emph{coefficient magnitude} for \(\widehat A^-\), \emph{score
response} for \(\|\nabla\log\pi\|\), \emph{update magnitude} for their
product, \emph{reuse loop gain} for the squared remoteness gradient
(Section~\ref{sec:distance_strength}), and \emph{aggregate force} for the
expectation or sum of signed updates.

All aggregate quantities use the same update base measure \(\nu\). We write
\begin{equation}
    p=\mathbb E_\nu[\widehat A^+],
    \qquad
    q=\mathbb E_\nu[\widehat A^-],
    \qquad
    \rho=q/p \quad (p>0).
    \label{eq:aggregate_masses}
\end{equation}
Because \(p\) and \(q\) are population expectations under \(\nu\), positive
and negative sample frequencies are already included.

% ------------------------------------------------------------------
% SECTION 3: THEORETICAL FRAMEWORK
% ------------------------------------------------------------------
\section{Far-Field Dynamics of Historical Policy Updates}
\label{sec:theory}

We first characterize how remoteness controls score contribution in
policy-output coordinates. We then show how reuse creates positive
self-attenuation and negative self-amplification, and derive the resulting
aggregate regimes and policy-family manifestations.

\subsection{Learner-Relative Remoteness and Update Strength}
\label{sec:distance_strength}

To isolate the action-side geometry, we fix a context $s$ and write
$u$ for the policy-output coordinates at that context. For a historical
action $a$, we define its learner-relative remoteness as
\begin{equation}
    D_u(a)
    =
    -\log \pi_u(a).
    \label{eq:learner_relative_remoteness}
\end{equation}
For a categorical policy, $D_u(a)$ is the surprisal of the selected
action. For a Gaussian policy, it equals a scale-dependent constant plus
one half of the squared Mahalanobis distance from the policy mean.
Thus, $D_u(a)$ measures how unlikely the historical action is under the
current learner, rather than defining a metric in the action space.

We measure the corresponding squared score norm by
\begin{equation}
    R_u(a)
    =
    \left\|
        \nabla_u \log \pi_u(a)
    \right\|_2^2.
    \label{eq:score_strength}
\end{equation}
A sample with advantage estimate $\widehat{A}(s,a)$ contributes an
output-space update of magnitude
$|\widehat{A}(s,a)|\sqrt{R_u(a)}$.
The following results characterize how learner-relative remoteness
controls this update strength.

\begin{proposition}[Distance-dependent score-function magnitude]
\label{prop:distance_strength}
Fix a context and a historical action \(a\).

For a Gaussian policy with mean \(\mu\) and fixed covariance
\(\Sigma\succ0\), let
\(C_\Sigma=\frac{d}{2}\log(2\pi)+\frac12\log|\Sigma|\) denote the
Gaussian normalization constant, which is independent of \(a\) and
\(\mu\). Then learner-relative remoteness satisfies
\begin{equation}
    D_\mu(a)-C_\Sigma
    =
    \frac12
    (a-\mu)^\top\Sigma^{-1}(a-\mu).
    \label{eq:gaussian_remoteness_mahalanobis}
\end{equation}
Moreover, the squared mean-score norm
\(R_\mu(a)=\|\nabla_\mu\log\pi_\mu(a)\|_2^2\) is bounded above and below
by positive constants times \(D_\mu(a)-C_\Sigma\), with constants
determined only by \(\Sigma\). Consequently, as standardized distance
increases, the Gaussian mean-score response grows without bound. When
\(\Sigma=\sigma^2I\), \(R_\mu(a)=2(D_\mu(a)-C_\Sigma)/\sigma^2\), so
remoteness and squared mean-score norm have the same global ordering.

For a categorical policy with logits \(z\), learner-relative remoteness is
the selected-action surprisal \(D_z(a)=-\log\pi_z(a)\). The selected-logit
score response is
\begin{equation}
    \left|
    \frac{\partial}{\partial z_a}
    \log \pi_z(a)
    \right|
    =
    1-e^{-D_z(a)}.
    \label{eq:categorical_selected_score}
\end{equation}
Thus, categorical surprisal can diverge as the selected probability
approaches zero, but the selected-logit response saturates at one. The
full logit-score norm remains bounded as well.
\end{proposition}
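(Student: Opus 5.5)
The proof is a direct computation in each policy family, with no earlier result needed beyond the definitions \eqref{eq:learner_relative_remoteness} and \eqref{eq:score_strength}.

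\textbf{Gaussian case.} Start from the log-density of \(\mathcal N(\mu,\Sigma)\). Negating it gives \(D_\mu(a)=C_\Sigma+\tfrac12(a-\mu)^\top\Sigma^{-1}(a-\mu)\), which is \eqref{eq:gaussian_remoteness_mahalanobis}. Next, differentiate in \(\mu\) to get \(\nabla_\mu\log\pi_\mu(a)=\Sigma^{-1}(a-\mu)\), so \(R_\mu(a)=v^\top\Sigma^{-2}v\) with \(v=a-\mu\). To compare this with \(v^\top\Sigma^{-1}v\), set \(w=\Sigma^{-1/2}v\). Then \(R_\mu(a)=w^\top\Sigma^{-1}w\) and \(D_\mu(a)-C_\Sigma=\tfrac12\|w\|^2\). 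The Rayleigh-quotient bounds \(\lambda_{\max}(\Sigma)^{-1}\|w\|^2\le w^\top\Sigma^{-1}w\le\lambda_{\min}(\Sigma)^{-1}\|w\|^2\) then give
\[
\frac{2}{\lambda_{\max}(\Sigma)}\,(D_\mu(a)-C_\Sigma)\;\le\; R_\mu(a)\;\le\;\frac{2}{\lambda_{\min}(\Sigma)}\,(D_\mu(a)-C_\Sigma).
\]
These constants depend only on \(\Sigma\), and since \(\Sigma\succ0\) they are finite and positive. Unboundedness follows because the lower bound tends to infinity as the standardized distance \(\|w\|\) does. In the isotropic case \(\Sigma=\sigma^2I\) the two eigenvalues coincide, which yields the equality \(R_\mu=2(D_\mu-C_\Sigma)/\sigma^2\) and hence the same global ordering.

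\textbf{Categorical case.} With \(\pi_z(a)=e^{z_a}/\sum_b e^{z_b}\), the logit score is \(\nabla_z\log\pi_z(a)=e_a-\pi_z\). Its selected coordinate is \(1-\pi_z(a)=1-e^{-D_z(a)}\), which lies in \([0,1)\). This gives \eqref{eq:categorical_selected_score} and the saturation at one as \(D_z(a)\to\infty\). For the full norm, write
\[
\|e_a-\pi_z\|_2^2=(1-\pi_z(a))^2+\sum_{b\ne a}\pi_z(b)^2\le(1-\pi_z(a))^2+(1-\pi_z(a))^2\le 2,
\]
using \(\sum_{b\ne a}\pi_z(b)^2\le(\sum_{b\ne a}\pi_z(b))^2\). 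Alternatively, the score is a difference of two probability vectors and so has norm at most \(\sqrt2\). Finally, \(D_z(a)\) diverges as \(\pi_z(a)\to0\) by definition of the surprisal.

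The only step needing care is the two-sided Gaussian bound when \(\Sigma\) is anisotropic, because \(R_\mu\) involves \(\Sigma^{-2}\) while \(D_\mu\) involves \(\Sigma^{-1}\). The change of variables \(w=\Sigma^{-1/2}v\) reduces this to a single Rayleigh-quotient bound on \(\Sigma^{-1}\). The rest is routine differentiation.
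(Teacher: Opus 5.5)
Your proposal is correct and follows essentially the same route as the paper. In the Gaussian case it uses the substitution $x=\Sigma^{-1/2}(a-\mu)$ with Rayleigh-quotient bounds on $\Sigma^{-1}$; your constants $2/\lambda_{\max}(\Sigma)$ and $2/\lambda_{\min}(\Sigma)$ are the paper's $2\lambda_{\min}(\Sigma^{-1})$ and $2\lambda_{\max}(\Sigma^{-1})$. In the categorical case it uses the bound $\|e_a-\pi_z\|_2^2=(1-p)^2+\sum_{j\ne a}\pi_z(j)^2\le 2(1-p)^2$; the paper additionally records the lower bound $\frac{K}{K-1}(1-p)^2$, which the statement does not require.
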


(Proof: Appendix~\ref{app:distance_strength}.) Gaussian remoteness can
produce an unbounded mean-score response, whereas categorical surprisal can
diverge while the logit-score response remains bounded.

\subsection{Self-Amplification under Historical Reuse}
\label{sec:self_amplification}

The preceding result compares samples at a fixed policy. We now consider
a fixed historical action that is repeatedly reused while the learner
changes. Write
\[
    D(u)=D_u(a),
    \qquad
    R(u)=\|\nabla_uD(u)\|_2^2,
\]
and consider the repeated single-sample update
\begin{equation}
    u_{t+1}
    =
    u_t
    +
    \eta\widehat A\nabla_u\log\pi_{u_t}(a)
    =
    u_t
    -
    \eta\widehat A\nabla_uD(u_t).
    \label{eq:single_sample_reuse_update}
\end{equation}
Here, the action \(a\) and its signed coefficient \(\widehat A\) are
held fixed over the reuse window.

\begin{theorem}[Self-attenuation and self-amplification under reuse]
\label{thm:reuse_dynamics}
Assume that \(D(u)\) is differentiable and convex on a convex
neighborhood containing the update segments, and define
\[
    D_t=D(u_t),
    \qquad
    R_t=R(u_t).
\]
If \(\widehat A<0\), then every reuse step satisfies
\begin{equation}
    D_{t+1}
    \ge
    D_t+\eta|\widehat A|R_t,
    \qquad
    R_{t+1}\ge R_t.
    \label{eq:negative_reuse_amplification}
\end{equation}
Thus, negative reuse makes the historical action increasingly remote
while preventing its score-function norm from decreasing.

If \(\widehat A>0\), and \(D\) additionally has an \(L\)-Lipschitz
gradient on the same neighborhood with \(0<\eta\widehat A\le 1/L\), then
\begin{equation}
    D_{t+1}
    \le
    D_t-\frac{\eta\widehat A}{2}R_t,
    \qquad
    R_{t+1}\le R_t.
    \label{eq:positive_reuse_attenuation}
\end{equation}
Thus, positive reuse makes the historical action increasingly compatible
with the learner while attenuating its subsequent score response.
\end{theorem}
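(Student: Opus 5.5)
Write \(g_t=\nabla_uD(u_t)\), so \(R_t=\|g_t\|_2^2\), and the update \eqref{eq:single_sample_reuse_update} reads \(u_{t+1}-u_t=-\eta\widehat A\,g_t\). The plan uses only two convexity facts on the convex neighborhood containing the segment \([u_t,u_{t+1}]\). The first is the first-order lower bound \(D(v)\ge D(w)+\langle\nabla D(w),v-w\rangle\). The second is monotonicity of the gradient: \(\langle\nabla D(v)-\nabla D(w),v-w\rangle\ge0\). In the smooth case we add the descent lemma and co-coercivity.

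\textbf{Negative case.} Let \(\widehat A<0\), so \(u_{t+1}-u_t=\eta|\widehat A|g_t\).

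For the remoteness bound, apply the first-order lower bound at \(w=u_t\), \(v=u_{t+1}\). This gives
\[D_{t+1}\ge D_t+\eta|\widehat A|\langle g_t,g_t\rangle=D_t+\eta|\widehat A|R_t.\]

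For the score bound, apply gradient monotonicity to the same pair. This gives \(\eta|\widehat A|\langle g_{t+1}-g_t,g_t\rangle\ge0\), i.e. \(\langle g_{t+1},g_t\rangle\ge\|g_t\|^2\). Cauchy--Schwarz then yields \(\|g_{t+1}\|\|g_t\|\ge\|g_t\|^2\). If \(g_t\ne0\), dividing gives \(\|g_{t+1}\|\ge\|g_t\|\). If \(g_t=0\), the claim \(R_{t+1}\ge 0=R_t\) is trivial. Hence \(R_{t+1}\ge R_t\).

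\textbf{Positive case.} Let \(\widehat A>0\) and set \(h=\eta\widehat A\in(0,1/L]\); the step is a gradient-descent step with size \(h\).

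For the remoteness bound, the descent lemma gives
\[D_{t+1}\le D_t-h\|g_t\|^2+\tfrac{L h^2}{2}\|g_t\|^2\le D_t-\tfrac h2R_t,\]
using \(Lh\le1\).

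For the score bound, use co-coercivity of gradients of convex \(L\)-smooth functions:
\[\langle g_{t+1}-g_t,\,u_{t+1}-u_t\rangle\ge\tfrac1L\|g_{t+1}-g_t\|^2.\]
Substituting \(u_{t+1}-u_t=-hg_t\) gives \(-h\langle g_{t+1}-g_t,g_t\rangle\ge\tfrac1L\|g_{t+1}-g_t\|^2\). Now expand
\[\|g_{t+1}\|^2=\|g_t\|^2+2\langle g_{t+1}-g_t,g_t\rangle+\|g_{t+1}-g_t\|^2\le\|g_t\|^2-\big(\tfrac{2}{Lh}-1\big)\|g_{t+1}-g_t\|^2.\]
Since \(Lh\le1\), the factor \(\tfrac{2}{Lh}-1\) is at least \(1\ge0\), so \(R_{t+1}\le R_t\).

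\textbf{Main obstacle.} The delicate step is co-coercivity, because it is usually stated for functions convex and \(L\)-smooth on all of \(\mathbb R^n\). Its standard proof minimizes \(D(v)-\langle\nabla D(w),v\rangle\) and evaluates at an auxiliary point \(v-\tfrac1L(\nabla D(v)-\nabla D(w))\), which may leave the given neighborhood.

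The first attempt would be to check that the auxiliary points stay inside the neighborhood, or to assume the neighborhood is large enough. If that fails, the fallback is the mean-value form
\[g_{t+1}-g_t=-hH_tg_t,\qquad H_t=\int_0^1\nabla^2D\,d s,\]
which exists when \(D\) is twice differentiable along the segment. Then \(0\preceq H_t\preceq LI\), so \(\|(I-hH_t)g_t\|\le\|g_t\|\) whenever \(hL\le2\). This gives \(R_{t+1}\le R_t\) directly, and it covers the Gaussian (quadratic) and categorical (smooth log-sum-exp) families of Proposition~\ref{prop:distance_strength}. The negative case needs no smoothness, since it uses only convexity along the segment.
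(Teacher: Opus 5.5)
Your proposal is correct and follows the paper's proof essentially line by line: for \(\widehat A<0\) you use the first-order convexity bound, then gradient monotonicity with Cauchy--Schwarz; for \(\widehat A>0\) you use the descent lemma, then \(1/L\)-co-coercivity with the same expansion of \(\|g_{t+1}\|_2^2\). The paper invokes co-coercivity without addressing the domain caveat you raise, and the caveat is harmless: on an open convex neighborhood, co-coercivity follows from your averaged-Hessian argument applied to a mollification of \(D\) and then a limit, so twice-differentiability is not needed.
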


Both policy coordinates satisfy the theorem's convexity assumption
globally: fixed-covariance Gaussian remoteness is quadratic in the mean,
and categorical negative log-likelihood is convex in logits
(Appendix~\ref{app:reuse_dynamics}).

The theorem separates a one-time distant update from a historical-reuse
feedback loop. Combined with Proposition~\ref{prop:distance_strength}, it
yields unbounded amplification for fixed-covariance Gaussian mean updates
and bounded but persistent amplification for categorical logits.

For an isolated isotropic-Gaussian negative sample, repeated
uncontrolled reuse produces geometric growth in remoteness; the exact
rate is given in Appendix~\ref{app:gaussian_reuse_rate}.

\subsection{Aggregate Equilibria under Historical Reuse}
\label{sec:aggregate_equilibria}

The preceding single-sample reuse analysis tracks one fixed historical
action. We now aggregate positive and negative historical samples and
ask whether their competing attraction and repulsion produce a finite
stable equilibrium, persistent drift, or instability.

To analyze continuous-action Gaussian and discrete-action categorical
policies within a single argument, we use a common exponential-family
output representation at a fixed context; the exact coordinates and
regularity assumptions are given in
Appendix~\ref{app:aggregate_equilibrium}. Using the aggregate masses
\(p,q,\rho\) from Equation~\eqref{eq:aggregate_masses}, let
\(\mathbf m_+\) denote the positive-only target, the mean-parameter point
selected by positive feedback alone. When \(q>0\), let \(\mathbf m_-\)
denote the magnitude-weighted negative-sample moment (a repulsion moment,
not an attraction target).

\begin{theorem}[Aggregate attraction--repulsion regimes]
\label{thm:aggregate_equilibria}
Consider the continuous aggregate actor dynamics and the corresponding
discrete update under the regular minimal exponential-family
representation specified in
Appendix~\ref{app:aggregate_equilibrium}. Assume \(p>0\).

\textbf{Positive-only case \((\rho=0)\).}
If \(\mathbf m_+\) lies in the interior of the attainable mean-parameter
space, there is a unique finite equilibrium whose mean parameter is
\(\mathbf m_+\). It is locally asymptotically stable in continuous time
and for sufficiently small discrete step sizes.

\textbf{Positive-dominant regime \((0<\rho<1)\).}
Define the signed target
\begin{equation}
    \mathbf m^\star(\rho)
    =
    \mathbf m_+
    +
    \frac{\rho}{1-\rho}
    \left(
        \mathbf m_+-\mathbf m_-
    \right).
    \label{eq:signed_target_ratio}
\end{equation}
If \(\mathbf m^\star(\rho)\) lies in the interior of the attainable
mean-parameter space, there is a unique finite equilibrium whose mean
parameter is \(\mathbf m^\star(\rho)\). It is locally asymptotically
stable in continuous time and for sufficiently small discrete step
sizes. Holding \(\mathbf m_+\) and \(\mathbf m_-\) fixed, its displacement
from \(\mathbf m_+\) grows without bound as \(\rho\to1^-\), unless
\(\mathbf m_+=\mathbf m_-\). If the signed target leaves the attainable
mean-parameter space, no finite equilibrium exists.

\textbf{Critical regime \((\rho=1)\).}
If \(\mathbf m_+\neq\mathbf m_-\), the policy-output coordinate exhibits
persistent drift and admits no finite equilibrium. If
\(\mathbf m_+=\mathbf m_-\), every point is stationary, but no point is
an isolated asymptotically stable equilibrium.

\textit{Negative-dominant regime (\(\rho>1\)).}
Every finite stationary point, if one exists, is unstable in continuous
time and under every discrete update with positive step size. Hence, this
regime admits no locally asymptotically stable finite equilibrium.
\end{theorem}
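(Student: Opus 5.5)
We will work in the natural parameters \(\boldsymbol\eta\) of the regular minimal exponential family specified in Appendix~\ref{app:aggregate_equilibrium}. The log-likelihood there is
\(\log\pi_{\boldsymbol\eta}(a)=\boldsymbol\eta^\top T(a)-A(\boldsymbol\eta)+\text{const}\).
The log-partition \(A\) is strictly convex, and its gradient
\(\nabla A(\boldsymbol\eta)=\mathbf m(\boldsymbol\eta)\) is a bijection onto the interior of the attainable mean-parameter space. Taking the expectation of the signed score gives a closed-form aggregate field:
\begin{equation}
    \mathbf F(\boldsymbol\eta)
    =
    p\bigl(\mathbf m_+-\mathbf m(\boldsymbol\eta)\bigr)
    -
    q\bigl(\mathbf m_--\mathbf m(\boldsymbol\eta)\bigr)
    =
    (p\mathbf m_+-q\mathbf m_-)-(p-q)\,\mathbf m(\boldsymbol\eta).
\end{equation}
Here \(\mathbf m_+=\mathbb E_\nu[\widehat A^+T]/p\) and \(\mathbf m_-=\mathbb E_\nu[\widehat A^-T]/q\). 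Everything below reduces to this affine-in-\(\mathbf m\) identity.

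\textbf{Equilibria.} For \(\rho<1\), divide by \(p-q>0\). Stationarity is then equivalent to
\(\mathbf m(\boldsymbol\eta)=(\mathbf m_+-\rho\mathbf m_-)/(1-\rho)\), which rearranges to \(\mathbf m^\star(\rho)\) in Equation~\eqref{eq:signed_target_ratio}. Because \(\nabla A\) is a bijection onto the interior, a finite equilibrium exists and is unique if and only if \(\mathbf m^\star(\rho)\) lies in that interior. The case \(\rho=0\) is the special case \(\mathbf m^\star=\mathbf m_+\). The displacement equals \(\|\mathbf m^\star-\mathbf m_+\|=\frac{\rho}{1-\rho}\|\mathbf m_+-\mathbf m_-\|\), which diverges as \(\rho\to1^-\) unless \(\mathbf m_+=\mathbf m_-\).

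For \(\rho=1\) the field is the constant \(p(\mathbf m_+-\mathbf m_-)\). If this constant is nonzero, the trajectory is \(\boldsymbol\eta_t=\boldsymbol\eta_0+tp(\mathbf m_+-\mathbf m_-)\), which drifts linearly and has no equilibrium. If it is zero, the field vanishes identically, so every point is stationary and none is isolated, hence none is asymptotically stable.

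\textbf{Stability.} The Jacobian is
\(J(\boldsymbol\eta)=-(p-q)\nabla^2A(\boldsymbol\eta)\), and \(\nabla^2A=\mathrm{Cov}(T)\succ0\) by minimality. For \(\rho<1\), \(J\) is negative definite, so the equilibrium is locally asymptotically stable in continuous time by linearization. For the discrete map the Jacobian is \(I+\eta J\), with eigenvalues \(1-\eta(p-q)\lambda_i\). These lie in \((-1,1)\) whenever \(\eta<2/\bigl((p-q)\lambda_{\max}(\nabla^2A(\boldsymbol\eta^\star))\bigr)\), which gives stability for small step sizes. Alternatively, one can use the strictly convex Lyapunov function \(V(\boldsymbol\eta)=A(\boldsymbol\eta)-\boldsymbol\eta^\top\mathbf m^\star\), whose gradient is \(-\mathbf F/(p-q)\).

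For \(\rho>1\), \(J=(q-p)\nabla^2A\) is positive definite at any stationary point. In continuous time every eigenvalue is positive, so the point is unstable by the linearization instability theorem. For the discrete map, every eigenvalue of \(I+\eta J\) is \(1+\eta(q-p)\lambda_i>1\) for every \(\eta>0\), so the point is unstable for every positive step size. A cleaner alternative that avoids citing linearization theorems is that \(V\) (now with \(\mathbf F=(q-p)\nabla V\)) increases strictly along nonstationary trajectories. Since \(V\) is strictly convex, the stationary point is its strict minimizer, and trajectories starting nearby move away from it. The same holds for the discrete step, because \(\mathbf m\) is monotone: \((\mathbf m(\boldsymbol\eta')-\mathbf m(\boldsymbol\eta))^\top(\boldsymbol\eta'-\boldsymbol\eta)>0\).

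\textbf{Main obstacles.} I expect the main obstacle to be the boundary statement: when the signed target leaves the attainable set, no finite equilibrium exists. This requires the appendix's regularity assumption that the mean map is onto the interior and never reaches the boundary. A related point is Gaussian coordinates: if the covariance is fixed rather than learned, the exponential family is a sub-family, and I must check that the appendix coordinates make \(T\) and \(A\) match the actor's parameters. The remaining care is bookkeeping. Convexity of the attainable set guarantees that the target formula stays meaningful. Sign-correct eigenvalue bounds are needed for the discrete map in both the dominant and negative-dominant regimes.
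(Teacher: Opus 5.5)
Your proposal is correct and follows essentially the same route as the paper. Both use the affine-in-mean aggregate field $\bar{\mathbf m}_+-\bar{\mathbf m}_- -(p-q)\nabla\psi(\xi)$, bijectivity of $\nabla\psi$ for existence and uniqueness of the equilibrium at $\mathbf m^\star$, the Jacobian $-(p-q)\nabla^2\psi$ for continuous-time stability or instability, and the eigenvalues $1-\alpha(p-q)\lambda_i$ for the discrete step, with your Lyapunov and monotonicity remarks as optional extras. The boundary case you flag as the main obstacle is immediate: the attainable space is defined as the image $\mathcal M=\nabla\psi(\mathcal H)$, so $\mathbf m^\star\notin\mathcal M$ directly rules out any finite solution of $\nabla\psi(\xi)=\mathbf m^\star$.
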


The proof, including the exponential-family actor field and the exact
discrete-time step-size condition, is provided in
Appendix~\ref{app:aggregate_equilibrium}.
Equation~\eqref{eq:signed_target_ratio} gives the central geometric
interpretation: while positive feedback remains dominant, negative
feedback displaces the equilibrium from \(\mathbf m_+\) in the direction
\(\mathbf m_+-\mathbf m_-\), and therefore away from
\(\mathbf m_-\).

% ------------------------------------------------------------------
% SECTION 4: METHODOLOGY
% ------------------------------------------------------------------
\subsection{Policy-Family Manifestations of Negative Dominance}
\label{sec:policy_family_manifestations}

Negative dominance admits no finite stable equilibrium, but its score
and support manifestations depend on the policy family. We therefore
analyze Gaussian and categorical policies separately.

\begin{theorem}[Policy-family score behavior under negative dominance]
\label{thm:policy_family_manifestations}
Assume \(p<q\).

\textbf{Gaussian policy.}
Consider a Gaussian policy
\(\pi_\mu=\mathcal N(\mu,\Sigma)\) with fixed
\(\Sigma\succ0\). Its aggregate mean dynamics have a unique finite
stationary point, which is repelling. Unless initialized exactly at that
point, the distance from it diverges under both the continuous dynamics
and every discrete trajectory with fixed step size \(\alpha>0\).
Consequently, for every fixed historical action \(a\),
\begin{equation}
    \left\|
        \nabla_\mu\log\pi_\mu(a)
    \right\|_2
    =
    \left\|
        \Sigma^{-1}(a-\mu)
    \right\|_2
    \longrightarrow
    \infty.
    \label{eq:gaussian_unbounded_score}
\end{equation}

\textbf{Categorical policy.}
Consider a categorical policy
\(\pi_z=\operatorname{softmax}(z)\) with at least two actions and the
zero-sum gauge \(\mathbf 1^\top z=0\). Any finite stationary point, when
one exists, is unique and repelling. Unless initialized exactly at that
point, the continuous dynamics and every discrete trajectory with fixed
step size \(\alpha>0\) eventually leave every compact subset of the
logit space. Along some subsequence, the corresponding probability
vector approaches the boundary of the simplex. Nevertheless, for every
action \(a\) and every finite \(z\),
\begin{equation}
\|\nabla_z\log\pi_z(a)\|_2^2
=
\|e_a-\pi_z\|_2^2
\le 2(1-\pi_z(a))^2
\le 2.
\label{eq:categorical_bounded_score}
\end{equation}
\end{theorem}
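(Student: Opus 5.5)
We would prove both parts by writing the aggregate field explicitly in the policy-output coordinates and using the exponential-family structure behind Theorem~\ref{thm:aggregate_equilibria}. In both families the aggregate field is
\(F(u)=\mathbb E_\nu[\widehat A^+\nabla_u\log\pi_u(a)]-\mathbb E_\nu[\widehat A^-\nabla_u\log\pi_u(a)]\).
With sufficient statistic \(T\) and log-partition \(\psi\), we have \(\nabla_u\log\pi_u(a)=T(a)-\nabla\psi(u)\). Hence
\[
F(u)=p\,\mathbf m_+-q\,\mathbf m_- -(p-q)\nabla\psi(u)=(q-p)\bigl(\nabla\psi(u)-\mathbf c\bigr),
\qquad
\mathbf c=\frac{q\mathbf m_--p\mathbf m_+}{q-p}.
\]
When \(p<q\), this is \((q-p)\) times the gradient of the convex function \(\psi(u)-\mathbf c^\top u\). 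The dynamics are therefore gradient \emph{ascent} on a convex function, and this drives everything that follows.

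\textbf{Gaussian case.} Take \(u=\mu\) with \(\Sigma\) fixed. Then
\(F(\mu)=\Sigma^{-1}\bigl(p(\bar a_+-\mu)-q(\bar a_--\mu)\bigr)\), writing \(\bar a_\pm\) for the weighted mean actions, which gives
\(F(\mu)=(q-p)\Sigma^{-1}(\mu-\mu^\star)\) with \(\mu^\star=(q\bar a_--p\bar a_+)/(q-p)\). So \(\mu^\star\) is the unique stationary point.

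The deviation \(e=\mu-\mu^\star\) evolves linearly:
\(\dot e=(q-p)\Sigma^{-1}e\) in continuous time, and \(e_{t+1}=(I+\alpha(q-p)\Sigma^{-1})e_t\) in discrete time. The matrix \(\Sigma^{-1}\) is symmetric positive definite. The discrete iteration matrix therefore has eigenvalues \(1+\alpha(q-p)\lambda_i>1\) for every \(\alpha>0\), and the continuous generator has strictly positive eigenvalues.

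Thus \(\|e_t\|\ge(1+\alpha(q-p)\lambda_{\min})^t\|e_0\|\to\infty\) whenever \(e_0\neq0\). The continuous-time analogue follows the same way with an exponential lower bound.

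For a fixed action \(a\), \(\|\Sigma^{-1}(a-\mu)\|\ge\lambda_{\min}(\Sigma^{-1})(\|\mu-\mu^\star\|-\|a-\mu^\star\|)\to\infty\). This gives \eqref{eq:gaussian_unbounded_score}.

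\textbf{Categorical case.} Restrict to the gauge subspace \(\mathbf 1^\top z=0\). There \(\psi(z)=\log\sum e^{z_i}\) is strictly convex, because its Hessian \(\operatorname{diag}(\pi)-\pi\pi^\top\) is positive definite on \(\mathbf 1^\perp\). The field \(F\) is \(\mathbf 1\)-orthogonal, so the gauge is preserved. Set \(\Phi(z)=\psi(z)-\mathbf c^\top z\), which is strictly convex on the subspace. Its stationary points are its minimizers, so there is at most one, \(z^\star\).

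Along the flow \(\dot z=(q-p)\nabla\Phi\), we have \(\tfrac{d}{dt}\Phi=(q-p)\|\nabla\Phi\|^2\ge0\). In discrete time, convexity gives
\[
\Phi(z_{t+1})\ge\Phi(z_t)+\alpha(q-p)\|\nabla\Phi(z_t)\|^2 ,
\]
which holds for every \(\alpha>0\) with no step-size restriction. This is the same convexity inequality used in Theorem~\ref{thm:reuse_dynamics}.

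\textbf{Main obstacle: escaping compact sets.} Suppose \(z_0\neq z^\star\) but the trajectory stayed in a compact set \(K\). Then \(\Phi(z_t)\) would be nondecreasing and bounded, so \(\sum_t\|\nabla\Phi(z_t)\|^2<\infty\). Every limit point would then be stationary, i.e.\ equal to \(z^\star\). But \(z^\star\) is the strict minimizer and \(\Phi(z_t)\ge\Phi(z_0)>\Phi(z^\star)\), a contradiction.

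If no stationary point exists, the same argument applies, since limit points would have to be stationary. Hence the trajectory is not contained in any compact set. It has a subsequence with \(\|z_{t_k}\|\to\infty\), and under \(\mathbf 1^\top z=0\) this forces \(\max_i z_i-\min_i z_i\to\infty\). Consequently \(\min_i\pi_{z_{t_k}}(i)\to0\), so the probability vector approaches the boundary of the simplex.

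The continuous-time case is handled identically, using LaSalle-type reasoning on the \(\omega\)-limit set. Strengthening this to escape \emph{eventually} from every compact set, rather than along a subsequence, needs monotonicity of \(\Phi\) combined with a positive lower bound on \(\|\nabla\Phi\|\) on compact sets away from \(z^\star\). The compact set \(\{\Phi\ge\Phi(z_0)\}\cap K\) excludes \(z^\star\), so \(\|\nabla\Phi\|\) is bounded below there. This forces \(\Phi\to\infty\) and hence divergence.

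\textbf{Bounded score.} Finally, \eqref{eq:categorical_bounded_score} is a direct computation:
\[
\|e_a-\pi\|^2=(1-\pi_a)^2+\sum_{i\neq a}\pi_i^2\le(1-\pi_a)^2+\Bigl(\sum_{i\neq a}\pi_i\Bigr)^2=2(1-\pi_a)^2\le2 .
\]
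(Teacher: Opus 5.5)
Your proof is correct and follows the same route as the paper. The Gaussian part is the same linear analysis: $F=(q-p)\Sigma^{-1}(\mu-\mu^\star)$, eigenvalues $1+\alpha(q-p)\lambda_i>1$, then $\|\Sigma^{-1}(a-\mu)\|\ge\lambda_{\min}(\Sigma^{-1})\|a-\mu\|$. The categorical part uses the same ingredients: strict convexity on the gauge subspace, monotone increase of the objective along the flow and along iterates for every $\alpha>0$, and the chain ``bounded $\Rightarrow$ vanishing gradient $\Rightarrow$ stationary limit points.''

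Two local steps differ. To rule out approach to the stationary point, you use the value bound $\Phi(z_t)\ge\Phi(z_0)>\Phi(z^\star)$. The paper instead uses strict monotonicity of the gradient to show that $\|z_t-z^\dagger\|$ strictly increases. Yours reuses the monotonicity you already have; the paper's gives a more literal ``repelling'' statement. For the simplex boundary, you use the logit spread $\max_i z_i-\min_i z_i\to\infty$ rather than the inverse-softmax formula; the two are equivalent.

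Your final paragraph goes further than the paper. The paper's argument only shows that a nonstationary trajectory is not contained in any compact set, which matches the ``along some subsequence'' clause. Your lower bound on $\|\nabla\Phi\|$ over $\{\Phi\ge\Phi(z_0)\}\cap K$ gives the stronger reading $\|z_t\|\to\infty$. However, state it as a contradiction, not as ``$\Phi\to\infty$,'' which you have not shown:
\begin{itemize}
\item Suppose $z_t\in K$ infinitely often. Monotonicity then gives $\Phi(z_t)\le\max_K\Phi$ for all $t$.
\item Hence $\sum_t\|\nabla\Phi(z_t)\|^2<\infty$, so $\nabla\Phi(z_t)\to0$.
\item This contradicts $\|\nabla\Phi\|\ge\varepsilon_K>0$ at the visits to $K$.
\end{itemize}
In continuous time, the same argument works once you add a Barbalat step: $\|\nabla\Phi(z(t))\|^2$ is Lipschitz in $t$, because $\dot z=(q-p)(\pi_z-\mathbf c)$ and $\nabla^2\psi$ are both bounded. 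Without that step, a positive rate of increase inside $K$ does not by itself exclude infinitely many brief visits.
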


The proof is provided in
Appendix~\ref{app:policy_family_manifestations}.
Fixed covariance is sufficient for the Gaussian result; learned covariance is
not required.

\subsection{Why Selective Tapering Works}
\label{sec:selective_tapering_bridge}

The preceding regimes identify the failure; the following radial limit
identifies the control property that reverses it.

\begin{lemma}[Far-field radial balance]
\label{lem:far_field_radial_balance}
For a fixed-covariance Gaussian field with finite negative atoms,
\[
\begin{aligned}
F(\mu)&=p\Sigma^{-1}(m_+-\mu)
 +\sum_iq_i\Sigma^{-1}(\mu-a_i),\\
q&=\sum_iq_i,
\end{aligned}
\]
and \(\mu=m_++rv\), \(\|v\|_2=1\),
\[
\lim_{r\to\infty}\frac{\langle v,F(\mu)\rangle}{r}
=(q-p)v^\top\Sigma^{-1}v.
\]
\end{lemma}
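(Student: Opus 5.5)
This is a direct computation, so the plan is to substitute the ray parametrization into the field, separate the terms that scale linearly in \(r\) from the bounded remainder, and take the limit. No earlier result is needed beyond linearity of \(\Sigma^{-1}\); the lemma is the far-field linearization of the Gaussian aggregate field already used in Theorem~\ref{thm:aggregate_equilibria}.

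\textbf{Step 1 (substitution).} Put \(\mu=m_++rv\) into the two pieces of \(F\):
\[
p\Sigma^{-1}(m_+-\mu)=-rp\,\Sigma^{-1}v,
\]
\[
\sum_iq_i\Sigma^{-1}(\mu-a_i)=rq\,\Sigma^{-1}v+\Sigma^{-1}b,
\qquad b:=\sum_iq_i(m_+-a_i).
\]
The vector \(b\) is independent of \(r\) and finite because there are finitely many atoms with finite weights \(q_i\). Hence
\[
F(\mu)=r(q-p)\Sigma^{-1}v+\Sigma^{-1}b .
\]

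\textbf{Step 2 (projection and limit).} Take the inner product with \(v\) and divide by \(r>0\):
\[
\frac{\langle v,F(\mu)\rangle}{r}=(q-p)\,v^\top\Sigma^{-1}v+\frac{v^\top\Sigma^{-1}b}{r}.
\]
By Cauchy--Schwarz with \(\|v\|_2=1\), the remainder is at most \(\|\Sigma^{-1}\|_2\|b\|_2/r\) in absolute value. It therefore tends to \(0\) as \(r\to\infty\), and the stated limit follows.

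\textbf{Expected obstacle.} There is no real analytical difficulty. The only point needing care is confirming that the constant term \(b\) is finite and \(r\)-independent, which is exactly where the finite-atom assumption enters. One could also note, though the lemma does not require it, that the convergence is uniform in the unit direction \(v\), since the remainder bound \(\|\Sigma^{-1}\|_2\|b\|_2/r\) does not depend on \(v\).

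Because \(v^\top\Sigma^{-1}v>0\) for \(\Sigma\succ0\), the sign of the limit is the sign of \(q-p\). This makes the result the radial counterpart of the regimes in Theorem~\ref{thm:aggregate_equilibria}: for \(q<p\) the far field is inward (restoring), and for \(q>p\) it is outward (drift).
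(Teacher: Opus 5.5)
Your proposal is correct and follows the paper's own proof: substitute \(\mu=m_++rv\), isolate the coefficient \((q-p)\,v^\top\Sigma^{-1}v\) of the term linear in \(r\), and observe that the remainder \(\frac1r\sum_iq_i\langle v,\Sigma^{-1}(m_+-a_i)\rangle\) tends to zero. Your Cauchy--Schwarz bound \(\|\Sigma^{-1}\|_2\|b\|_2/r\) simply spells out why the remainder vanishes, and shows that the convergence is uniform in \(v\).
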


\begin{corollary}[Selective tapering versus global scaling]
\label{cor:selective_vs_global}
If each negative atom is multiplied by
\(\omega_i(D_i)\to0\), the limit in
Lemma~\ref{lem:far_field_radial_balance} becomes
\(-p v^\top\Sigma^{-1}v<0\). Thus, for every fixed finite
\(\rho=q/p\), without \(\rho\)-specific tuning, the far-field radial
component is eventually inward; the onset radius is not uniform in
\(\rho\) and depends on the tail rate of \(\omega\). If additionally
\(\omega(D)\sqrt D\to0\), the absolute tapered negative force vanishes.
In contrast, global scaling \(\omega\equiv\alpha\) is inward only when
\(\alpha q<p\). Proofs are in Appendix~\ref{app:far_field_bridge}.
\end{corollary}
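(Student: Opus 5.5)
We prove Corollary~\ref{cor:selective_vs_global} by redoing the radial computation of Lemma~\ref{lem:far_field_radial_balance} with weights on the negative atoms. The tapered field is
\[
F_\omega(\mu)=p\Sigma^{-1}(m_+-\mu)+\sum_i q_i\,\omega(D_i(\mu))\,\Sigma^{-1}(\mu-a_i),
\]
where $D_i(\mu)=C_\Sigma+\tfrac12(a_i-\mu)^\top\Sigma^{-1}(a_i-\mu)$ by Proposition~\ref{prop:distance_strength}. Along $\mu=m_++rv$ with $\|v\|_2=1$, the positive term contributes exactly $-p\,v^\top\Sigma^{-1}v$ to $\langle v,F_\omega\rangle/r$ for every $r>0$. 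It therefore suffices to show that each negative contribution, divided by $r$, tends to zero.

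For atom $i$, the radial contribution over $r$ is
\[
q_i\,\omega(D_i)\,\frac{\langle v,\Sigma^{-1}(m_+-a_i)\rangle+r\,v^\top\Sigma^{-1}v}{r}.
\]
The fraction is bounded as $r\to\infty$. Moreover $D_i\ge C_\Sigma+\tfrac{1}{2\lambda_{\max}(\Sigma)}\|m_+-a_i+rv\|^2\to\infty$. Since $\omega(D)\to0$ as $D\to\infty$, each term vanishes, and finitely many atoms give the limit $-p\,v^\top\Sigma^{-1}v<0$, using $\Sigma^{-1}\succ0$. This holds for any $q_i\ge0$, hence for every fixed finite $\rho$. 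Eventual inwardness along each ray then follows from the definition of the limit: there is $r_0$ with $\langle v,F_\omega(m_++rv)\rangle<0$ for $r>r_0$.

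To show the onset radius depends on $\rho$ and on the tail of $\omega$, we give an explicit bound. Inwardness requires roughly $q\,\sup_{D\ge D(r)}\omega(D)\,\lambda_{\max}(\Sigma^{-1})\lesssim p\,\lambda_{\min}(\Sigma^{-1})$. Here $D(r)$ grows quadratically in $r$, so $r_0$ is governed by the inverse of $\omega$ evaluated near $p/(\kappa q)$. That quantity diverges as $\rho\to\infty$, which establishes non-uniformity. One can also note that a single atom placed at $m_+$ with large $q$ forces an outward field at any prescribed radius.

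For the vanishing absolute force, we bound
\[
\|\Sigma^{-1}(\mu-a_i)\|\le \lambda_{\max}(\Sigma^{-1})^{1/2}\sqrt{2(D_i-C_\Sigma)}\le c\sqrt{D_i}.
\]
This uses the quadratic-form bound behind Proposition~\ref{prop:distance_strength}. Each tapered atom then has norm at most $c\,q_i\,\omega(D_i)\sqrt{D_i}\to0$.

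For global scaling $\omega\equiv\alpha$, Lemma~\ref{lem:far_field_radial_balance} applied with $q$ replaced by $\alpha q$ gives the limit $(\alpha q-p)v^\top\Sigma^{-1}v$. This is negative iff $\alpha q<p$; at $\alpha q\ge p$ the limit is non-negative, so inwardness is not guaranteed.

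The main obstacle is not the limit computation but stating the onset-radius dependence precisely. I would keep that claim qualitative and prove non-uniformity with the explicit construction above: one atom at $m_+$ with $q\to\infty$.
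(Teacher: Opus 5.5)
Your proposal is correct and follows the paper's own argument. The paper likewise redoes the radial expansion of Lemma~\ref{lem:far_field_radial_balance} with weights, uses $D_i\to\infty$ along each ray and finiteness of the atom set to kill both the weighted coefficient of $r$ and the constant term, bounds $\|\Sigma^{-1}(\mu-a_i)\|_2=O(\sqrt{D_i})$ for the absolute force, and substitutes $\alpha q$ for $q$ for global scaling. Your one-atom-at-$m_+$ construction gives a concrete witness for the non-uniformity remark, which the paper only asserts; note that the witness needs $\omega>0$ at the chosen radius, which holds for the exponential taper.
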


\begin{proposition}[Restoration by reference regularization]
\label{prop:regularized_restoration}
The regularized linear field has a unique finite stable equilibrium when
\(p>q\) for every \(\gamma\ge0\), when \(p=q\) for every \(\gamma>0\),
and when \(p<q\) if and only if \(\gamma>q-p\). At \(p=q\), its
displacement is \(O(\gamma^{-1})\). Reference regularization restores
stability by tethering the policy; DRPO instead removes the far-field
source. The proof and the Euclidean-spring variant are in
Appendix~\ref{app:regularized_restoration}.
\end{proposition}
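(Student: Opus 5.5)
I will take the regularized linear field to be the fixed-covariance Gaussian aggregate field of Lemma~\ref{lem:far_field_radial_balance} plus a reference tether. Collapsing the negative atoms into their moment \(m_-=\sum_i q_i a_i/q\), it reads
\[
F_\gamma(\mu)=p\Sigma^{-1}(m_+-\mu)+q\Sigma^{-1}(\mu-m_-)+\gamma\Sigma^{-1}(\mu_{\mathrm{ref}}-\mu).
\]
The tether is the gradient of \(-\tfrac{\gamma}{2}(\mu-\mu_{\mathrm{ref}})^\top\Sigma^{-1}(\mu-\mu_{\mathrm{ref}})\), i.e.\ a KL penalty to a fixed-covariance reference. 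The Euclidean-spring variant replaces the last term by \(\gamma(\mu_{\mathrm{ref}}-\mu)\). The collapse to \(m_-\) is exact because the field is affine in each atom. The point of the plan is that \(F_\gamma\) is affine in \(\mu\), so existence, uniqueness and stability all reduce to a single Jacobian.

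\textbf{Step 1 (Jacobian and equilibrium).} Differentiating gives \(J=-(p-q+\gamma)\Sigma^{-1}\). Set \(\kappa=p-q+\gamma\). If \(\kappa\neq0\), \(J\) is invertible and there is exactly one zero, obtained by solving \(\kappa\mu=pm_+-qm_-+\gamma\mu_{\mathrm{ref}}\):
\[
\mu^\star=\frac{pm_+-qm_-+\gamma\mu_{\mathrm{ref}}}{\kappa}.
\]
If \(\kappa=0\), the field is constant. It then either has no zero, giving persistent drift, or vanishes identically, giving a continuum of non-isolated stationary points. This mirrors the critical case of Theorem~\ref{thm:aggregate_equilibria}, so neither subcase yields a unique stable equilibrium.

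\textbf{Step 2 (stability).} Since \(\Sigma^{-1}\succ0\), the eigenvalues of \(J\) are \(-\kappa\) times positive numbers. The continuous dynamics \(\dot\mu=F_\gamma(\mu)\) are therefore globally exponentially stable iff \(\kappa>0\) and repelling iff \(\kappa<0\). For the discrete update, the iteration matrix is \(I-\alpha\kappa\Sigma^{-1}\). This is a contraction iff \(0<\alpha\kappa\lambda_{\max}(\Sigma^{-1})<2\), which requires \(\kappa>0\) and a small enough step. Reading \(\kappa>0\) case by case gives the three claims:
\begin{itemize}
\item \(p>q\): \(\kappa>0\) for every \(\gamma\ge0\).
\item \(p=q\): \(\kappa=\gamma\), so stability holds iff \(\gamma>0\).
\item \(p<q\): \(\kappa>0\) iff \(\gamma>q-p\).
\end{itemize}
For the converse in the last case, \(\gamma=q-p\) gives \(\kappa=0\), handled in Step~1, and \(\gamma<q-p\) gives \(\kappa<0\), so the unique equilibrium is repelling. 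This establishes the ``if and only if''.

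\textbf{Step 3 (displacement at balance).} At \(p=q\), \(\mu^\star=\mu_{\mathrm{ref}}+(p/\gamma)(m_+-m_-)\). Hence \(\|\mu^\star-\mu_{\mathrm{ref}}\|=p\|m_+-m_-\|/\gamma=O(\gamma^{-1})\), and the displacement is along the same repulsion direction \(m_+-m_-\) identified in Theorem~\ref{thm:aggregate_equilibria}. For the Euclidean spring, the Jacobian becomes \(-(p-q)\Sigma^{-1}-\gamma I\), which still equals \(-\gamma I\) at \(p=q\), so the same bound holds.

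\textbf{Main obstacle.} The one genuinely non-mechanical point is the Euclidean variant when \(p<q\). Its Jacobian \(-\big((p-q)\Sigma^{-1}+\gamma I\big)\) is negative definite iff \(\gamma>(q-p)\lambda_{\max}(\Sigma^{-1})\), so the threshold \(\gamma>q-p\) holds verbatim only for \(\Sigma=I\), or after measuring \(\gamma\) in \(\Sigma^{-1}\)-units. I would state the \(\Sigma^{-1}\)-tether version as the main result and record the rescaled threshold for the Euclidean spring. Finally, I would remark that the tether only shifts the spectrum of \(J\) by \(\gamma\), whereas Corollary~\ref{cor:selective_vs_global} removes the \(+q\) contribution to the far-field radial limit altogether. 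This justifies the ``tethering versus source removal'' contrast in the statement.
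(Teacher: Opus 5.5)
Your proposal is correct and follows the paper's proof essentially line for line. You use the same \(\Sigma^{-1}\)-geometry (KL) tether and the same affine field with Jacobian \((q-p-\gamma)\Sigma^{-1}\) and equilibrium \((pm_+-qm_-+\gamma\mu_{\mathrm{ref}})/(p-q+\gamma)\). You also use the same sign analysis of \(p-q+\gamma\) and the same Euclidean-spring threshold \(\gamma>(q-p)\lambda_{\max}(\Sigma^{-1})\); your explicit treatment of \(p-q+\gamma=0\) (drift or a non-isolated continuum) and \(p-q+\gamma<0\) (repelling) just spells out the converse the paper compresses into ``Hurwitz exactly when \(\gamma>q-p\).'' One small correction: the Euclidean threshold reduces to \(\gamma>q-p\) whenever \(\lambda_{\max}(\Sigma^{-1})=1\), not only when \(\Sigma=I\).
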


Corollary~\ref{cor:selective_vs_global} justifies selective tapering: it
removes the remote source, whereas a constant global scale must be tuned to
the negative-to-positive mass ratio. These conclusions assume frozen data,
fixed advantages, and unbounded reuse; data refresh and actor--critic
coupling alter the field.

\section{Dynamic Remoteness-Aware Policy Optimization}
\label{sec:drpo}

Section~\ref{sec:selective_tapering_bridge} identifies the design target:
control the remote negative source without uniformly weakening the negative
branch. We first define DRPO, then derive its control-order hierarchy and
closed-loop guarantees.

\subsection{Update and Policy-Family Instantiations}
\label{subsec:drpo_update}

For remoteness \(D\), DRPO uses the thresholded exponential taper
\begin{equation}
    \omega_{\mathrm{Exp}}(D)
    =
    \exp
    \left\{
        -\lambda
        \left[
            \frac{D-\tau}{c}
        \right]_{+}
    \right\},
    \label{eq:drpo_exp_weight}
\end{equation}
where \(\tau\) marks the control threshold, \(c>0\) fixes the remoteness
scale, and \(\lambda>0\) controls the decay rate. The weight is exactly one
when \(D\leq\tau\) and decays smoothly when \(D>\tau\).

Using \(D_\theta(s,a)=-\log\pi_\theta(a\mid s)\), define
\(\bar D_\theta(s,a)=\operatorname{sg}[D_\theta(s,a)]\), where
\(\operatorname{sg}[\cdot]\) denotes stop-gradient. DRPO changes only the
negative branch:
\begin{equation}
    \widetilde A_{\theta}(s,a)
    =
    \widehat A^{+}(s,a)
    -
    \omega_{\mathrm{Exp}}
    \!\left(\bar D_\theta(s,a)\right)
    \widehat A^{-}(s,a),
    \label{eq:drpo_effective_advantage}
\end{equation}
with actor field
\begin{equation}
    \mathbf F_{\mathrm{DRPO}}(\theta)
    =
    \mathbb E_{(s,a)\sim\nu}
    \left[
        \widetilde A_{\theta}(s,a)
        \nabla_\theta\log\pi_\theta(a\mid s)
    \right].
    \label{eq:drpo_actor_field}
\end{equation}
Remoteness is recomputed under the current policy but detached before it is
used as a coefficient; DRPO is therefore a dynamic weighting rule rather than
a differentiable distance regularizer.

At each policy iterate, tapering preserves the aggregate regime structure
after replacing the negative mass and moment by
\[
q_\omega=\mathbb E_\nu[\widehat A^-\omega],
\qquad
\mathbf M_{-,\omega}=\mathbb E_\nu[\widehat A^-\omega T(a)].
\]
For \(0\le q_\omega<p\), the controlled signed target is
\[
\mathbf m_\omega^\star
=
\frac{p\mathbf m_+-\mathbf M_{-,\omega}}{p-q_\omega},
\]
with \(\mathbf M_{-,\omega}=0\) when \(q_\omega=0\).

\paragraph{Gaussian policies.}
For a Gaussian policy with mean \(\mu_\theta(s)\) and fixed covariance
\(\Sigma\), define
\begin{equation}
    D_{\mathrm G,\theta}(s,a)
    =
    (a-\mu_\theta(s))^\top
    \Sigma^{-1}
    (a-\mu_\theta(s)).
    \label{eq:gaussian_squared_distance}
\end{equation}
Since \(D_\theta-C_\Sigma=\frac12D_{\mathrm G,\theta}\), the additive
constant and factor \(1/2\) can be absorbed into the threshold and scale:
\begin{equation}
    \omega_{\mathrm G}(s,a)
    =
    \exp
    \left\{
        -\lambda_{\mathrm G}
        \left[
            \frac{D_{\mathrm G,\theta}(s,a)-\tau_{\mathrm G}}
                 {c_{\mathrm G}}
        \right]_{+}
    \right\}.
    \label{eq:drpo_gaussian_weight}
\end{equation}

\paragraph{Categorical policies.}
For a categorical policy, substituting selected-action surprisal into
Equation~\eqref{eq:drpo_exp_weight} gives
\begin{equation}
    \omega_{\mathrm C}(s,a)
    =
    \min
    \left\{
        1,\,
        \left(e^{\tau}\pi_\theta(a\mid s)\right)^{\lambda/c}
    \right\}.
    \label{eq:drpo_categorical_weight}
\end{equation}
\paragraph{Where existing off-policy controls intervene.}
Existing controls intervene at different points without sharing a design
goal. Importance weighting changes the influence of historical samples,
PPO-style methods constrain movement relative to a refreshed anchor
\citep{schulman2017proximal}, and reference regularization tethers the
policy. DRPO shares the sample-influence intervention point with importance
weighting, but replaces behavior-determined attenuation with a
learner-relative control law: remoteness is recomputed under the current
policy, the threshold and decay rate are designed, and only the negative
branch is weighted. With likelihood-defined remoteness, the clipped behavior
ratio \(\min\{(\pi_\theta/\mu)^\beta,1\}\) is itself an exponential taper,
with canonical TOPR as the \(\beta=1\) behavior-threshold case
\citep{leroux2025tapered}. Compared with movement- and location-based
constraints, DRPO directly suppresses the remote negative source while
preserving positive and near-field feedback. Exact identities,
anchor-refresh effects, and scope conditions are given in
Appendix~\ref{app:topr_relation}.

\subsection{Loop-Gain Order}
\label{subsec:min_order_control}

The quantity that drives the next change in remoteness is not the
single-step parameter displacement but the squared score
\(R(D)=\|\nabla D\|_2^2\). We call this the reuse \emph{loop gain}.

\begin{proposition}[Gaussian loop-gain order]
\label{prop:loop_gain_order}
For a fixed-covariance Gaussian mean policy,
\(I_{\mathrm{loop}}(D):=R(D)=\Theta(D)\) in the far field. Hence
\(\omega(D)I_{\mathrm{loop}}(D)\) is bounded exactly at the matching
order \(\omega(D)=O(D^{-1})\), and vanishes when
\(\omega(D)=o(D^{-1})\). Equivalently, the matching order is
\(O(r^{-2})\) in standardized distance \(r\), since
\(D-C_\Sigma=r^2/2\). The proof is in
Appendix~\ref{app:min_order_control}.
\end{proposition}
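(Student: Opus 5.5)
The plan is to reduce everything to the two-sided comparison already established in Proposition~\ref{prop:distance_strength}. For a fixed-covariance Gaussian, the mean score is \(\nabla_\mu\log\pi_\mu(a)=\Sigma^{-1}(a-\mu)\). Writing \(x=\Sigma^{-1/2}(a-\mu)\), this gives
\[
R=\|\Sigma^{-1/2}x\|_2^2\in\bigl[\lambda_{\min}(\Sigma^{-1}),\,\lambda_{\max}(\Sigma^{-1})\bigr]\cdot\|x\|_2^2,
\qquad
\|x\|_2^2=2(D-C_\Sigma).
\]
Hence
\[
\frac{2}{\lambda_{\max}(\Sigma)}(D-C_\Sigma)\;\le\; R\;\le\;\frac{2}{\lambda_{\min}(\Sigma)}(D-C_\Sigma).
\]
In the far field, say \(D\ge 2|C_\Sigma|+1\) (or simply \(D>\tau\) with \(\tau\) large enough), we have \(D-C_\Sigma\) comparable to \(D\), with \(D/2\le D-C_\Sigma\le 2D\). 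This yields \(I_{\mathrm{loop}}(D)=R(D)=\Theta(D)\), with constants depending only on \(\Sigma\). One subtlety is that \(R\) is not a function of \(D\) alone when \(\Sigma\) is anisotropic. I would therefore phrase \(I_{\mathrm{loop}}\) through the envelope \(\sup\{R: D_\mu(a)=D\}\) (or the infimum), both of which are \(\Theta(D)\) by the display. The isotropic case gives the exact identity \(R=2(D-C_\Sigma)/\sigma^2\).

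Next I would handle the product \(\omega(D)I_{\mathrm{loop}}(D)\). The sufficiency directions are immediate from the upper bound \(R\le c_2D\). If \(\omega(D)\le KD^{-1}\) for large \(D\), then \(\omega R\le c_2K\), so the product is bounded. If \(\omega(D)D\to0\), then \(\omega R\to0\).

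For "exactly at the matching order" I also need the converse, which comes from the lower bound \(R\ge c_1D\). If \(\omega R\) is bounded by \(B\) in the far field, then \(\omega(D)\le B/(c_1D)\), i.e.\ \(\omega=O(D^{-1})\). Likewise, \(\omega R\to0\) forces \(\omega D\to0\), i.e.\ \(\omega=o(D^{-1})\). So \(O(D^{-1})\) is the slowest decay keeping the loop gain bounded, and any slower order (e.g.\ \(\omega D\to\infty\) along a sequence) makes it unbounded.

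Finally, the standardized-distance reformulation follows from \(D-C_\Sigma=r^2/2\) with \(r=\|\Sigma^{-1/2}(a-\mu)\|_2\), which is Equation~\eqref{eq:gaussian_remoteness_mahalanobis}. Since \(D=\Theta(r^2)\) for large \(r\), \(O(D^{-1})\) is the same as \(O(r^{-2})\), and the little-o version translates the same way.

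I expect the main obstacle to be making precise what "exactly" and "\(R(D)\)" mean when \(R\) is not a single-valued function of \(D\). I would first state the result with explicit two-sided constants \(c_1,c_2\) depending only on the spectrum of \(\Sigma\), and then derive both the necessity and the sufficiency claims uniformly over all actions at a given remoteness. The rest is routine.
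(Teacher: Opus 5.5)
Your proposal is correct and follows essentially the same route as the paper. The paper likewise takes the two-sided comparison $c_1D\le R\le c_2D$ from Proposition~\ref{prop:distance_strength}, uses the upper bound to show that $O(D^{-1})$ and $o(D^{-1})$ suffice, uses the lower bound for the converses, and converts to $O(r^{-2})$ via $D-C_\Sigma=r^2/2$. Your remark that $R$ is not a function of $D$ alone when $\Sigma$ is anisotropic, and that the bounds must therefore hold uniformly over actions at a given remoteness, is a useful precision the paper leaves implicit.
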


At the exact matching rate \(\omega(D)=\Theta(D^{-1})\), the weighted
parameter force is \(\Theta(D^{-1/2})\to0\), but the weighted reuse loop
gain is \(\Theta(1)\). An isolated reused atom can therefore keep driving
remoteness outward at a constant order. For isolated Gaussian reuse,
this yields geometric growth without control, linear growth under
matching-order reciprocal tapering, and logarithmic growth under
DRPO's exponential taper
(Appendix~\ref{app:gaussian_reuse_rate}).

\paragraph{Utility boundary.}
\label{subsec:distance_dependent_utility}
Remoteness is a control coordinate rather than a utility label. When
historical negative feedback no longer supplies local correction, DRPO
targets \(\omega(D)I(D)\to0\) as \(D\to\infty\); task direction remains
supplied by the underlying advantage or verifier signal. The corresponding
utility formulation is given in
Appendix~\ref{app:negative_update_utility}.

\subsection{Guarantee Hierarchy}
\label{sec:exponential_taper}

The DRPO taper satisfies the stronger requirement above for every
finite-order response considered here.

\begin{proposition}[Vanishing finite-order far-field response]
\label{prop:exp_vanishing_influence}
Let \(I(D)\) denote the raw far-field response scale. If, for some
finite \(m\geq0\) and \(C>0\),
\(I(D)\leq C(1+D)^m\) in the far field, then the taper in
Equation~\eqref{eq:drpo_exp_weight} satisfies
\[
    \omega_{\mathrm{Exp}}(D) I(D)
    \longrightarrow 0
    \qquad
    \text{as }D\to\infty .
\]
Thus, the exponential taper eliminates any finite-order far-field tail
rather than merely bounding it.
\end{proposition}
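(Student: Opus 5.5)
The plan is to bound the product directly on the far-field region \(D>\tau\), where the taper is in its exponential branch, and reduce the claim to the elementary fact that an exponential dominates every polynomial. For \(D>\tau\), Equation~\eqref{eq:drpo_exp_weight} gives \(\omega_{\mathrm{Exp}}(D)=\exp\{-\lambda(D-\tau)/c\}\) with \(\lambda,c>0\). Choose \(D_0\ge\tau\) such that the hypothesis \(I(D)\le C(1+D)^m\) holds for all \(D\ge D_0\). For such \(D\) we then have
\[
0\le \omega_{\mathrm{Exp}}(D)I(D)\le C\,e^{\lambda\tau/c}\,(1+D)^m e^{-\lambda D/c}.
\]
I take \(I\ge0\), as it is a response scale; if \(I\) could be negative I would bound \(|I|\) instead. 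The constant \(Ce^{\lambda\tau/c}\) does not depend on \(D\).

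The key step is to show \((1+D)^m e^{-\kappa D}\to0\) with \(\kappa=\lambda/c>0\). Substitute \(x=1+D\), so that \(e^{-\kappa D}=e^{\kappa}e^{-\kappa x}\). Then use the exponential series bound \(e^{\kappa x}\ge(\kappa x)^{k}/k!\) with the integer \(k=\lceil m\rceil+1>m\). This gives \(x^m e^{-\kappa x}\le k!\,\kappa^{-k}x^{m-k}\), and the right side tends to \(0\) as \(x\to\infty\). The case \(m=0\) is already covered by the same argument, since \(e^{-\kappa x}\to0\). Squeezing between \(0\) and this vanishing upper bound yields the limit.

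The result is a standard comparison, and I expect no step to be a genuine obstacle. The only place needing care is bookkeeping. The hypothesis holds only "in the far field", so the threshold \(D_0\) must be taken to be at least both \(\tau\) and the radius where the polynomial bound starts. The constants \(\lambda\) and \(c\) must be strictly positive, which the definition of DRPO guarantees.

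Finally, I would remark how this proposition applies. It covers the Gaussian loop gain \(I_{\mathrm{loop}}=\Theta(D)\) from Proposition~\ref{prop:loop_gain_order} (\(m=1\)) and the parameter force \(\Theta(\sqrt D)\) (\(m=1/2\)). It also covers the bounded categorical response of Proposition~\ref{prop:distance_strength} (\(m=0\)). In particular, it verifies the condition \(\omega(D)\sqrt D\to0\) in Corollary~\ref{cor:selective_vs_global}.
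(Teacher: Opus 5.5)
Your proposal is correct and follows essentially the same route as the paper. You restrict to the exponential branch $D>\tau$, bound $\omega_{\mathrm{Exp}}(D)I(D)$ by a constant times $(1+D)^m e^{-\lambda D/c}$, and use exponential-over-polynomial dominance, which you prove explicitly via $e^{\kappa x}\ge(\kappa x)^k/k!$ where the paper only asserts it. (The paper's proof also notes that the weighted parameter step and, under a local Lipschitz assumption, the empirical update utility vanish, which goes beyond the displayed statement.)
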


(Proof: Appendix~\ref{app:exp_vanishing_influence}; the exponential form is
a convenient envelope, not the unique choice.)

The control conditions form a guarantee hierarchy. L0,
\(\omega(D)\to0\), gives eventual inward Gaussian radial drift for each
fixed finite \(\rho\); L1, \(\omega(D)\sqrt D\to0\), removes the
absolute negative far-field force; L2, \(\omega(D)D=O(1)\), bounds the
Gaussian reuse loop gain; and L3,
\(D^k\omega(D)\to0\) for every finite \(k\), removes all finite-order
tails. Global scaling fails L0, reciprocal-quadratic tapering is critical at
L2, and exponential tapering satisfies L0--L3. 

This hierarchy also organizes the experimental controls. Positive-only and
global scaling ignore remoteness; reciprocal tapers have polynomial tails, and the quadratic member meets the matching
order; DRPO uses the exponential member that satisfies
L0--L3. These controls share the actor form in
Equation~\eqref{eq:drpo_actor_field} and differ only in their negative-branch
coefficient \citep{peng2019advantage,nair2020awac,kostrikov2021offline,
grigsby2021closer,arnal2025asymmetric,leroux2025tapered}.

\subsection{Output-Coordinate Closed-Loop Guarantees}

We now record the output-coordinate consequences of the DRPO update once
positive attraction is included.

\begin{theorem}[Gaussian DRPO boundedness]
\label{thm:gaussian_drpo_boundedness}
Let \(p>0\), \(\Sigma\succ0\), and
\[
F_\omega(\mu)=p\Sigma^{-1}(m_+-\mu)
+\sum_iq_i\omega_i(D_\mu(a_i))\Sigma^{-1}(\mu-a_i).
\]
For finitely many negative atoms with exponential tapers, define
\begin{equation}
\begin{aligned}
B_\omega&=\sup_\mu\left\|\sum_iq_i\omega_i(D_i)
\Sigma^{-1}(\mu-a_i)\right\|_2,\\
R^*&=\frac{B_\omega}{p\lambda_{\min}(\Sigma^{-1})}.
\end{aligned}
\label{eq:gaussian_bound_radius}
\end{equation}
Every continuous trajectory satisfies
\(\limsup_{t\to\infty}\|\mu(t)-m_+\|_2\le R^*\), and the field has at
least one finite equilibrium. If an uncontrolled stable equilibrium
places every negative atom strictly below its threshold, the controlled
and original fields have the same local linearization in a neighborhood of
that equilibrium.
\end{theorem}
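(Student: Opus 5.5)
The proof splits into three parts: finiteness of \(B_\omega\), a Lyapunov argument for the ultimate bound, and existence of an equilibrium plus local agreement.

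\textbf{Step 1: \(B_\omega<\infty\).} Write \(\omega_i(D)=\exp\{-\lambda_i[(D-\tau_i)/c_i]_+\}\). Here \(D_\mu(a_i)\) is either the Gaussian remoteness or \(D_{\mathrm G}\); both are, up to constants, a multiple of \(\|a_i-\mu\|_{\Sigma^{-1}}^2\). Each term then satisfies
\[
\|\omega_i\Sigma^{-1}(\mu-a_i)\|_2\le \|\Sigma^{-1/2}\|_2\, r_i\exp\{-\kappa_i r_i^2+\kappa_i'\},
\qquad r_i=\|\Sigma^{-1/2}(\mu-a_i)\|_2 .
\]
The right-hand side is bounded in \(r_i\ge0\); this is Proposition~\ref{prop:exp_vanishing_influence} with \(m=1/2\), or direct maximization of \(re^{-\kappa r^2}\). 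The sum is finite, so \(B_\omega<\infty\), and \(\mu\mapsto\omega_i\Sigma^{-1}(\mu-a_i)\) is continuous and locally Lipschitz. The map is smooth except on the threshold spheres, where \([\cdot]_+\) is Lipschitz, so trajectories exist and are unique.

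\textbf{Step 2: ultimate bound.} Set \(V(\mu)=\tfrac12\|\mu-m_+\|_2^2\) and \(e=\mu-m_+\). Along trajectories,
\[
\dot V=\langle e,F_\omega\rangle\le -p\,e^\top\Sigma^{-1}e+\|e\|_2B_\omega\le -p\lambda_{\min}(\Sigma^{-1})\|e\|_2^2+B_\omega\|e\|_2 .
\]
Hence \(\dot V<0\) whenever \(\|e\|_2>R^*\). This only gives ultimate entry into balls of radius \(R^*+\epsilon\), so the proof also needs a convergence argument. Let \(\beta=p\lambda_{\min}\). For \(\|e\|\ge R^*+\epsilon\), \(\frac{d}{dt}\|e\|\le-\beta\epsilon\). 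So every trajectory reaches the ball of radius \(R^*+\epsilon\) in finite time and never leaves it, because the derivative is negative on its boundary. Letting \(\epsilon\downarrow0\) gives \(\limsup\|\mu(t)-m_+\|\le R^*\); in particular, solutions are global.

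\textbf{Step 3: existence of an equilibrium.} Consider \(G(\mu)=\mu+\delta\Sigma F_\omega(\mu)\) with small \(\delta>0\), or argue directly with Brouwer's theorem. On the closed ball \(\bar B\) of radius \(R>R^*\) around \(m_+\), Step 2 gives \(\langle e,F_\omega\rangle<0\) on the boundary. Then \(-F_\omega\) points outward, so its degree on \(\bar B\) equals that of the identity map \(e\), which is \(1\). The homotopy \(H_s=(1-s)e-sF_\omega\) does not vanish on the boundary, because \(\langle e,H_s\rangle>0\) there. Nonzero degree yields a zero inside. Equivalently, apply Brouwer to the retraction of the map \(\mu\mapsto\mu+hF_\omega(\mu)\) onto \(\bar B\): a fixed point on the boundary is impossible by the inward condition.

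\textbf{Step 4: local linearization.} Let \(\mu^\circ\) be the uncontrolled equilibrium with \(D_{\mu^\circ}(a_i)<\tau_i\) for all \(i\). Each \(D_\mu(a_i)\) is continuous in \(\mu\), and there are finitely many atoms. So a neighborhood \(U\) exists on which every \(D_\mu(a_i)<\tau_i\), hence \(\omega_i\equiv1\) on \(U\). Then \(F_\omega=F\) identically on \(U\), so the two fields share their value (zero) and Jacobian at \(\mu^\circ\), and indeed all derivatives.

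The main obstacle is Step 2. The other steps are routine; the care there is getting the \(\limsup\) exactly at \(R^*\) rather than at a slightly larger radius, which the \(\epsilon\)-argument handles.
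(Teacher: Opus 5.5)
Your proposal is correct and follows essentially the same route as the paper. It uses the Lyapunov function \(V=\tfrac12\|\mu-m_+\|_2^2\) with the bound \(\dot V\le -p\lambda_{\min}(\Sigma^{-1})\|e\|_2^2+B_\omega\|e\|_2\), an inward-pointing (Poincar\'e--Bohl) argument on balls of radius greater than \(R^*\) for the equilibrium, maximization of \(r e^{-\kappa r^2}\) for \(B_\omega<\infty\), and continuity of \(D_\mu(a_i)\) to obtain \(\omega_i\equiv1\) near the equilibrium. The only differences are that the paper gives explicit branch-wise constants for \(B_\omega\) where you only need finiteness, and it cites Poincar\'e--Bohl where you prove it via the homotopy \((1-s)e-sF_\omega\); the aside \(G(\mu)=\mu+\delta\Sigma F_\omega\) does not match the inward condition you verified for \(F_\omega\), but you never use it.
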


Appendix~\ref{app:gaussian_drpo_boundedness} gives an explicit
branch-wise bound, including the case where the shifted threshold is
negative and tapering is active from radius zero. Along every escaping
sequence,
\[
\rho_{\mathrm{eff}}(\mu)
=
p^{-1}\sum_i q_i\omega_i(D_i)
\longrightarrow 0;
\]
this explains how the remote negative source disappears, but it is a
diagnostic consequence rather than the boundedness argument.

\begin{theorem}[Categorical self-limiting suppression]
\label{thm:categorical_self_limiting}
For a repeatedly suppressed action \(i\), define
\(y_i=z_i-\log\sum_{j\ne i}e^{z_j}\), so \(\pi_i=\sigma(y_i)\).
Under an uncontrolled negative atom of mass \(q>0\),
\[
\dot y_i=-q(1-\pi_i)c_K(t),
\qquad \frac K{K-1}\le c_K(t)\le2.
\]
Hence \(y_i\) eventually decreases linearly and \(\pi_i\) exponentially.
With \(\omega_i=\min\{1,(e^{\tau_i}\pi_i)^\beta\}\),
\(\beta=\lambda/c\), the far branch instead satisfies
\[
y_i(t)=-\beta^{-1}\log t+O(1),
\qquad \pi_i(t)=\Theta(t^{-1/\beta}).
\]
Thus tapering changes persistent suppression from exponential to
polynomial probability decay; it does not freeze the action probability.
\end{theorem}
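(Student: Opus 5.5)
The plan is to work in the logit-difference coordinate and reduce everything to a scalar ODE. Write \(S=\sum_{j\ne i}e^{z_j}\), so \(\pi_i=e^{z_i}/(e^{z_i}+S)=\sigma(y_i)\).

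\textbf{Step 1: the uncontrolled equation for \(y_i\).} For the single negative atom, \(\dot z=-q(e_i-\pi)\), so
\[
\dot z_i=-q(1-\pi_i),\qquad \dot z_j=q\pi_j \quad (j\ne i).
\]
Differentiating \(\log S\) gives \(\tfrac{d}{dt}\log S=\sum_{j\ne i}\tfrac{e^{z_j}}{S}\,q\pi_j\). Since \(e^{z_j}/S=\pi_j/(1-\pi_i)\), this equals \(q\sum_{j\ne i}\pi_j^2/(1-\pi_i)\). Setting \(\tilde\pi_j=\pi_j/(1-\pi_i)\), the conditional distribution on the other \(K-1\) actions, we get
\[
\dot y_i=-q(1-\pi_i)\Bigl[1+\textstyle\sum_{j\ne i}\tilde\pi_j^2\Bigr].
\]
So \(c_K=1+\|\tilde\pi\|_2^2\). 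Because \(\tilde\pi\) is a probability vector on \(K-1\) atoms, \(1/(K-1)\le\|\tilde\pi\|^2\le1\), which gives \(K/(K-1)\le c_K\le 2\).

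\textbf{Step 2: uncontrolled decay.} \(y_i\) is strictly decreasing, so \(1-\pi_i\ge 1-\pi_i(0)>0\). Hence \(\dot y_i\le -q(1-\pi_i(0))K/(K-1)\), and \(y_i\) decreases at least linearly. Once \(\pi_i\to0\), we have \(\dot y_i\in[-2q,-qK/(K-1)]\) up to a factor \(1+o(1)\). Therefore \(y_i=-\Theta(t)\), and \(\pi_i=\sigma(y_i)\le e^{y_i}\) decays exponentially.

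\textbf{Step 3: tapered equation.} With the weight, the same computation gives
\[
\dot y_i=-q\,\omega_i(1-\pi_i)c_K(t).
\]
Here \(\omega_i\) is computed from detached \(\pi_i\), which does not affect the ODE. In the far branch \(e^{\tau_i}\pi_i<1\), \(\omega_i=e^{\beta\tau_i}\pi_i^\beta\). The far branch is entered in finite time, because \(\omega_i=1\) gives the linear decrease of Step 2. It is never left, since \(y_i\) remains monotone decreasing.

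For \(y_i\to-\infty\) we have \(\pi_i=e^{y_i}(1+O(e^{y_i}))\). Hence
\[
\dot y_i=-\kappa(t)\,e^{\beta y_i},\qquad \kappa(t)=q\,e^{\beta\tau_i}(1-\pi_i)^{1+\beta}(1+o(1))\,c_K(t),
\]
with \(\kappa\) trapped between two positive constants \(\kappa_-\le\kappa\le\kappa_+\). Substituting \(w=e^{-\beta y_i}\) gives \(\dot w=\beta\kappa(t)\in[\beta\kappa_-,\beta\kappa_+]\). So \(w(t)=\Theta(t)\), i.e.\ \(y_i=-\beta^{-1}\log t+O(1)\), and \(\pi_i=\Theta(e^{y_i})=\Theta(t^{-1/\beta})\).

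\textbf{Remaining points and main obstacle.} One must check that \(y_i\) actually tends to \(-\infty\) under the taper. If \(y_i\) stayed bounded below, \(\dot y_i\) would be bounded away from zero, a contradiction; this also gives \(\pi_i\to0\) and rules out freezing. The main obstacle is justifying that \(c_K(t)\) stays in a fixed interval although the other logits also move. The Step 1 identity handles this uniformly, because it holds pointwise for any \(\tilde\pi\), so no control of the non-suppressed actions is needed. The \(O(1)\) term absorbs the transient before the far branch is entered and the time needed for \(\pi_i\) to become small.
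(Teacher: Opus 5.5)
Your proposal is correct and follows essentially the same route as the paper: the identity \(c_K=1+\sum_{j\ne i}\tilde\pi_j^2\) with its pointwise envelope, finite-time entry into the far branch, and the substitution \(w=e^{-\beta y_i}\) (the paper's \(U\)) with \(\dot w\) bounded between positive constants. The only difference is that the paper uses the exact identity \(\pi_i e^{-y_i}=1-\pi_i\), which gives \(\dot w=\beta q e^{\beta\tau_i}c_K(t)(1-\pi_i)^{\beta+1}\) exactly; this makes your \((1+o(1))\) factor, which slightly double-counts the correction, and your separate argument that \(y_i\to-\infty\) unnecessary.
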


The output-coordinate proof and finite-crossing conditions are given in
Appendices~\ref{app:categorical_self_limiting}
and~\ref{app:categorical_restoring_balance}.

DRPO's exponential taper also arises as the optimizer of a
generalized-I-divergence-constrained reweighting of the finite
negative-update measure; unlike standard DRO over normalized distributions,
this construction may reduce total negative-update mass
(Appendix~\ref{app:finite_measure_variational}).

\section{Experiments}
\label{sec:experiments}

The experiments proceed from occurrence to identification, control, and
transfer. We
first test whether the predicted far-field pattern appears in realistic
continuous-control and autoregressive policies. Controlled environments then
separate coefficient magnitude from policy geometry, trace the resulting
update into policy behavior, and compare negative-weighting rules under known
near/far structure. The final task-level sections retain the D4RL and Countdown
evaluation targets.

\subsection{Environments and Datasets}
\label{sec:environments_datasets}

We use two controlled environments and two external diagnostic domains.
C-U1 is a contextual bandit with a Gaussian policy over continuous actions,
while D-U1 uses unordered categorical actions with shared representations.
They provide known ground truth for source isolation, policy dynamics, and
held-out-context generalization. The nine D4RL locomotion dataset cells and
Countdown arithmetic generation provide continuous offline-control and
autoregressive sequence diagnostics, respectively. Full configurations are
in Appendix~\ref{app:experimental_details}.

\paragraph{Conventions.}
Controlled source-isolation experiments report policy-score response,
whereas neural probes use implemented actor-gradient magnitude only to
test whether the same pattern appears in trainable updates.
Task-performance collapse, policy-support or variance-boundary events,
and NaN/Inf failures are reported as distinct outcome classes;
protocol-specific definitions are in
Appendix~\ref{app:controlled_protocols}.

\subsection{External Diagnostic of Far-Field Repulsion}
\label{sec:external_far_field}

For D4RL, probing is performed around fixed Positive-only Gaussian reference
actors without policy updates; Countdown provides the autoregressive
counterpart. In both domains, learner-relative remoteness varies while
sample-quality magnitude is held fixed. Full protocols are given in
Appendices~\ref{app:d4rl_gradient_diagnostic}
and~\ref{app:countdown_methods}.

Across all nine D4RL locomotion dataset cells, the implemented
full-parameter actor-gradient diagnostic increases strongly with
policy-relative standardized distance, with per-dataset panels
(Figure~\ref{fig:app_d4rl9_gradient_panels}) showing that the pattern is not
driven by a single environment or dataset quality. The approximately linear
scaling agrees with the Gaussian prediction that mean-score magnitude grows
linearly with standardized distance. In Countdown, the fixed-coefficient
diagnostic remains substantial at high mean-token surprisal, consistent with
persistent categorical suppression. These probes show that the predicted
near-to-far pattern appears in realistic neural policies; the controlled
experiments below separate remoteness from coefficient magnitude and test
whether far-field updates drive instability.

\subsection{Controlled Identification of the Causal Mechanism}
\label{sec:controlled_identification}

The external diagnostics still entangle sample quality, remoteness, critic or
verifier coefficients, data coverage, and subsequent dynamics. Controlled
environments vary coefficient magnitude and remoteness independently and
intervene directly on near- and far-field negative updates. Full constructions
and protocols are provided in
Appendix~\ref{app:controlled_protocols}.

\begin{figure}[t]
    \centering
    \includegraphics[
        width=\columnwidth
    ]{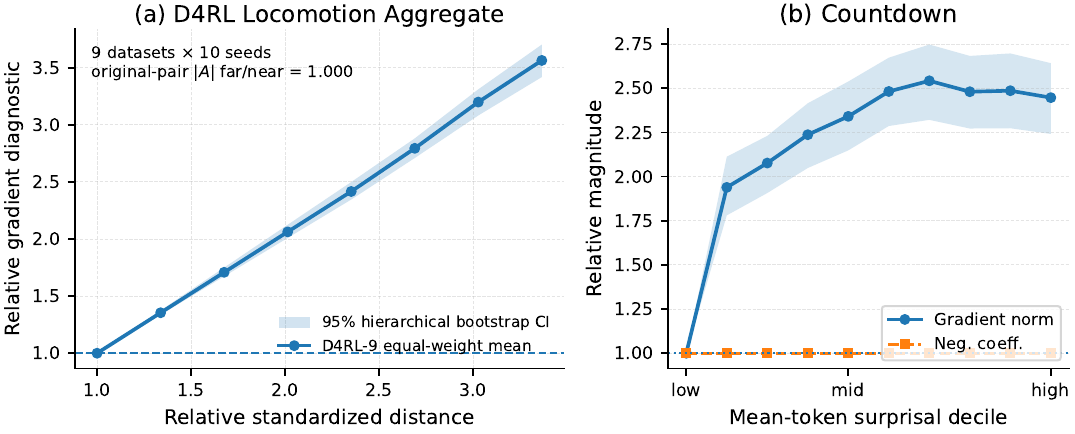}

    \caption{
        \textbf{External far-field diagnostics.}
        \textbf{(a)} D4RL-9 equal-weight aggregate of relative implemented
        actor-gradient magnitude for negative frozen-advantage transitions versus
        policy-relative standardized distance. Each far transition is normalized by
        its matched near partner within seed and dataset; the matched far/near
        absolute-advantage ratio checks matching quality (shading:
        hierarchical-bootstrap 95\% CI; per-dataset panels:
        Figure~\ref{fig:app_d4rl9_gradient_panels}).
        \textbf{(b)} Countdown relative implemented actor-gradient diagnostic and
        fixed negative coefficient versus mean-token surprisal decile among
        verifier-matched unsuccessful responses, normalized to the lowest-surprisal
        decile (shading: puzzle-level bootstrap 95\% CI).
    }
    \label{fig:external_far_field}
\end{figure}

\subsubsection{Remoteness as an Independent Source of Update Magnitude}
\label{sec:controlled_source}

\begin{figure}[t]
    \centering

    \includegraphics[
        width=0.92\columnwidth,
        trim={0pt 2pt 0pt 2pt},
        clip
    ]{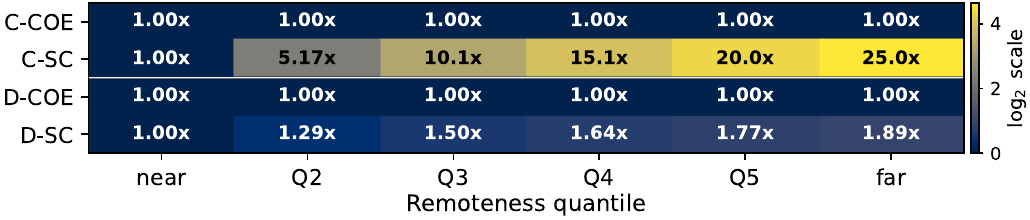}

    \vspace{0.35em}

    \includegraphics[
        width=0.92\columnwidth,
        trim={0pt 2pt 0pt 2pt},
        clip
    ]{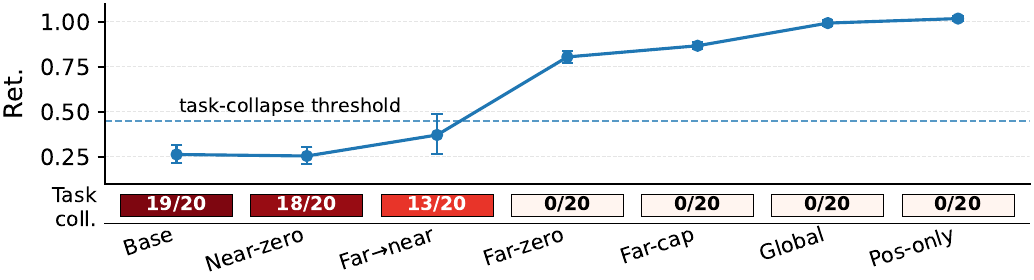}

    \vspace{-0.4em}

\caption{
    \textbf{Controlled source isolation and causal transmission.}
    \textbf{Top:} coefficient magnitude (COE) and policy-score response (SC)
    across remoteness bins, normalized to the near bin, for the C-U1 Gaussian
    protocol and the direct-softmax categorical reference.
    \textbf{Bottom:} C-U1 interventions; Ret. denotes reward retention
    (20-seed means with bootstrap CIs), and Task coll. denotes task-performance
    collapse counts.
}
    \label{fig:controlled_6_3}
\end{figure}

We first isolate whether remoteness changes update magnitude independently of
sample quality. In C-U1, negative actions are
placed on the same reward contour, giving them identical fixed
advantages but different standardized distances from the current
Gaussian policy. The categorical direct-softmax reference applies the
same factorization by fixing the negative coefficient while varying
current-policy surprisal.

The top panel of Figure~\ref{fig:controlled_6_3} separates the
coefficient term from the policy-score term. The coefficient magnitude
remains flat across remoteness bins, whereas the score response increases
with remoteness. Thus, the larger update cannot be explained
by farther samples having larger advantages; it is generated by the
policy-score geometry itself. The manifestation differs across action
spaces: Gaussian policies exhibit amplitude amplification, while
categorical policies approach a bounded direct-logit score but continue
to suppress the selected action toward the simplex boundary. The source-isolation experiment establishes where the update difference comes
from; the next intervention tests whether it propagates into drift and
collapse.

\subsubsection{Causal Transmission into Drift and Collapse}
\label{sec:controlled_transmission}

In the C-U1 intervention protocol, all branches share the
same data, fixed labels, initialization, optimizer, and training horizon,
and differ only in how near- and far-field negative updates are retained,
removed, capped, or globally rescaled. These matched-control invariants are
formalized in Appendix~\ref{app:controlled_shared}.

The bottom panel of Figure~\ref{fig:controlled_6_3} shows a clear rescue
pattern. Removing near-field negative updates leaves performance collapse
largely unchanged, whereas removing or capping far-field updates restores
reward retention and eliminates task-performance collapse. Global
rescaling also rescues the run, supporting the interpretation that
abnormal negative-update magnitude is a mediator. In contrast,
transferring the far-field budget to near-field negatives gives only
partial recovery, showing that the rescue is not obtained by simply
preserving the same total negative-update budget.

These interventions identify the far-field negative update, rather than
negative feedback in general, as a causal transmission path to policy
instability in the controlled Gaussian environment.

\subsection{From Useful Repulsion to DRPO Control}
\label{sec:repulsion_control}

The causal rescue does not imply that all negative feedback should be removed.
C-U1 and D-U1 place a hidden optimum beyond the positive-only target and make
local repulsion point toward it, while additional remote negatives create
far-field pressure. This construction separates useful repulsion from harmful
dominance and permits controlled comparisons of taper shape.

\subsubsection{Stable Extrapolation from Controlled Negative Feedback}
\label{sec:controlled_negative_strength}

\begin{figure}[t]
    \centering
    \includegraphics[
        width=0.95\columnwidth
    ]{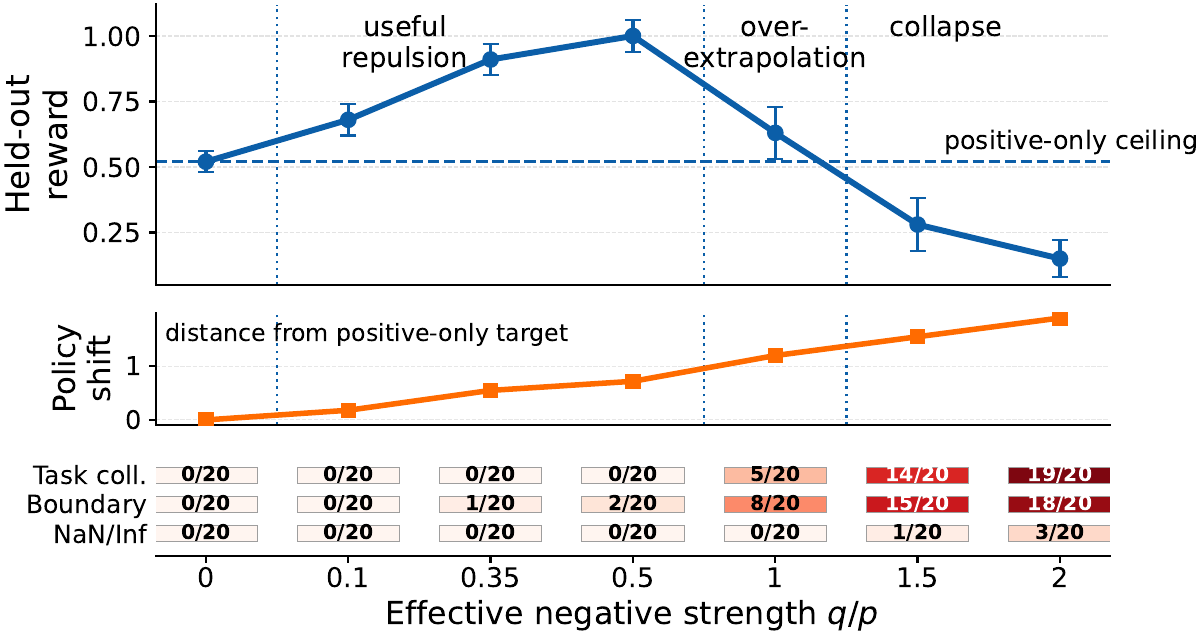}

    \vspace{-0.35em}
\caption{
    \textbf{Stable extrapolation from controlled negative feedback.}
    Increasing \(q/p\) first lifts held-out-context reward above the
    positive-only ceiling, then causes over-extrapolation and task-performance
    collapse; policy shift measures displacement from the positive-only target.
    The strip reports task collapse, boundary events, and NaN/Inf failures
    separately.
}
    \label{fig:phase_transition_6_4_1}
\end{figure}

We sweep the effective negative strength relative to positive attraction,
denoted by $q/p$, under the matched-control invariants above; protocol and
metric details are in Appendix~\ref{app:negative_strength_sweep}. The
positive-only setting is $q/p=0$.

Figure~\ref{fig:phase_transition_6_4_1} shows a non-monotone transition.
For small and moderate $q/p$, held-out-context reward rises above the
positive-only ceiling, showing that controlled negative feedback provides
useful repulsion beyond the positive-support solution. As $q/p$ grows
larger, policy shift continues to increase but reward declines,
indicating over-extrapolation rather than additional useful
generalization. At the largest strengths, task-performance collapse and
boundary events appear; numerical failures remain a separate outcome class.

Positive-only learning removes the harmful tail but also the beneficial
intermediate regime; remoteness-aware control should preserve local repulsion
while suppressing the far-field tail.

\begin{figure}[t]
\centering

\begin{minipage}[t]{0.68\columnwidth}
    \centering
    \vspace{0pt}
    \textbf{(a) Controlled taper comparison}\\[0.05em]
    \includegraphics[
        width=\linewidth
    ]{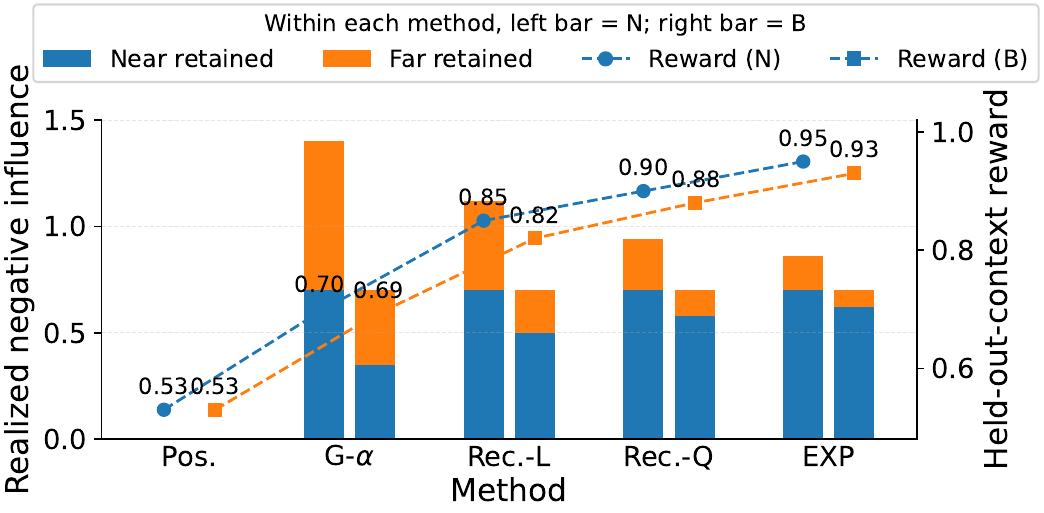}
\end{minipage}
\hfill
\begin{minipage}[t]{0.28\columnwidth}
    \centering
    \vspace{0pt}
    \textbf{(b) Transfer}\\[0.20em]
    \scriptsize
    \setlength{\tabcolsep}{2.2pt}
    \renewcommand{\arraystretch}{0.92}
    \resizebox{\linewidth}{!}{
    \begin{tabular}{lcc}
    \toprule
    Method & D4RL& Cnt. \\
    \midrule
    Pos.        & 70.1& 14.0 \\
    G-$\alpha$  & 72.9& 14.2\\
    Rec.-L      & 74.3& 13.6 \\
    Rec.-Q      & 72.1& 14.5 \\
    DRPO        & \textbf{77.7}& \textbf{15.7} \\
    \bottomrule
    \end{tabular}
    }

\end{minipage}

\vspace{-0.35em}
\caption{
\textbf{Controlled taper comparison and external transfer.}
\textbf{(a)} Matched operating points (left: near-field-retention matched;
right: aggregate-budget matched); bars decompose retained negative-update
magnitude into near- and far-field components, and dashed traces show
held-out-context reward.
\textbf{(b)} Countdown-0.5B fixed-budget transfer reports the best observed
two-seed late-window Pass@8 from each registered development sweep (full
curves: Figure~\ref{fig:app_countdown_taper_coefficient_response}); D4RL locomotion remains pending the formal run and terminal audit.
}
\label{fig:taper_control_and_transfer}
\end{figure}

\subsubsection{Controlled Comparison of Taper Shapes}
\label{sec:controlled_tapers}

We compare positive-only learning, global negative scaling,
reciprocal-linear tapering, reciprocal-quadratic tapering, and
exponential tapering while changing only the negative weighting rule. Two
matched comparisons are used. Near-field-retention matching preserves
comparable local negative-update magnitude and isolates differences in far-field
decay, whereas aggregate-budget matching tests whether a selective taper
outperforms a global reduction with the same total negative-update
budget.

Under both matched comparisons, exponential tapering retains near-field
repulsion while suppressing the far-field tail more rapidly than polynomial
tapers. Positive-only removes both components, whereas global scaling reduces
them indiscriminately. In the fixed-budget Countdown transfer summary, the
best observed point in the two-seed development sweep is DRPO's 15.7\%
late-window Pass@8, followed by Reciprocal-Quadratic (14.5\%), Positive-only
(14.0\%), Reciprocal-Linear (13.6\%), and Global-\(\alpha\) (14.2\%)
(full curves: Figure~\ref{fig:app_countdown_taper_coefficient_response}).

\subsubsection{External Validation of Remoteness Tapering}
\label{sec:external_tapers}

The same principle is tested externally: DRPO weights D4RL locomotion transitions by
standardized remoteness from the current Gaussian policy and Countdown
responses by mean-token surprisal, with protocol-matched methods sharing
data, labels, initialization, optimizer, and budget. Because external tasks
lack counterfactual near/far geometry, Figure~\ref{fig:taper_control_and_transfer}b
provides transfer evidence rather than a closed cross-task ranking; Hopper
entries remain open pending the formal run and terminal audit.

\subsection{Overall Task Performance}
\label{sec:overall_performance}

Having established the controlled mechanism, we evaluate DRPO as
a complete policy-optimization method on D4RL locomotion and Countdown.
We compare it with published domain-specific baselines and
protocol-matched controls spanning positive-only learning, global
negative scaling, and alternative remoteness tapers. All
protocol-matched methods use the same data and continuation budget. We
report normalized return for D4RL and verifier success for Countdown,
while task-performance failures, support or variance-boundary events,
and NaN/Inf numerical failures remain separate outcomes.

% Required packages:
% \usepackage{booktabs}
% \usepackage{graphicx}

\subsubsection{D4RL Locomotion}
\label{sec:d4rl_performance}

We evaluate the nine D4RL Gym locomotion tasks formed by HalfCheetah,
Hopper, and Walker2d under the medium, medium-replay, and medium-expert
datasets. These datasets vary in behavior quality and coverage, with
medium-replay containing the broadest mixture of historical policies.
Table~\ref{tab:d4rl_performance} compares DRPO with published offline-RL
baselines---CQL \citep{kumar2020conservative}, TD3+BC
\citep{fujimoto2021minimalist}, and IQL \citep{kostrikov2021offline}---%
under the D4RL evaluation protocol \citep{fu2020d4rl,todorov2012mujoco}. The shared objectives and
taper functions are defined in Appendix~\ref{app:shared_methods};
D4RL-specific score provenance, remoteness
construction, and implementation details are provided in
Appendix~\ref{app:d4rl_methods}.

Without adding an explicit behavior-policy constraint or a conservative
value penalty on top of the shared advantage-weighted actor form, DRPO
remains competitive on the D4RL-9 total; its only method-specific
component is a detached remoteness taper on the negative branch. This is
consistent with the far-field source-control route of
Section~\ref{sec:selective_tapering_bridge}, but should be read as
task-level transfer rather than a causal test of the mechanism.

\begin{table}[t]
\centering
\caption{Normalized returns on the D4RL-9 Gym locomotion benchmark.}
\label{tab:d4rl_performance}

\small
\setlength{\tabcolsep}{3.9pt}
\renewcommand{\arraystretch}{1.07}

\begin{tabular*}{\columnwidth}{@{\extracolsep{\fill}}lcccc}
\toprule
Dataset
&
CQL$^\dagger$
&
TD3+BC$^\dagger$
&
IQL$^\dagger$
&
\textbf{DRPO}
\\
\midrule

halfcheetah-medium
& 44.0 & \textbf{48.3} & \underline{47.4} & 47.3 \\

hopper-medium
& 58.5 & \underline{59.3} & \textbf{66.3} & 57.3 \\

walker2d-medium
& 72.5 & \underline{83.7} & 78.3 & \textbf{85.5} \\

\midrule

halfcheetah-replay
& \textbf{45.5} & \underline{44.6} & 44.2 & 43.3 \\

hopper-replay
& \textbf{95.0} & 60.9 & \underline{94.7} & 87.8 \\

walker2d-replay
& 77.2 & \textbf{81.8} & 73.9 & \underline{78.1} \\

\midrule

halfcheetah-expert
& \textbf{91.6} & 90.7 & 86.7 & \underline{91.1} \\

hopper-expert
& \textbf{105.4} & \underline{98.0} & 91.5 & \underline{98.0} \\

walker2d-expert
& 108.8 & \underline{110.1} & 109.6 & \textbf{110.4} \\

\midrule

\textit{D4RL-9 total}
& \textit{\underline{698.5}}
& \textit{677.4}
& \textit{692.4}
& \textbf{\textit{698.8}}
\\

\bottomrule
\end{tabular*}

\vspace{2pt}

\begin{minipage}{\columnwidth}
\small
\textit{Note.} ``replay'' and ``expert'' abbreviate ``medium-replay'' and
``medium-expert'', respectively. $\dagger$: published results rather than
protocol-matched reruns. Bold/underline: best/second-best among the methods
shown. DRPO reports the mean over 10 runs.
\end{minipage}

\end{table}

\subsubsection{Countdown}
\label{sec:countdown_performance}

We evaluate DRPO on held-out Countdown puzzles using
Qwen2.5-0.5B-Instruct~\citep{qwen2024qwen25}, where a response is
successful only if it forms a valid arithmetic expression that uses the
provided numbers and reaches the target value. Pass@8 is the primary task metric because it measures
sampling-level solution coverage under stochastic decoding, while
greedy verifier success provides a deterministic-decoding check.
Valid-expression rate is reported separately to distinguish reasoning
failure from formatting or execution failure.

We use a four-number Countdown variant \citep{hutton2002countdown}, following
the verifier-guided formulation of TinyZero \citep{pan2025tinyzero}. A
registered model-independent generator constructs one frozen prompt-indexed
bank of verified oracle solutions and verifier-incorrect completions, shared
with fixed labels and update budget by all protocol-matched methods;
method-specific learner-relative statistics are recomputed during training.
This zero-refresh limit isolates stale-negative reuse
(Appendices~\ref{app:countdown_dataset} and~\ref{app:countdown_methods}).

We compare DRPO with
AsymRE~\citep{arnal2025asymmetric}, the Joint Fitted-Reference
\(\beta\)-TOPR variant~\citep{leroux2025tapered}, and canonical
DPO~\citep{rafailov2023direct}, preceded by a short SFT stage to ensure
a task-capable initialization. The corresponding validation sweeps and
selected coefficients are reported in
Appendix~\ref{app:countdown_baseline_sensitivity}.
The shared objectives and taper functions are defined in
Appendix~\ref{app:shared_methods}.

\begin{table}[t]
\centering
\caption{Performance on held-out Countdown puzzles.}
\label{tab:countdown_performance}

\small
\renewcommand{\arraystretch}{1.10}
\setlength{\tabcolsep}{3.2pt}

\begin{tabular*}{\columnwidth}{@{\extracolsep{\fill}}lcccc}
\toprule
Metric
&
AsymRE
&
\makecell{\(\beta\)-TOPR\\(joint ref.)}
&
DPO\(^{\ddagger}\)
&
\textbf{DRPO}
\\
\midrule

Greedy success (\%)
& 6.80
& 7.53
& 2.32
& \textbf{7.62}
\\

Pass@8 (\%)
& 14.38
& 14.41
& 8.48
& \textbf{15.72}
\\

Valid expression (\%)
& \textbf{99.48}
& 99.14
& 66.24
& 99.40
\\

\bottomrule
\end{tabular*}

\vspace{2pt}

\begin{minipage}{\columnwidth}
\small
\textit{Note.}
AsymRE, \(\beta\)-TOPR, and DRPO initialize their trainable LoRA
adapters directly from the same Qwen2.5-0.5B-Instruct backbone;
\(^{\ddagger}\)DPO uses the same parameterization but initializes
its policy and frozen reference from the short SFT checkpoint
described in Appendix~\ref{app:countdown_methods}.
All entries are arithmetic means over evaluation steps 800, 900, 1000,
1100, and 1200 on the held-out validation split. The TOPR arm implements
the joint fitted-reference \(\beta\)-TOPR variant; it is not a canonical
frozen-behavior reproduction of \citet{leroux2025tapered}. Bold indicates
the highest observed value in each row.
\end{minipage}

\end{table}

DRPO delivers the strongest observed task performance while retaining a near-saturated valid-expression rate. Despite its task-capable warm start, DPO does not preserve its initial validation performance under repeated frozen-bank reuse in this setting.

\section{Conclusion}

Repeated off-policy reuse can transform useful negative feedback into a
self-reinforcing geometric effect. Our analysis connects the per-sample reuse
loop to aggregate regimes of stable displacement, drift, and instability.
DRPO interrupts that loop with a detached remoteness taper, yielding Gaussian
ultimate boundedness and polynomial rather than exponential categorical
suppression. Matched diagnostics and controlled interventions isolate policy
geometry as the source of the far-field update and show why selective control
can outperform both removing and globally scaling negative feedback. The main
open limitation is semantic: remoteness alone cannot distinguish obsolete
negative samples from remote samples that remain aligned with task improvement.
Combining geometric control with a reliable utility or alignment signal is a
natural next step.

% Acknowledgements should only appear in the accepted version.

\bibliography{example_paper,missing_references}
\bibliographystyle{icml2026}

%%%%%%%%%%%%%%%%%%%%%%%%%%%%%%%%%%%%%%%%%%%%%%%%%%%%%%%%%%%%%%%%%%%%%%%%%%%%%%%
%%%%%%%%%%%%%%%%%%%%%%%%%%%%%%%%%%%%%%%%%%%%%%%%%%%%%%%%%%%%%%%%%%%%%%%%%%%%%%%
% APPENDIX
%%%%%%%%%%%%%%%%%%%%%%%%%%%%%%%%%%%%%%%%%%%%%%%%%%%%%%%%%%%%%%%%%%%%%%%%%%%%%%%
%%%%%%%%%%%%%%%%%%%%%%%%%%%%%%%%%%%%%%%%%%%%%%%%%%%%%%%%%%%%%%%%%%%%%%%%%%%%%%%
\newpage
\appendix
\onecolumn
\section{Proofs for the Theoretical Results}
\label{app:theory_proofs}

\paragraph{Proof map.}
The score and reuse results are proved in
Sections~\ref{app:distance_strength}--\ref{app:gaussian_reuse_rate}; the
aggregate regimes and policy-family manifestations are proved in
Sections~\ref{app:aggregate_equilibrium} and
\ref{app:policy_family_manifestations}. The selective-tapering bridge,
reference restoration, Gaussian boundedness, control order, exponential tail,
and categorical results appear in Sections~\ref{app:far_field_bridge},
\ref{app:regularized_restoration}, \ref{app:gaussian_drpo_boundedness},
\ref{app:min_order_control}, \ref{app:exp_vanishing_influence}, and
\ref{app:categorical_self_limiting}--\ref{app:categorical_restoring_balance},
respectively.

\subsection{Proof of Proposition~\ref{prop:distance_strength}}
\label{app:distance_strength}

\begin{proof}
Let \(\delta=a-\mu\). For a Gaussian policy with fixed covariance
\(\Sigma\), the negative log-likelihood expands as
\[
    D_\mu(a)
    =
    C_\Sigma
    +
    \frac{1}{2}
    \delta^\top\Sigma^{-1}\delta,
\]
where
\(C_\Sigma=\frac{d}{2}\log(2\pi)+\frac{1}{2}\log|\Sigma|\)
is independent of \(a\) and \(\mu\). This proves
Equation~\eqref{eq:gaussian_remoteness_mahalanobis}.

The mean score function is
\begin{equation}
    \mathbf{s}_\mu(a)
    =
    \nabla_\mu\log\pi_\mu(a)
    =
    \Sigma^{-1}\delta,
\end{equation}
and hence
\begin{equation}
    R_\mu(a)
    =
    \left\|\mathbf{s}_\mu(a)\right\|_2^2
    =
    \delta^\top\Sigma^{-2}\delta.
    \label{eq:proof_gaussian_score_norm}
\end{equation}
Let \(x=\Sigma^{-1/2}\delta\). Then
\begin{equation}
    D_\mu(a)-C_\Sigma
    =
    \frac{1}{2}\|x\|_2^2,
    \qquad
    R_\mu(a)
    =
    x^\top\Sigma^{-1}x.
\end{equation}
Applying the Rayleigh quotient bounds gives
\begin{equation}
\begin{aligned}
    R_\mu(a)
    &\geq
    2\lambda_{\min}(\Sigma^{-1})
    \bigl(D_\mu(a)-C_\Sigma\bigr), \\
    R_\mu(a)
    &\leq
    2\lambda_{\max}(\Sigma^{-1})
    \bigl(D_\mu(a)-C_\Sigma\bigr).
\end{aligned}
\label{eq:proof_gaussian_score_remoteness_bounds}
\end{equation}
Thus, \(R_\mu(a)\) is linearly comparable to
\(D_\mu(a)-C_\Sigma\), with constants determined only by \(\Sigma\).
Since \(D_\mu(a)-C_\Sigma=\frac12 d_{\mathrm{Mah}}^2(a,\mu)\),
the Gaussian mean-score response grows without bound as the standardized
distance grows.

When \(\Sigma=\sigma^2I\), Equation~\eqref{eq:proof_gaussian_score_norm}
gives
\[
    R_\mu(a)
    =
    \frac{1}{\sigma^4}\|\delta\|_2^2,
    \qquad
    D_\mu(a)-C_\Sigma
    =
    \frac{1}{2\sigma^2}\|\delta\|_2^2.
\]
Therefore,
\[
    R_\mu(a)
    =
    \frac{2}{\sigma^2}
    \bigl(D_\mu(a)-C_\Sigma\bigr),
\]
which proves the isotropic exact relation stated in the proposition.

For the categorical policy, let
\begin{equation}
    p
    =
    \pi_z(a)
    =
    e^{-D_z(a)}.
\end{equation}
The score function with respect to the logits is
\begin{equation}
    \mathbf{s}_z(a)
    =
    \nabla_z\log\pi_z(a)
    =
    e_a-\pi_z.
\end{equation}
Its selected-action component is therefore
\begin{equation}
    \left|
        \frac{\partial}{\partial z_a}
        \log\pi_z(a)
    \right|
    =
    1-p
    =
    1-e^{-D_z(a)},
\end{equation}
which proves Equation~\eqref{eq:categorical_selected_score}. Since
\(1-e^{-D}\) is strictly increasing for \(D\geq0\) and converges to
\(1\) as \(D\rightarrow\infty\), the selected-logit response increases
with surprisal but remains bounded.

For completeness, the squared norm of the full categorical score function is
\begin{equation}
\begin{aligned}
    R_z(a)
    &=
    \left\|
        e_a-\pi_z
    \right\|_2^2 \\
    &=
    (1-p)^2
    +
    \sum_{j\neq a}\pi_z(j)^2.
\end{aligned}
\end{equation}
Because the non-selected probabilities sum to \(1-p\), the
Cauchy--Schwarz inequality gives
\begin{equation}
    \sum_{j\neq a}\pi_z(j)^2
    \geq
    \frac{(1-p)^2}{K-1}.
\end{equation}
The same sum is at most \((1-p)^2\). Hence,
\begin{equation}
\begin{aligned}
    \frac{K}{K-1}(1-p)^2
    &\leq
    R_z(a) \\
    &\leq
    2(1-p)^2.
\end{aligned}
\end{equation}
Substituting \(p=e^{-D_z(a)}\) shows that the full categorical logit-score
norm is bounded, completing the proof.
\end{proof}

\subsection{Proof of Theorem~\ref{thm:reuse_dynamics}}
\label{app:reuse_dynamics}

Before proving the generic reuse result, we verify the convexity condition
for the two policy-output coordinates used in the main text. For
fixed-covariance Gaussian means,
\[
D_\mu(a)=C_\Sigma+\frac12(a-\mu)^\top\Sigma^{-1}(a-\mu),
\qquad
\nabla_\mu^2D_\mu(a)=\Sigma^{-1}\succeq0.
\]
For categorical logits \(z\), with \(p=\mathrm{softmax}(z)\),
\[
D_z(a)=\log\sum_j e^{z_j}-z_a,
\qquad
\nabla_z^2D_z(a)=\operatorname{diag}(p)-pp^\top\succeq0,
\]
because
\[
v^\top(\operatorname{diag}(p)-pp^\top)v
=
\operatorname{Var}_{i\sim p}(v_i)\ge0.
\]

\begin{proof}
Let
\[
    f(u)=D(u),
    \qquad
    g_t=\nabla f(u_t),
    \qquad
    R_t=\|g_t\|_2^2.
\]

We first consider a negative coefficient
$\widehat{A}=-c$ with $c>0$. Defining
$\alpha=\eta c$, the update becomes
\[
    u_{t+1}
    =
    u_t+\alpha g_t.
\]
By convexity of $f$,
\begin{align}
    f(u_{t+1})
    &\geq
    f(u_t)
    +
    g_t^\top(u_{t+1}-u_t) \\
    &=
    f(u_t)+\alpha\|g_t\|_2^2.
\end{align}
Hence,
\[
    D_{t+1}
    \geq
    D_t+\eta|\widehat{A}|R_t.
\]

The gradient of a differentiable convex function is monotone, so
\[
    \bigl(g_{t+1}-g_t\bigr)^\top
    \bigl(u_{t+1}-u_t\bigr)
    \geq0.
\]
Using $u_{t+1}-u_t=\alpha g_t$ gives
\[
    g_{t+1}^\top g_t
    \geq
    \|g_t\|_2^2.
\]
By the Cauchy--Schwarz inequality,
\[
    \|g_{t+1}\|_2\|g_t\|_2
    \geq
    g_{t+1}^\top g_t
    \geq
    \|g_t\|_2^2.
\]
Whenever $\|g_t\|_2>0$, division by $\|g_t\|_2$ yields
\[
    \|g_{t+1}\|_2
    \geq
    \|g_t\|_2,
\]
and therefore $R_{t+1}\geq R_t$. The same result is immediate when
$R_t=0$. The inequality is strict whenever the gradient changes
nontrivially along a direction of positive curvature.

We next consider a positive coefficient
$\widehat{A}=c$ with $c>0$. Let
$\alpha=\eta c$, so that
\[
    u_{t+1}
    =
    u_t-\alpha g_t.
\]
Because $f$ has an $L$-Lipschitz gradient, the descent lemma gives
\begin{align}
    f(u_{t+1})
    &\leq
    f(u_t)
    -
    \alpha\|g_t\|_2^2
    +
    \frac{L\alpha^2}{2}\|g_t\|_2^2 \\
    &=
    f(u_t)
    -
    \alpha
    \left(
        1-\frac{L\alpha}{2}
    \right)
    \|g_t\|_2^2.
\end{align}
If $\alpha\leq1/L$, then
\[
    D_{t+1}
    \leq
    D_t-\frac{\alpha}{2}R_t
    =
    D_t-\frac{\eta\widehat{A}}{2}R_t.
\]

It remains to show that the score-function norm does not increase.
For a convex function with an $L$-Lipschitz gradient, the gradient is
$1/L$-cocoercive:
\[
    \bigl(g_t-g_{t+1}\bigr)^\top
    \bigl(u_t-u_{t+1}\bigr)
    \geq
    \frac{1}{L}
    \|g_t-g_{t+1}\|_2^2.
\]
Let $d=g_{t+1}-g_t$. Since
$u_t-u_{t+1}=\alpha g_t$, this implies
\[
    d^\top g_t
    \leq
    -
    \frac{1}{\alpha L}
    \|d\|_2^2.
\]
Consequently,
\begin{align}
    \|g_{t+1}\|_2^2
    &=
    \|g_t+d\|_2^2 \\
    &=
    \|g_t\|_2^2
    +
    2d^\top g_t
    +
    \|d\|_2^2 \\
    &\leq
    \|g_t\|_2^2
    +
    \left(
        1-\frac{2}{\alpha L}
    \right)
    \|d\|_2^2.
\end{align}
Because $\alpha L\leq1$, the final coefficient is nonpositive, and hence
\[
    R_{t+1}
    =
    \|g_{t+1}\|_2^2
    \leq
    \|g_t\|_2^2
    =
    R_t.
\]
This proves both claims.
\end{proof}

\subsection{Quantitative Gaussian Reuse Rates}
\label{app:gaussian_reuse_rate}

\begin{proposition}[Quantitative Gaussian reuse rates]
\label{prop:reuse_rate}
For an isotropic Gaussian mean policy with covariance \(\sigma^2I\),
one fixed negative sample of mass \(q>0\), and
\(X_t=D_t-C_\Sigma>0\), the uncontrolled update satisfies
\[
 X_{t+1}=\left(1+\eta q/\sigma^2\right)^2X_t,
\]
so remoteness grows geometrically. If the same sample is weighted by
\(\exp\{-a[D_t-\tau]_+\}\), \(a>0\), then its continuous-time
remoteness is \(O(a^{-1}\log t)\), and the discrete update obeys
\[
 X_t\le a^{-1}\{\log t+\log\log t+O(1)\}.
\]
This slowdown does not by itself prove aggregate boundedness; that requires
the closed-loop attraction established in
Theorem~\ref{thm:gaussian_drpo_boundedness}.
\end{proposition}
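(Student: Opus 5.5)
The plan is to reduce everything to the scalar displacement \(\delta_t=a-\mu_t\) and its squared norm. The distance \(X_t=\|\delta_t\|^2/(2\sigma^2)\) is that norm rescaled. The mean score is \(\delta_t/\sigma^2\). The uncontrolled negative update \(\mu_{t+1}=\mu_t-\eta q\,\delta_t/\sigma^2\) (equation~\eqref{eq:single_sample_reuse_update} with \(\widehat A=-q\)) therefore gives
\[
\delta_{t+1}=\bigl(1+\eta q/\sigma^2\bigr)\delta_t .
\]
Because the update moves the mean exactly along the ray through \(a\), the direction of \(\delta_t\) never changes and the dynamics are one-dimensional. Squaring gives \(X_{t+1}=(1+\eta q/\sigma^2)^2X_t\), which is the geometric law.

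For the tapered case, write \(\kappa=\eta q/\sigma^2\) and \(w_t=\exp\{-a[D_t-\tau]_+\}\). The recursion becomes \(X_{t+1}=(1+\kappa w_t)^2X_t\). In continuous time the same computation gives
\[
\dot X=2(q/\sigma^2)\,w(X)\,X .
\]
Once \(X\) exceeds \(\tau-C_\Sigma\) (a finite time), \(w=e^{-a(X-\tau')}\) with \(\tau'=\tau-C_\Sigma\). Separating variables gives \(\int e^{aX}/X\,dX=2(q/\sigma^2)e^{a\tau'}t\). The left side is at least \(e^{aX}/(aX)\) up to constants, so \(e^{aX}\lesssim X t\). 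Taking logarithms gives \(aX\le \log t+\log X+O(1)\), and with \(X=O(\log t)\) this yields \(X=O(a^{-1}\log t)\).

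For the discrete bound I will pass to \(Y_t=e^{aX_t}\) in the far branch. First, \((1+\kappa w)^2\le e^{2\kappa w}+\kappa^2w^2\) is at most \(1+Cw\) for bounded \(w\le1\). Hence \(X_{t+1}-X_t\le C' X_t e^{-a(X_t-\tau')}\). Exponentiating gives
\[
Y_{t+1}\le Y_t\exp\{aC'X_te^{a\tau'}/Y_t\}.
\]
Since \(X_t=a^{-1}\log Y_t\), this reduces to \(Y_{t+1}\le Y_t+C''\log Y_t\) once the exponent is small. The exponent is indeed small because \(Y_t\) grows faster than \(\log Y_t\) in the far field. The near-field case \(X_t\le\tau'\) contributes only finitely many steps, since \(X_t\) grows geometrically there.

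From \(Y_{t+1}-Y_t\le C''\log Y_t\), a comparison with the ODE \(\dot Y=C''\log Y\) gives \(Y_t\le C t\log t\) for large \(t\). This can be checked by induction on \(Y_t\le Kt\log t\), using monotonicity of \(y\mapsto y+C''\log y\). Then \(X_t=a^{-1}\log Y_t\le a^{-1}\{\log t+\log\log t+O(1)\}\).

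I expect the main obstacle to be the single step where \(X_t\) jumps across the threshold or is large relative to \(e^{-a(X_t-\tau')}\). There the multiplicative factor \((1+\kappa w_t)^2\) is not close to \(1\), and the linearization \(e^{x}\le1+2x\) must be justified uniformly. I would handle this by bounding the exponent \(aC'X_te^{a\tau'}/Y_t\le aC'e^{a\tau'}\sup_{y\ge y_0}(\log y)/(ay)\), which is bounded. This gives \(Y_{t+1}\le Y_t+C'''\log Y_t\) with a possibly larger constant, and that is all the induction needs.

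The final remark, that this slowdown does not by itself prove aggregate boundedness, needs no proof: the single-atom bound still diverges, and boundedness requires the attraction term handled in Theorem~\ref{thm:gaussian_drpo_boundedness}.
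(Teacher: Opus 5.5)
Your proposal is correct and follows the paper's proof essentially step for step. You use the same ray reduction giving $\delta_{t+1}=(1+\eta q/\sigma^2)\delta_t$, the far-branch recurrence $X_{t+1}=X_t(1+\kappa e^{a\tau'}e^{-aX_t})^2$, the substitution $Y_t=e^{aX_t}$ with the exponent bounded via $\sup_{y\ge1}(\log y)/y=1/e$ to get $\Delta Y_t\le C\log Y_t$, and the $Kt\log t$ supersolution. Your continuous-time separation of variables is the same computation as the paper's $\dot Y=C_3\log Y$ (since $\int e^{aX}X^{-1}\,dX=\int dY/\log Y$), though your phrase ``with $X=O(\log t)$'' sounds circular: $aX\le\log t+\log X+O(1)$ already forces $X=O(\log t)$ because $\log X=o(X)$, so you need not assume it.
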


\begin{proof}[Proof of Proposition~\ref{prop:reuse_rate}]
Let \(\delta_t=\mu_t-a\) and
\(X_t=\|\delta_t\|_2^2/(2\sigma^2)=D_t-C_\Sigma\).
For an uncontrolled negative sample of mass \(q\),
\[
\delta_{t+1}
=\left(1+\frac{\eta q}{\sigma^2}\right)\delta_t,
\]
which gives the stated exact geometric recurrence.

Now let \(\omega_t=e^{-a[D_t-\tau]_+}\). Once
\(D_t>\tau\), write
\(b=\eta q/\sigma^2\) and
\(K_0=e^{a(\tau-C_\Sigma)}\). Then
\begin{equation}
X_{t+1}=X_t\left(1+bK_0e^{-aX_t}\right)^2.
\label{eq:app_tapered_gaussian_recurrence}
\end{equation}
Consequently \(\Delta X_t\ge0\), and for all sufficiently large
\(X_t\),
\begin{equation}
0\le\Delta X_t\le C X_te^{-aX_t}
\label{eq:app_gaussian_increment_bound}
\end{equation}
for a finite constant \(C\). Put \(Y_t=e^{aX_t}\). Since
\(aX_te^{-aX_t}\le1/e\),
Equation~\eqref{eq:app_gaussian_increment_bound} also gives a uniform
bound \(a\Delta X_t\le M\). Hence
\begin{align}
\Delta Y_t
&=Y_t(e^{a\Delta X_t}-1) \\
&\le C_1Y_t\,a\Delta X_t
\le C_2\log Y_t.
\label{eq:app_gaussian_y_increment}
\end{align}
The standard comparison sequence
\(\overline Y_t=K t\log t\), with \(K\) chosen larger than the
constant in Equation~\eqref{eq:app_gaussian_y_increment}, is eventually
a supersolution. Therefore
\(Y_t=O(t\log t)\) and
\[
X_t\le a^{-1}\{\log t+\log\log t+O(1)\}.
\]
If an application does not make the sign of \(\Delta X_t\) explicit,
negative increments already satisfy the desired upper comparison, while
the nonnegative branch obeys the argument above.

In continuous time, the far-branch equation is
\(\dot X=C_3Xe^{-aX}\). Thus
\(Y=e^{aX}\) satisfies \(\dot Y=C_3\log Y\), and the same comparison
gives \(X(t)=O(a^{-1}\log t)\).
\end{proof}

\subsection{Far-Field Radial Bridge}
\label{app:far_field_bridge}

\begin{proof}[Proofs of Lemma~\ref{lem:far_field_radial_balance} and
Corollary~\ref{cor:selective_vs_global}]
Set \(\mu=m_++rv\) with \(\|v\|_2=1\). Direct expansion gives
\begin{align}
\frac{\langle v,F(\mu)\rangle}{r}
={}&-p\,v^\top\Sigma^{-1}v
+\sum_iq_i v^\top\Sigma^{-1}v \\
&+\frac1r\sum_iq_i
\langle v,\Sigma^{-1}(m_+-a_i)\rangle.
\end{align}
The final term vanishes, proving the lemma.

With atom-specific weights, \(D_\mu(a_i)\to\infty\) along every ray.
For finitely many atoms, \(\omega_i(D_i)\to0\) therefore makes both the
weighted coefficient of \(r\) and the weighted constant term vanish.
The radial limit is
\(-p v^\top\Sigma^{-1}v\), which is strictly negative. This is a
pointwise-in-\(q/p\) statement: the radius at which it becomes negative
can grow with \(q/p\). Moreover,
\(\|\Sigma^{-1}(\mu-a_i)\|_2=O(\sqrt{D_i})\), so
\(\omega(D)\sqrt D\to0\) removes the absolute negative force. Finally,
for \(\omega_i\equiv\alpha\), the lemma applies with \(q\) replaced by
\(\alpha q\), yielding the condition \(\alpha q<p\).
\end{proof}

\subsection{Reference-Regularized Restoration}
\label{app:regularized_restoration}

\begin{proof}[Proof of Proposition~\ref{prop:regularized_restoration}]
The unweighted Gaussian field with a KL-geometry spring is
\[
F_\gamma(\mu)=\Sigma^{-1}
\{(q-p-\gamma)\mu+pm_+-qm_-+\gamma\mu_0\}.
\]
Whenever \(p-q+\gamma\ne0\), its unique equilibrium is
\[
\mu_\gamma^*
=\frac{pm_+-qm_-+\gamma\mu_0}{p-q+\gamma},
\]
and the field Jacobian is
\((q-p-\gamma)\Sigma^{-1}\). It is Hurwitz exactly when
\(\gamma>q-p\). This yields all three regimes in the proposition; for
\(p=q\), the displayed equilibrium has displacement proportional to
\(1/\gamma\). When \(p>q\), increasing \(\gamma\) contracts the
equilibrium toward \(\mu_0\), potentially sacrificing extrapolation.

For the Euclidean spring \(-\gamma(\mu-\mu_0)\), the Jacobian is
\((q-p)\Sigma^{-1}-\gamma I\). In the negative-dominant regime it is
Hurwitz exactly when
\(\gamma>(q-p)\lambda_{\max}(\Sigma^{-1})\); this condition is not
interchangeable with the KL-geometry coefficient above.
\end{proof}

\subsection{Gaussian Closed-Loop Boundedness}
\label{app:gaussian_drpo_boundedness}

\begin{proof}[Proof of Theorem~\ref{thm:gaussian_drpo_boundedness}]
Let \(e=\mu-m_+\) and
\[
N_\omega(\mu)=\sum_iq_i\omega_i(D_i)
\Sigma^{-1}(\mu-a_i).
\]
The finite-atom exponential-taper field is locally Lipschitz and has at
most linear growth, so its solutions exist for all forward times.
For \(V(e)=\frac12\|e\|_2^2\),
\begin{align}
\dot V
&=-p e^\top\Sigma^{-1}e+e^\top N_\omega(\mu) \\
&\le-p\lambda_{\min}(\Sigma^{-1})\|e\|_2^2
+B_\omega\|e\|_2.
\label{eq:app_gaussian_lyapunov_bound}
\end{align}
Thus \(\dot V<0\) outside every ball of radius
\(R^*+\varepsilon\). The standard ultimate-boundedness argument gives
\[
\limsup_{t\to\infty}\|\mu(t)-m_+\|_2\le R^*.
\]
On the boundary of any ball centered at \(m_+\) with radius greater
than \(R^*\), Equation~\eqref{eq:app_gaussian_lyapunov_bound} is
strictly negative. Continuity of the field and the Poincar\'e--Bohl
inward-pointing theorem therefore imply at least one zero inside the
ball. This is an existence result, not a uniqueness or global-convergence
claim.

It remains to verify \(B_\omega<\infty\) and the stated constant. Put
\(r_i=\|\Sigma^{-1/2}(\mu-a_i)\|_2\), so
\(D_i=C_\Sigma+r_i^2/2\), and
\[
\|\Sigma^{-1}(\mu-a_i)\|_2
\le\sqrt{\lambda_{\max}(\Sigma^{-1})}\,r_i.
\]
For \(a=\lambda/c\) and
\(\widetilde\tau_i=\tau_i-C_\Sigma\), maximize
\[
g_i(r)=r\exp\{-a[r^2/2-\widetilde\tau_i]_+\},\qquad r\ge0.
\]
If \(\widetilde\tau_i<0\), the taper is active at zero and the maximum
occurs at \(r=a^{-1/2}\), giving
\(a^{-1/2}e^{a\widetilde\tau_i-1/2}\). If
\(\widetilde\tau_i\ge0\), the untapered plateau ends at
\(r_\tau=\sqrt{2\widetilde\tau_i}\). When
\(a^{-1/2}\le r_\tau\), the maximum is \(r_\tau\); otherwise it is the
same interior-tail value. These are exactly the three displayed
branches for \(G_i\), and the triangle inequality proves the bound on
\(B_\omega\).

For near-field fidelity, let \(\mu^*\) be an uncontrolled equilibrium
with \(D_{\mu^*}(a_i)<\tau_i\) for every negative atom. Strictness and
continuity provide a neighborhood in which every \(\omega_i=1\).
The controlled and uncontrolled fields, including their Jacobians, are
therefore identical there.

Finally, along any sequence \(\|\mu_n\|_2\to\infty\), each fixed atom
satisfies \(D_{\mu_n}(a_i)\to\infty\), hence
\(\omega_i(D_{\mu_n}(a_i))\to0\). Finiteness of the atom set gives
\(\rho_{\mathrm{eff}}(\mu_n)\to0\).
\end{proof}

\subsection{Derivation and Proof of
Theorem~\ref{thm:aggregate_equilibria}}
\label{app:aggregate_equilibrium}

We use a regular minimal exponential-family representation to cover the
continuous-action Gaussian and discrete-action categorical policies in a
common set of output coordinates. At a fixed context, write
\begin{equation}
    \pi_\xi(a)
    =
    h(a)
    \exp\left\{
        \xi^\top T(a)-\psi(\xi)
    \right\},
    \qquad
    \xi\in\mathcal H,
    \label{eq:app_exponential_family_policy}
\end{equation}
where \(\xi\) is the natural parameter, \(T(a)\) is the sufficient
statistic, and \(\psi\) is the log-partition function. Let
\begin{equation}
    \mathcal M
    =
    \left\{
        \nabla\psi(\xi):
        \xi\in\mathcal H
    \right\}
    \label{eq:app_mean_parameter_space}
\end{equation}
denote the attainable mean-parameter space.

For a fixed-covariance Gaussian mean policy,
\(\xi=\Sigma^{-1}\mu\), \(T(a)=a\), and
\(\nabla\psi(\xi)=\mu\). For a categorical policy, fixing a
reference-logit gauge gives a minimal \((K-1)\)-dimensional
representation in which \(T(a)\) is the corresponding reduced action
indicator and \(\nabla\psi(\xi)\) is the reduced probability vector.
Thus, the two cases differ in their sufficient statistics and attainable
mean spaces, but share the derivation below.

Write
\[
    \widehat A
    =
    \widehat A^+
    -
    \widehat A^-,
    \qquad
    \widehat A^+
    =
    \max\{\widehat A,0\},
    \qquad
    \widehat A^-
    =
    \max\{-\widehat A,0\}.
\]
Define the aggregate positive and negative update masses and their
unnormalized moments:
\begin{equation}
\begin{aligned}
    p
    &=
    \mathbb E_\nu[\widehat A^+(a)],
    \qquad
    q
    =
    \mathbb E_\nu[\widehat A^-(a)],
    \\
    \bar{\mathbf m}_+
    &=
    \mathbb E_\nu
    \left[
        \widehat A^+(a)T(a)
    \right],
    \\
    \bar{\mathbf m}_-
    &=
    \mathbb E_\nu
    \left[
        \widehat A^-(a)T(a)
    \right].
\end{aligned}
\label{eq:app_aggregate_statistics}
\end{equation}
When \(p>0\), define
\begin{equation}
    \mathbf m_+
    =
    \frac{\bar{\mathbf m}_+}{p}.
    \label{eq:app_positive_moment}
\end{equation}
This is the positive-only target used in the main text: it is the
mean-parameter point selected by positive feedback alone, provided that
it lies in \(\mathcal M\). When \(q>0\), define
\begin{equation}
    \mathbf m_-
    =
    \frac{\bar{\mathbf m}_-}{q}.
    \label{eq:app_negative_moment}
\end{equation}
This is the negative-feedback moment, namely the advantage-weighted
center of the negative feedback. It is not an attraction target; the
negative actor contribution repels the learner from this moment. The
unnormalized quantities
\(\bar{\mathbf m}_+\) and \(\bar{\mathbf m}_-\) remain well defined in
the boundary cases \(p=0\) or \(q=0\).

The signed population objective is
\begin{equation}
    \mathcal L(\xi)
    =
    \mathbb E_\nu
    \left[
        \widehat A(a)
        \log\pi_\xi(a)
    \right].
    \label{eq:app_signed_population_objective}
\end{equation}
Using Equation~\eqref{eq:app_exponential_family_policy} and
\(\widehat A=\widehat A^+-\widehat A^-\), we obtain
\begin{equation}
    \mathcal L(\xi)
    =
    \left(
        \bar{\mathbf m}_+
        -
        \bar{\mathbf m}_-
    \right)^\top\xi
    -
    (p-q)\psi(\xi)
    +
    C,
    \label{eq:app_signed_objective}
\end{equation}
where
\[
    C
    =
    \mathbb E_\nu
    \left[
        \widehat A(a)\log h(a)
    \right]
\]
is independent of \(\xi\). Therefore, the aggregate actor field is
\begin{equation}
    \mathbf F(\xi)
    =
    \nabla\mathcal L(\xi)
    =
    \bar{\mathbf m}_+
    -
    \bar{\mathbf m}_-
    -
    (p-q)\nabla\psi(\xi),
    \label{eq:app_aggregate_field}
\end{equation}
and
\begin{equation}
    \nabla^2\mathcal L(\xi)
    =
    -(p-q)\nabla^2\psi(\xi).
    \label{eq:app_signed_hessian}
\end{equation}
Regularity and minimality imply that
\(\nabla^2\psi(\xi)\succ0\) throughout \(\mathcal H\), and that
\(\nabla\psi\) is one-to-one from \(\mathcal H\) onto
\(\mathcal M\).

We analyze the continuous dynamics and discrete update
\begin{equation}
    \dot\xi
    =
    \mathbf F(\xi),
    \qquad
    \xi_{t+1}
    =
    \xi_t
    +
    \alpha\mathbf F(\xi_t),
    \qquad
    \alpha>0.
    \label{eq:app_aggregate_dynamics}
\end{equation}

\begin{proof}

\medskip
\noindent\textbf{Positive-only case.}
Suppose \(p>0\) and \(q=0\). Then
\(\bar{\mathbf m}_-=\mathbf 0\), and
Equation~\eqref{eq:app_aggregate_field} becomes
\begin{equation}
    \mathbf F(\xi)
    =
    p
    \left(
        \mathbf m_+
        -
        \nabla\psi(\xi)
    \right).
    \label{eq:app_positive_only_field}
\end{equation}
A finite equilibrium must therefore satisfy
\begin{equation}
    \nabla\psi(\xi^\star)
    =
    \mathbf m_+.
    \label{eq:app_positive_only_equilibrium}
\end{equation}
If \(\mathbf m_+\in\mathcal M\), existence follows because
\(\nabla\psi\) maps \(\mathcal H\) onto \(\mathcal M\), and uniqueness
follows because this map is one-to-one.

The continuous field Jacobian is
\begin{equation}
    J_{\mathbf F}(\xi^\star)
    =
    -p\nabla^2\psi(\xi^\star)
    \prec0,
    \label{eq:app_positive_only_continuous_jacobian}
\end{equation}
so the equilibrium is locally asymptotically stable. For the discrete
update, the local Jacobian is
\begin{equation}
    J_{\mathrm{disc}}(\xi^\star)
    =
    I
    -
    \alpha p\nabla^2\psi(\xi^\star).
    \label{eq:app_positive_only_discrete_jacobian}
\end{equation}
Its eigenvalues have magnitude smaller than one whenever
\begin{equation}
    0
    <
    \alpha
    <
    \frac{
        2
    }{
        p\lambda_{\max}
        \left(
            \nabla^2\psi(\xi^\star)
        \right)
    }.
    \label{eq:app_positive_only_stepsize}
\end{equation}
Hence, the positive-only equilibrium is locally stable under
sufficiently small discrete steps. If
\(\mathbf m_+\notin\mathcal M\), no finite equilibrium exists.

\medskip
\noindent\textbf{Positive-dominant regime.}
Suppose \(p>q>0\). Since both normalized moments are defined, a finite
equilibrium must satisfy
\begin{equation}
    \mathbf F(\xi^\star)
    =
    \mathbf 0,
\end{equation}
or equivalently
\begin{equation}
\begin{aligned}
    \nabla\psi(\xi^\star)
    &=
    \frac{
        \bar{\mathbf m}_+
        -
        \bar{\mathbf m}_-
    }{
        p-q
    }
    \\
    &=
    \frac{
        p\mathbf m_+
        -
        q\mathbf m_-
    }{
        p-q
    }
    =
    \mathbf m^\star.
\end{aligned}
\label{eq:app_equilibrium_condition}
\end{equation}
If \(\mathbf m^\star\in\mathcal M\), existence and uniqueness again
follow from the bijectivity of
\(\nabla\psi:\mathcal H\rightarrow\mathcal M\).

Since \(p-q>0\), Equation~\eqref{eq:app_signed_hessian} gives
\begin{equation}
    \nabla^2\mathcal L(\xi)
    \prec0.
\end{equation}
Thus, \(\mathcal L\) is strictly concave, and
\(\xi^\star\) is its unique finite maximizer.

For the continuous dynamics, the field Jacobian at the equilibrium is
\begin{equation}
    J_{\mathbf F}(\xi^\star)
    =
    -(p-q)
    \nabla^2\psi(\xi^\star)
    \prec0.
    \label{eq:app_continuous_jacobian}
\end{equation}
All eigenvalues are strictly negative, so
\(\xi^\star\) is locally asymptotically stable.

For the discrete update, the local Jacobian is
\begin{equation}
    J_{\mathrm{disc}}(\xi^\star)
    =
    I
    -
    \alpha(p-q)
    \nabla^2\psi(\xi^\star).
    \label{eq:app_discrete_jacobian}
\end{equation}
Let \(\lambda_i>0\) denote the eigenvalues of
\(\nabla^2\psi(\xi^\star)\). The corresponding eigenvalues of
\(J_{\mathrm{disc}}(\xi^\star)\) are
\(1-\alpha(p-q)\lambda_i\). Their magnitudes are strictly smaller than
one whenever
\begin{equation}
    0
    <
    \alpha
    <
    \frac{
        2
    }{
        (p-q)
        \lambda_{\max}
        \left(
            \nabla^2\psi(\xi^\star)
        \right)
    }.
    \label{eq:app_discrete_stepsize}
\end{equation}
This proves local discrete-time stability for sufficiently small
\(\alpha\).

Define \(\rho=q/p\). Algebraically,
\begin{equation}
\begin{aligned}
    \mathbf m^\star
    &=
    \frac{
        p\mathbf m_+
        -
        q\mathbf m_-
    }{
        p-q
    }
    \\
    &=
    \mathbf m_+
    +
    \frac{q}{p-q}
    \left(
        \mathbf m_+
        -
        \mathbf m_-
    \right)
    \\
    &=
    \mathbf m_+
    +
    \frac{\rho}{1-\rho}
    \left(
        \mathbf m_+
        -
        \mathbf m_-
    \right).
\end{aligned}
\label{eq:app_stable_extrapolation_identity}
\end{equation}
This proves Equation~\eqref{eq:signed_target_ratio}. In particular,
holding \(\mathbf m_+\) and \(\mathbf m_-\) fixed, the displacement from
the positive-only target is
\begin{equation}
    \mathbf m^\star-\mathbf m_+
    =
    \frac{\rho}{1-\rho}
    \left(
        \mathbf m_+
        -
        \mathbf m_-
    \right),
    \label{eq:app_equilibrium_displacement}
\end{equation}
whose magnitude increases with \(\rho\in(0,1)\) whenever
\(\mathbf m_+\neq\mathbf m_-\).

If \(\mathbf m^\star\notin\mathcal M\), no finite
\(\xi^\star\in\mathcal H\) can satisfy
Equation~\eqref{eq:app_equilibrium_condition}, and hence no finite
equilibrium exists.

More generally, consider any sequence
\(\mathbf m^\star_n\in\mathcal M\) that leaves every compact subset of
\(\mathcal M\), either by approaching its boundary or by becoming
unbounded, and define
\begin{equation}
    \xi^\star_n
    =
    (\nabla\psi)^{-1}
    \left(
        \mathbf m^\star_n
    \right).
\end{equation}
Suppose, for contradiction, that
\(\{\xi^\star_n\}\) remains in a compact subset
\(K\subset\mathcal H\). By continuity,
\(\nabla\psi(K)\) is a compact subset of \(\mathcal M\), but
\(\mathbf m^\star_n=\nabla\psi(\xi^\star_n)\) would then remain inside
that compact set. This contradicts the assumed behavior of
\(\mathbf m^\star_n\). Hence, the corresponding natural parameters must
leave every compact subset of \(\mathcal H\).

\medskip
\noindent\textbf{Critical regime.}
Suppose \(p=q>0\). Equation~\eqref{eq:app_aggregate_field} reduces to
\begin{equation}
\begin{aligned}
    \mathbf F(\xi)
    &=
    \bar{\mathbf m}_+
    -
    \bar{\mathbf m}_-
    \\
    &=
    p
    \left(
        \mathbf m_+
        -
        \mathbf m_-
    \right),
\end{aligned}
\label{eq:app_critical_field}
\end{equation}
which is independent of \(\xi\).

If \(\mathbf m_+\neq\mathbf m_-\), the continuous trajectory is
\begin{equation}
    \xi(t)
    =
    \xi(0)
    +
    tp
    \left(
        \mathbf m_+
        -
        \mathbf m_-
    \right),
    \label{eq:app_critical_continuous}
\end{equation}
and the discrete trajectory is
\begin{equation}
    \xi_t
    =
    \xi_0
    +
    t\alpha p
    \left(
        \mathbf m_+
        -
        \mathbf m_-
    \right).
    \label{eq:app_critical_discrete}
\end{equation}
Both exhibit persistent linear drift and admit no finite equilibrium.

If \(\mathbf m_+=\mathbf m_-\), then
\(\mathbf F(\xi)=\mathbf 0\) for every \(\xi\). Every point is
stationary, but a perturbed trajectory remains at the perturbed point
rather than returning to the original one. Hence, no point is an
isolated locally asymptotically stable equilibrium.

\medskip
\noindent\textbf{Negative-dominant regime.}
Suppose \(p<q\). Equation~\eqref{eq:app_signed_hessian} gives
\begin{equation}
    \nabla^2\mathcal L(\xi)
    =
    (q-p)
    \nabla^2\psi(\xi)
    \succ0.
\end{equation}
Thus, \(\mathcal L\) is strictly convex.

If a finite stationary point \(\xi^\star\) exists, the continuous field
Jacobian is
\begin{equation}
    J_{\mathbf F}(\xi^\star)
    =
    (q-p)
    \nabla^2\psi(\xi^\star)
    \succ0.
    \label{eq:app_negative_continuous_jacobian}
\end{equation}
All eigenvalues are positive, so the stationary point is repelling and
therefore unstable.

For the discrete update, the local Jacobian is
\begin{equation}
    J_{\mathrm{disc}}(\xi^\star)
    =
    I
    +
    \alpha(q-p)
    \nabla^2\psi(\xi^\star).
    \label{eq:app_negative_discrete_jacobian}
\end{equation}
Its eigenvalues are
\[
    1+\alpha(q-p)\lambda_i
    >
    1
\]
for every \(\alpha>0\) and every \(\lambda_i>0\). Therefore, every finite
stationary point is unstable under the discrete dynamics as well. If no
stationary point exists, there is trivially no finite stable equilibrium.

The four cases establish
Theorem~\ref{thm:aggregate_equilibria}.
\end{proof}

\subsection{Proof of
Theorem~\ref{thm:policy_family_manifestations}}
\label{app:policy_family_manifestations}

\begin{proof}
Let
\begin{equation}
    r
    =
    q-p
    >
    0.
\end{equation}

\medskip
\noindent\textbf{Gaussian policy.}
For a Gaussian policy with fixed covariance,
\begin{equation}
    \nabla_{\mu}
    \log\pi_{\mu}(a)
    =
    \Sigma^{-1}(a-\mu).
    \label{eq:app_gaussian_mean_score}
\end{equation}
The aggregate mean field is therefore
\begin{equation}
\begin{aligned}
    \mathbf{F}_{\mu}(\mu)
    &=
    \mathbb{E}_{\nu}
    \left[
        \widehat{A}(a)
        \Sigma^{-1}(a-\mu)
    \right] \\
    &=
    \Sigma^{-1}
    \left[
        p\mu_{+}
        -
        q\mu_{-}
        -
        (p-q)\mu
    \right].
\end{aligned}
\label{eq:app_gaussian_aggregate_field}
\end{equation}
Because $p-q=-r$ and
\begin{equation}
    \mu^{\dagger}
    =
    \frac{
        q\mu_{-}
        -
        p\mu_{+}
    }{
        r
    },
\end{equation}
Equation~\eqref{eq:app_gaussian_aggregate_field} becomes
\begin{equation}
    \mathbf{F}_{\mu}(\mu)
    =
    r\Sigma^{-1}
    \left(
        \mu-\mu^{\dagger}
    \right).
\end{equation}

For the continuous dynamics, define
\begin{equation}
    \delta(t)
    =
    \mu(t)-\mu^{\dagger}.
\end{equation}
Then
\begin{equation}
    \dot{\delta}(t)
    =
    r\Sigma^{-1}\delta(t),
\end{equation}
and hence
\begin{equation}
    \delta(t)
    =
    \exp
    \left(
        r\Sigma^{-1}t
    \right)
    \delta(0).
    \label{eq:app_gaussian_continuous_solution}
\end{equation}
Since $\Sigma^{-1}\succ0$,
\begin{equation}
    \left\|
        \delta(t)
    \right\|_2
    \geq
    \exp
    \left(
        r\lambda_{\min}
        \left(
            \Sigma^{-1}
        \right)t
    \right)
    \left\|
        \delta(0)
    \right\|_2.
    \label{eq:app_gaussian_continuous_growth}
\end{equation}
Therefore, unless
$\delta(0)=0$, the distance from the stationary point grows
exponentially and
$\|\mu(t)\|_2\rightarrow\infty$.

For the discrete update,
\begin{equation}
    \mu_{t+1}
    =
    \mu_t
    +
    \alpha
    \mathbf{F}_{\mu}(\mu_t),
\end{equation}
so
\begin{equation}
    \delta_{t+1}
    =
    \left(
        I
        +
        \alpha r\Sigma^{-1}
    \right)
    \delta_t.
\end{equation}
Iterating gives
\begin{equation}
    \delta_t
    =
    \left(
        I
        +
        \alpha r\Sigma^{-1}
    \right)^t
    \delta_0.
    \label{eq:app_gaussian_discrete_solution}
\end{equation}
All eigenvalues of the update matrix are strictly larger than one.
Consequently,
\begin{equation}
    \left\|
        \delta_t
    \right\|_2
    \geq
    \left[
        1
        +
        \alpha r
        \lambda_{\min}
        \left(
            \Sigma^{-1}
        \right)
    \right]^t
    \left\|
        \delta_0
    \right\|_2.
    \label{eq:app_gaussian_discrete_growth}
\end{equation}
Thus, every nonstationary discrete trajectory also diverges.

For any fixed historical action $a$,
\begin{equation}
\begin{aligned}
    \left\|
        \nabla_{\mu}
        \log\pi_{\mu}(a)
    \right\|_2
    &=
    \left\|
        \Sigma^{-1}(a-\mu)
    \right\|_2 \\
    &\geq
    \lambda_{\min}
    \left(
        \Sigma^{-1}
    \right)
    \left\|
        a-\mu
    \right\|_2.
\end{aligned}
\label{eq:app_gaussian_score_lower_bound}
\end{equation}
Since $\|\mu\|_2\rightarrow\infty$ while $a$ is fixed,
$\|a-\mu\|_2\rightarrow\infty$.
Equation~\eqref{eq:app_gaussian_score_lower_bound} therefore proves
Equation~\eqref{eq:gaussian_unbounded_score}.

\medskip
\noindent\textbf{Categorical policy.}
Let there be $K\geq2$ actions and write
\begin{equation}
    \pi_z
    =
    \operatorname{softmax}(z).
\end{equation}
Because adding a constant to every logit does not change the policy, we
work on the gauge-fixed subspace
\begin{equation}
    \mathcal{H}_0
    =
    \left\{
        z\in\mathbb{R}^{K}
        :
        \mathbf{1}^{\top}z=0
    \right\}.
    \label{eq:app_categorical_gauge}
\end{equation}

Let $\rho_{+}$ and $\rho_{-}$ denote the normalized positive and negative
weighted action distributions, and define
\begin{equation}
    b
    =
    p\rho_{+}
    -
    q\rho_{-}.
\end{equation}
The signed categorical objective can be written as
\begin{equation}
    \mathcal{L}(z)
    =
    b^{\top}z
    +
    r
    \log
    \left(
        \sum_{j=1}^{K}
        e^{z_j}
    \right)
    +
    C,
    \label{eq:app_categorical_objective}
\end{equation}
because $p-q=-r$. Its gradient and Hessian are
\begin{equation}
    \nabla\mathcal{L}(z)
    =
    b+r\pi_z
\end{equation}
and
\begin{equation}
    \nabla^2\mathcal{L}(z)
    =
    r
    \left[
        \operatorname{Diag}(\pi_z)
        -
        \pi_z\pi_z^{\top}
    \right].
    \label{eq:app_categorical_hessian}
\end{equation}
For every nonzero
$v\in\mathcal{H}_0$,
\begin{equation}
\begin{aligned}
    v^{\top}
    \nabla^2\mathcal{L}(z)
    v
    &=
    r
    \operatorname{Var}_{A\sim\pi_z}
    \left[
        v_A
    \right] \\
    &>
    0,
\end{aligned}
\end{equation}
because every softmax probability is positive and a nonzero vector in
$\mathcal{H}_0$ cannot be constant. Hence,
$\mathcal{L}$ is strictly convex on $\mathcal{H}_0$ and has at most one
finite stationary point.

Consider first the continuous gradient flow
\begin{equation}
    \dot{z}
    =
    \nabla\mathcal{L}(z).
\end{equation}
Along every trajectory,
\begin{equation}
    \frac{d}{dt}
    \mathcal{L}(z(t))
    =
    \left\|
        \nabla\mathcal{L}(z(t))
    \right\|_2^2
    \geq
    0.
    \label{eq:app_categorical_objective_growth}
\end{equation}
Suppose a nonstationary trajectory were bounded. Its closure would be
compact, so $\mathcal{L}$ would remain bounded above. Integrating
Equation~\eqref{eq:app_categorical_objective_growth} would then give
\begin{equation}
    \int_0^\infty
    \left\|
        \nabla\mathcal{L}(z(t))
    \right\|_2^2
    dt
    <
    \infty.
\end{equation}
Since the gradient is Lipschitz on the compact trajectory closure, the
standard gradient-flow argument implies
\begin{equation}
    \nabla\mathcal{L}(z(t))
    \longrightarrow
    0.
\end{equation}
Any accumulation point would therefore be a stationary point.

If no finite stationary point exists, this is an immediate
contradiction. If a stationary point $z^{\dagger}$ exists, strict
convexity gives, for every $z\neq z^{\dagger}$,
\begin{equation}
    (z-z^{\dagger})^{\top}
    \left[
        \nabla\mathcal{L}(z)
        -
        \nabla\mathcal{L}(z^{\dagger})
    \right]
    >
    0.
\end{equation}
Since
$\nabla\mathcal{L}(z^{\dagger})=0$,
\begin{equation}
    \frac{d}{dt}
    \frac{1}{2}
    \left\|
        z(t)-z^{\dagger}
    \right\|_2^2
    >
    0
\end{equation}
along every nonstationary trajectory. Such a trajectory cannot have a
subsequence converging to $z^{\dagger}$. Hence, the only bounded
continuous trajectory is the stationary one itself.

For the discrete update
\begin{equation}
    z_{t+1}
    =
    z_t
    +
    \alpha
    \nabla\mathcal{L}(z_t),
    \qquad
    \alpha>0,
\end{equation}
convexity gives
\begin{equation}
\begin{aligned}
    \mathcal{L}(z_{t+1})
    &\geq
    \mathcal{L}(z_t)
    +
    \nabla\mathcal{L}(z_t)^{\top}
    (z_{t+1}-z_t) \\
    &=
    \mathcal{L}(z_t)
    +
    \alpha
    \left\|
        \nabla\mathcal{L}(z_t)
    \right\|_2^2.
\end{aligned}
\label{eq:app_categorical_discrete_growth}
\end{equation}
If the trajectory were bounded,
Equation~\eqref{eq:app_categorical_discrete_growth} would imply
\begin{equation}
    \nabla\mathcal{L}(z_t)
    \longrightarrow
    0.
\end{equation}
A convergent subsequence would therefore approach a finite stationary
point. If no such point exists, this is impossible. If the unique
stationary point $z^{\dagger}$ exists, then for
$z_t\neq z^{\dagger}$,
\begin{equation}
\begin{aligned}
    \left\|
        z_{t+1}-z^{\dagger}
    \right\|_2^2
    &=
    \left\|
        z_t-z^{\dagger}
    \right\|_2^2 \\
    &\quad
    +
    2\alpha
    (z_t-z^{\dagger})^{\top}
    \nabla\mathcal{L}(z_t) \\
    &\quad
    +
    \alpha^2
    \left\|
        \nabla\mathcal{L}(z_t)
    \right\|_2^2 \\
    &>
    \left\|
        z_t-z^{\dagger}
    \right\|_2^2.
\end{aligned}
\end{equation}
The trajectory therefore cannot have a subsequence converging to
$z^{\dagger}$. Thus, every nonstationary discrete trajectory leaves
every compact subset of $\mathcal{H}_0$.

To relate logit divergence to the probability boundary, note that on
$\mathcal{H}_0$ the inverse softmax representation is
\begin{equation}
    z_i
    =
    \log\pi_z(i)
    -
    \frac{1}{K}
    \sum_{j=1}^{K}
    \log\pi_z(j).
    \label{eq:app_inverse_softmax}
\end{equation}
If the probabilities remained in a compact subset of the simplex
interior, all coordinates would be bounded below by some
$\varepsilon>0$, and
Equation~\eqref{eq:app_inverse_softmax} would imply bounded logits.
Therefore, an unbounded gauge-fixed logit trajectory must contain a
subsequence along which at least one action probability approaches zero.

Finally, for every action $a$,
\begin{equation}
    \nabla_z\log\pi_z(a)
    =
    e_a-\pi_z.
\end{equation}
Its squared norm satisfies
\begin{equation}
\begin{aligned}
    \left\|
        e_a-\pi_z
    \right\|_2^2
    &=
    \left(
        1-\pi_z(a)
    \right)^2
    +
    \sum_{j\neq a}
    \pi_z(j)^2 \\
    &\leq
    \left(
        1-\pi_z(a)
    \right)^2
    +
    \left(
        \sum_{j\neq a}
        \pi_z(j)
    \right)^2 \\
    &=
    2
    \left(
        1-\pi_z(a)
    \right)^2 \\
    &\leq
    2.
\end{aligned}
\end{equation}
This proves the categorical score bound and completes the proof.
\end{proof}

\subsection{Proof of Minimum-Order Control}
\label{app:min_order_control}

For fixed-covariance Gaussian mean coordinates,
Proposition~\ref{prop:distance_strength} gives
\(R(D)=\Theta(D-C_\Sigma)=\Theta(D)\) in the far field. Thus there are
constants \(0<c_1\le c_2<\infty\) and \(D_0\) such that, for
\(D\ge D_0\),
\[
    c_1 D
    \le
    R(D)
    \le
    c_2 D .
\]
If \(\omega(D)R(D)\) is bounded, then for some \(M<\infty\),
\[
    \omega(D)R(D)\le M
    \qquad
    \text{for all sufficiently large } D .
\]
Using the lower bound gives
\[
    \omega(D)
    \le
    \frac{M}{c_1D}
    =
    O(D^{-1}).
\]
Conversely, \(\omega(D)\le CD^{-1}\) and the upper bound imply
\[
    \omega(D)R(D)
    \le
    CD^{-1}c_2D
    =
    O(1).
\]
The same comparison with little-\(o\) proves that the weighted loop gain
vanishes exactly when \(\omega(D)=o(D^{-1})\). Finally,
\(D-C_\Sigma=r^2/2\) converts \(D^{-1}\) to the equivalent
reciprocal-quadratic order \(r^{-2}\).

\subsection{Empirical Negative-Update Utility}
\label{app:negative_update_utility}

To operationalize the utility view used in
Section~\ref{subsec:distance_dependent_utility}, let
\(\widehat J_{\mathcal B}(\theta)\) denote an empirical estimate of task
performance on an evaluation batch \(\mathcal B\). For a negative sample
\(z=(s,a)\), define its raw repulsive update as
\begin{equation}
    g_\theta^-(z)
    =
    -\widehat A^-(z)
    \nabla_\theta \log \pi_\theta(a\mid s).
    \label{eq:raw_negative_update}
\end{equation}
Assigning it a weight \(w\in[0,1]\) produces the parameter step
\begin{equation}
    \Delta\theta(z;w)
    =
    \eta w g_\theta^-(z).
    \label{eq:weighted_negative_step}
\end{equation}
We define the empirical utility of this weighted negative update as
\begin{equation}
    \widehat{\mathcal U}_{\mathcal B}
    (z;w\mid\theta)
    =
    \widehat J_{\mathcal B}
    \!\left(
        \theta+\Delta\theta(z;w)
    \right)
    -
    \widehat J_{\mathcal B}(\theta).
    \label{eq:app_empirical_negative_update_utility}
\end{equation}
The corresponding utility-optimal weight is
\begin{equation}
    \widehat w_{\mathcal U}^{\star}(z\mid\theta)
    \in
    \arg\max_{w\in[0,1]}
    \widehat{\mathcal U}_{\mathcal B}(z;w\mid\theta).
    \label{eq:empirical_utility_optimal_weight}
\end{equation}

We distinguish this empirical utility from its population counterpart:
\begin{equation}
    \mathcal U_\theta(z;w)
    =
    J\!\left(
        \theta+\eta w g_\theta^-(z)
    \right)
    -
    J(\theta).
    \label{eq:population_negative_update_utility}
\end{equation}
If \(\widehat J_{\mathcal B}\) is an unbiased estimator of \(J\), then
\begin{equation}
    \mathbb E_{\mathcal B}
    \left[
        \widehat{\mathcal U}_{\mathcal B}(z;w\mid\theta)
    \right]
    =
    \mathcal U_\theta(z;w).
    \label{eq:empirical_utility_unbiased}
\end{equation}
Thus, empirical utility is a noisy, evaluation-dependent estimate rather
than an intrinsic deterministic label attached to the sample.

The following result gives sufficient conditions under which the
utility-optimal effective step vanishes in the far field.

\begin{lemma}[Far-field shrinkage of the utility-optimal step]
\label{lem:far_field_utility}
Let \(D\) index learner-relative remoteness, and let \(g(D)\neq0\)
denote the corresponding negative-update direction. Write
\begin{equation}
    I(D)
    =
    \|g(D)\|_2,
    \qquad
    v(D)
    =
    \frac{g(D)}{I(D)}.
    \label{eq:far_field_direction}
\end{equation}
Assume that the local update space admits an orthogonal decomposition
\begin{equation}
    \mathcal T
    =
    \mathcal T_{\mathrm{task}}
    \oplus
    \mathcal T_{\mathrm{far}},
    \label{eq:task_far_decomposition}
\end{equation}
and let \(P_{\mathrm{task}}\) denote the orthogonal projection onto
\(\mathcal T_{\mathrm{task}}\). Suppose that:
\begin{enumerate}
    \item \(\nabla J(\theta)\in\mathcal T_{\mathrm{task}}\);

    \item the normalized far-field update becomes asymptotically
    task-orthogonal,
    \begin{equation}
        \left\|
            P_{\mathrm{task}}v(D)
        \right\|_2
        \longrightarrow 0;
        \label{eq:task_projection_vanishes}
    \end{equation}

    \item there exist \(\rho_0>0\), \(\kappa>0\), and \(D_0\) such that,
    for every \(D\geq D_0\) and every \(\rho\in[0,\rho_0]\),
    \begin{equation}
        v(D)^\top
        \nabla^2J
        \!\left(
            \theta+\rho v(D)
        \right)
        v(D)
        \leq
        -\kappa.
        \label{eq:far_field_negative_curvature}
    \end{equation}
\end{enumerate}

Define the population utility of an effective step of length \(\rho\) by
\begin{equation}
    \mathcal V_D(\rho)
    =
    J\!\left(
        \theta+\rho v(D)
    \right)
    -
    J(\theta),
    \qquad
    0\leq\rho\leq\rho_0,
    \label{eq:effective_step_utility}
\end{equation}
and let
\begin{equation}
    \rho_{\mathcal U}^{\star}(D)
    \in
    \arg\max_{\rho\in[0,\rho_0]}
    \mathcal V_D(\rho).
    \label{eq:utility_optimal_effective_step}
\end{equation}
Then
\begin{equation}
    \rho_{\mathcal U}^{\star}(D)
    \longrightarrow 0
    \qquad
    \text{as }
    D\longrightarrow\infty.
    \label{eq:optimal_step_vanishes}
\end{equation}
Moreover, if
\[
    \langle\nabla J(\theta),v(D)\rangle<0,
\]
then every nonzero local step has negative utility:
\begin{equation}
    \mathcal V_D(\rho)<0
    \qquad
    \text{for all }
    \rho\in(0,\rho_0].
    \label{eq:negative_far_field_utility}
\end{equation}
\end{lemma}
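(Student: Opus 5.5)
We will prove Lemma~\ref{lem:far_field_utility} by a second-order Taylor expansion of \(\mathcal V_D\) along the ray \(\rho\mapsto\theta+\rho v(D)\), and we assume \(J\) is \(C^2\) on the closed segment, as the curvature hypothesis implicitly requires. The plan is to show that, far from the learner, the first-order term of \(\mathcal V_D\) vanishes while its curvature stays uniformly negative, so the maximizer is forced toward zero.

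\textbf{Step 1: the linear term vanishes.} By hypothesis~1 and orthogonality of the decomposition,
\[
\mathcal V_D'(0)=\langle\nabla J(\theta),v(D)\rangle=\langle\nabla J(\theta),P_{\mathrm{task}}v(D)\rangle .
\]
Cauchy--Schwarz then gives
\[
|\mathcal V_D'(0)|\le\|\nabla J(\theta)\|_2\,\|P_{\mathrm{task}}v(D)\|_2=:\varepsilon(D),
\]
and \(\varepsilon(D)\to0\) by hypothesis~2.

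\textbf{Step 2: uniform strict concavity.} Since \(\mathcal V_D''(\rho)=v(D)^\top\nabla^2J(\theta+\rho v(D))v(D)\), hypothesis~3 gives \(\mathcal V_D''\le-\kappa\) on \([0,\rho_0]\) for all \(D\ge D_0\). Integrating twice yields
\[
\mathcal V_D(\rho)\le\rho\,\mathcal V_D'(0)-\tfrac{\kappa}{2}\rho^2\le\rho\,\varepsilon(D)-\tfrac{\kappa}{2}\rho^2 .
\]

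\textbf{Step 3: localize the maximizer.} The right-hand side of Step~2 is negative for \(\rho>2\varepsilon(D)/\kappa\), while \(\mathcal V_D(0)=0\). Hence every maximizer satisfies \(\rho^\star_{\mathcal U}(D)\le2\varepsilon(D)/\kappa\to0\). A slightly sharper route is also available: strict concavity makes \(\mathcal V_D'\) decreasing, with \(\mathcal V_D'(\rho)\le\varepsilon(D)-\kappa\rho\), so the maximizer lies in \([0,\varepsilon(D)/\kappa]\). The main point to check here is that the argmax over the compact interval exists, which follows from continuity, and that the bound holds for \emph{every} selection from the argmax, which the inequality gives directly.

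\textbf{Step 4: the negative-alignment clause.} If \(\langle\nabla J(\theta),v(D)\rangle<0\), then \(\mathcal V_D'(0)<0\). Because \(\mathcal V_D'\) is strictly decreasing on \([0,\rho_0]\), we have \(\mathcal V_D'<0\) throughout, so \(\mathcal V_D\) is strictly decreasing from \(\mathcal V_D(0)=0\). This gives \(\mathcal V_D(\rho)<0\) for all \(\rho\in(0,\rho_0]\). This step uses the curvature bound, so it holds for \(D\ge D_0\); we will state that restriction explicitly.

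Nothing here is a serious obstacle; the only subtlety is this regularity and range-of-\(D\) bookkeeping.
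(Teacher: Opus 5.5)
Your proposal is correct and follows essentially the same route as the paper. Both arguments bound the first-order term \(a(D)=\langle\nabla J(\theta),v(D)\rangle\) by \(\|\nabla J(\theta)\|_2\|P_{\mathrm{task}}v(D)\|_2\to0\), then use the uniform curvature bound to get \(\mathcal V_D(\rho)\le\rho\,a(D)-\tfrac{\kappa}{2}\rho^2\); the paper obtains this from the Lagrange-remainder Taylor formula where you integrate twice. From this bound both conclude \(\rho^\star_{\mathcal U}(D)\le 2[a(D)]_+/\kappa\) and deduce the negative-alignment clause. Your explicit restriction of that last clause to \(D\ge D_0\) is correct bookkeeping: the paper's proof also needs the curvature bound there but does not state the restriction.
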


\begin{proof}
Define the first-order task alignment
\begin{equation}
    a(D)
    =
    \left\langle
        \nabla J(\theta),
        v(D)
    \right\rangle.
    \label{eq:far_field_alignment}
\end{equation}
Because
\(\nabla J(\theta)\in\mathcal T_{\mathrm{task}}\), orthogonality gives
\begin{equation}
\begin{aligned}
    |a(D)|
    &=
    \left|
        \left\langle
            \nabla J(\theta),
            P_{\mathrm{task}}v(D)
        \right\rangle
    \right| \\
    &\leq
    \left\|
        \nabla J(\theta)
    \right\|_2
    \left\|
        P_{\mathrm{task}}v(D)
    \right\|_2.
\end{aligned}
\end{equation}
Equation~\eqref{eq:task_projection_vanishes} therefore implies
\begin{equation}
    a(D)\longrightarrow0.
    \label{eq:alignment_vanishes}
\end{equation}

For any \(\rho\in[0,\rho_0]\), Taylor's theorem gives some
\(\xi\in(0,\rho)\) such that
\begin{equation}
\begin{aligned}
    \mathcal V_D(\rho)
    &=
    \rho a(D) \\
    &\quad+
    \frac{\rho^2}{2}
    v(D)^\top
    \nabla^2J
    \!\left(
        \theta+\xi v(D)
    \right)
    v(D).
\end{aligned}
\end{equation}
Using Equation~\eqref{eq:far_field_negative_curvature},
\begin{equation}
    \mathcal V_D(\rho)
    \leq
    \rho a(D)
    -
    \frac{\kappa}{2}\rho^2.
    \label{eq:utility_quadratic_upper_bound}
\end{equation}

Since \(\mathcal V_D(0)=0\), any utility-maximizing step must attain
nonnegative utility. However, whenever
\begin{equation}
    \rho
    >
    \frac{
        2[a(D)]_+
    }{
        \kappa
    },
\end{equation}
the right-hand side of
Equation~\eqref{eq:utility_quadratic_upper_bound}
is strictly negative. Hence,
\begin{equation}
    0
    \leq
    \rho_{\mathcal U}^{\star}(D)
    \leq
    \frac{
        2[a(D)]_+
    }{
        \kappa
    }.
    \label{eq:optimal_step_upper_bound}
\end{equation}
Together with Equation~\eqref{eq:alignment_vanishes}, this proves
Equation~\eqref{eq:optimal_step_vanishes}.

If \(a(D)<0\), then for every \(\rho>0\),
\begin{equation}
    \rho a(D)
    -
    \frac{\kappa}{2}\rho^2
    <
    0.
\end{equation}
Equation~\eqref{eq:utility_quadratic_upper_bound} therefore implies
\(\mathcal V_D(\rho)<0\) for every nonzero local step, proving
Equation~\eqref{eq:negative_far_field_utility}.
\end{proof}

The lemma is a sufficient mechanism rather than a universal monotonicity
claim. It applies when far-field negative updates become increasingly
misaligned with the task-improving direction and the task objective is
locally harmed along that far-field direction. The curvature condition
can represent local losses caused by parameter drift or policy-support
contraction, provided those effects reduce the evaluated objective
\(J\). If far-field growth remains aligned with \(\nabla J(\theta)\), the
conclusion need not hold.

Finally, the effective step generated by a sample weight \(w\) is
\begin{equation}
    \rho(D;w)
    =
    \eta w I(D).
    \label{eq:weight_effective_step}
\end{equation}
Consequently, the population analogue of the utility-optimal sample
weight must satisfy
\begin{equation}
    \eta
    w_{\mathcal U}^{\star}(D)
    I(D)
    =
    \rho_{\mathcal U}^{\star}(D)
    \longrightarrow0.
    \label{eq:oracle_weighted_influence_vanishes}
\end{equation}
Thus, when the raw far-field response remains nonvanishing or grows, the
utility-optimal sample weight itself must shrink so that the weighted
response vanishes.

\subsection{Proof of Proposition~\ref{prop:exp_vanishing_influence}}
\label{app:exp_vanishing_influence}

\begin{proof}
For $D>\tau$,
\begin{equation}
    \omega_{\mathrm{Exp}}(D)
    =
    \exp
    \left[
        -\frac{\lambda}{c}(D-\tau)
    \right].
\end{equation}
Hence,
\begin{equation}
\begin{aligned}
    \omega_{\mathrm{Exp}}(D) I(D)
    &\leq
    C
    \exp
    \left[
        -\frac{\lambda}{c}(D-\tau)
    \right]
    (1+D)^m \\
    &\longrightarrow
    0,
\end{aligned}
\end{equation}
because exponential decay dominates every finite-order polynomial.

The corresponding weighted parameter step is
\begin{equation}
    \Delta\theta_{\mathrm{Exp}}(D)
    =
    \eta
    \omega_{\mathrm{Exp}}(D)
    g_\theta^-(D),
\end{equation}
and therefore
\begin{equation}
    \left\|
        \Delta\theta_{\mathrm{Exp}}(D)
    \right\|_2
    =
    \eta
    \omega_{\mathrm{Exp}}(D)
    I(D)
    \longrightarrow
    0.
\end{equation}

If $\widehat J_{\mathcal B}$ is locally $L_{\mathcal B}$-Lipschitz around
$\theta$, then for all sufficiently large $D$,
\begin{equation}
\begin{aligned}
    \left|
        \widehat{\mathcal U}_{\theta,\mathcal B}
        \bigl(z;\omega_{\mathrm{Exp}}(D)\bigr)
    \right|
    &\leq
    L_{\mathcal B}
    \left\|
        \Delta\theta_{\mathrm{Exp}}(D)
    \right\|_2 \\
    &\longrightarrow
    0.
\end{aligned}
\end{equation}
This proves both claims.
\end{proof}

\subsection{Categorical One-vs-Rest Dynamics}
\label{app:categorical_self_limiting}

\begin{proof}[Proof of Theorem~\ref{thm:categorical_self_limiting}]
For a single negative atom \(i\), the direct-logit field is
\[
\dot z=-q(e_i-\pi),
\qquad
\dot z_i=-q(1-\pi_i),
\qquad
\dot z_j=q\pi_j\quad(j\ne i).
\]
Define the normalized off-target probabilities
\(\bar\pi_j=\pi_j/(1-\pi_i)\). Differentiating
\(y_i=z_i-\log\sum_{j\ne i}e^{z_j}\) gives
\begin{align}
\dot y_i
&=-q(1-\pi_i)
-q\frac{\sum_{j\ne i}\pi_j^2}{1-\pi_i} \\
&=-q(1-\pi_i)c_K(t),
\qquad
c_K(t)=1+\sum_{j\ne i}\bar\pi_j^2.
\label{eq:app_categorical_exact_ode}
\end{align}
Because \(\bar\pi\) is a distribution on \(K-1\) actions,
\[
\frac1{K-1}\le\sum_{j\ne i}\bar\pi_j^2\le1,
\]
which proves the coefficient envelope without a concentration assumption.

We next justify entry into the far half-line before using its bounds.
The log-odds is strictly decreasing. Until \(\pi_i<1/2\), monotonicity
gives \(1-\pi_i(t)\ge1-\pi_i(0)>0\), so
Equation~\eqref{eq:app_categorical_exact_ode} has a strictly negative
upper bound. The trajectory therefore crosses \(\pi_i=1/2\) in finite
time. Thereafter \(1-\pi_i\ge1/2\), so \(y_i\to-\infty\) at a linear
rate and \(\pi_i=\sigma(y_i)\) decays exponentially.

For the generic coefficient, suppose \(j^*\ne i\) is the unique largest
off-target logit initially. For every \(k\ne i,j^*\),
\[
\frac{d}{dt}(z_{j^*}-z_k)=q(\pi_{j^*}-\pi_k)>0.
\]
The initial ordering is therefore preserved. If a gap stayed bounded,
then, once \(\pi_i\to0\), maximality of \(j^*\) and the positive initial
gap would give a positive lower bound on
\(\pi_{j^*}-\pi_k\), a contradiction. Hence all such gaps diverge,
off-target mass concentrates on \(j^*\), and \(c_K(t)\to2\). Under
complete off-target symmetry, symmetry is preserved and
\(\bar\pi_j=1/(K-1)\), giving
\(c_K(t)\equiv K/(K-1)\).

Now apply the taper
\(\omega_i=\min\{1,(e^{\tau_i}\pi_i)^\beta\}\). In the far branch,
let \(U=e^{-\beta y_i}\). Since
\(\pi_i e^{-y_i}=1-\pi_i\),
\begin{align}
\dot U
&=-\beta e^{-\beta y_i}\dot y_i \\
&=\beta q e^{\beta\tau_i}
c_K(t)(1-\pi_i)^{\beta+1}.
\label{eq:app_categorical_tapered_u}
\end{align}
After the finite crossing above, the right-hand side is bounded above
and below by positive constants. Thus \(U=\Theta(t)\), which is
equivalent to
\[
y_i(t)=-\beta^{-1}\log t+O(1),
\qquad
\pi_i(t)=\Theta(t^{-1/\beta}).
\]
\end{proof}

\subsection{Categorical Restoring-Force Balance}
\label{app:categorical_restoring_balance}

Write the one-dimensional restored dynamics as
\[
\dot y=F(y)=E(y)-S_{\mathrm{tap}}(y),
\]
where \(S_{\mathrm{tap}}>0\) is downward suppression and \(E>0\) is
upward restoration.

\begin{proposition}[Finite restoring crossing]
\label{prop:categorical_restoring_crossing}
Assume \(E\) and \(S_{\mathrm{tap}}\) are continuous and their zeros in
\([y_L,y_R]\) are isolated. If
\(F(y_L)>0\) and \(F(y_R)<0\), then the interval contains a zero at
which the phase line has a positive-to-negative crossing. Such a
crossing is locally asymptotically stable; if it is hyperbolic, its
condition is
\[
F'(y^*)<0
\quad\Longleftrightarrow\quad
[S_{\mathrm{tap}}-E]'(y^*)>0.
\]
\end{proposition}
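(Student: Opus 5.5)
We prove Proposition~\ref{prop:categorical_restoring_crossing} by a one-dimensional intermediate-value and phase-line argument applied to \(F=E-S_{\mathrm{tap}}\). Continuity of \(E\) and \(S_{\mathrm{tap}}\) makes \(F\) continuous on \([y_L,y_R]\). Since \(F(y_L)>0>F(y_R)\), the zero set \(Z=F^{-1}(0)\cap[y_L,y_R]\) is nonempty and closed. We read the isolation hypothesis as applying to the zeros of \(F\), since these are the stationary points relevant to the dynamics. Then \(Z\) is a closed set of isolated points in a compact interval, hence finite: \(y_1<\dots<y_n\), all lying in the open interval \((y_L,y_R)\).

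On each complementary open interval \((y_L,y_1),(y_1,y_2),\dots,(y_n,y_R)\), the function \(F\) has no zero, so by the intermediate value theorem it has constant sign. The sign is \(+\) on the first interval, because \(F(y_L)>0\), and \(-\) on the last. Scanning left to right, there must be a first index \(k\) at which the sign changes from \(+\) to \(-\). At that point \(y^*=y_k\) we have \(F>0\) on \((y_{k-1},y^*)\) and \(F<0\) on \((y^*,y_{k+1})\). This \(y^*\) is the positive-to-negative crossing. The main care point is the bookkeeping: zeros at which \(F\) touches zero without changing sign must be skipped, and finiteness of \(Z\) is what guarantees the scan terminates.

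For local asymptotic stability we use the scalar comparison argument rather than linearization, so that non-hyperbolic crossings are also covered. Take a neighborhood \(N=(y^*-\varepsilon,y^*+\varepsilon)\) containing no other zero. Stability needs uniqueness of solutions, so we add local Lipschitz continuity of \(F\) as a standing regularity assumption; the DRPO fields in question are piecewise smooth. With that assumption, consider a solution starting at \(y(0)\in(y^*-\varepsilon,y^*)\). There \(\dot y=F(y)>0\), so \(y(t)\) is strictly increasing. By uniqueness it cannot cross the equilibrium \(y^*\), so it is bounded above by \(y^*\). A bounded monotone solution converges, and its limit must be a zero of \(F\) in \([y(0),y^*]\); the only such zero is \(y^*\). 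The argument from the right is symmetric. The same monotonicity shows that trajectories never leave \(N\), which gives Lyapunov stability. Together these yield local asymptotic stability.

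Finally, suppose \(F\) is differentiable at \(y^*\) with \(F'(y^*)\neq0\). The sign pattern \(+\) to the left and \(-\) to the right forces \(F'(y^*)\le0\), and hyperbolicity then makes the inequality strict. Conversely, \(F'(y^*)<0\) together with \(F(y^*)=0\) implies this sign pattern near \(y^*\), so the crossing is positive-to-negative. The equivalence \(F'(y^*)<0\iff[S_{\mathrm{tap}}-E]'(y^*)>0\) is immediate from \(F=E-S_{\mathrm{tap}}\).
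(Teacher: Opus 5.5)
Your proof is correct and follows the same route as the paper's: the intermediate value theorem, finiteness of isolated zeros on a compact interval, a scan for a \(+\to-\) sign change, phase-line stability, and the derivative identity read off from \(F=E-S_{\mathrm{tap}}\). You supply the bookkeeping the paper leaves implicit, but the added Lipschitz hypothesis is unnecessary, and your claim that stability needs uniqueness is not right here. For a scalar autonomous equation with continuous \(F\), solutions are locally unique wherever \(F\neq0\). The sign pattern \(F>0\) on \((y^*-\varepsilon,y^*)\) and \(F<0\) on \((y^*,y^*+\varepsilon)\) also stops any solution from leaving \(y^*\) once it reaches it, so local asymptotic stability holds under continuity alone, as the statement assumes.
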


\begin{proof}
The intermediate value theorem gives at least one zero. Because the
zeros are isolated on a compact interval, only finitely many occur; as
the endpoint signs differ, at least one has positive sign immediately
to its left and negative sign immediately to its right. The flow then
points toward that zero from both sides, proving phase-line stability. At a
hyperbolic crossing this sign change is equivalent to \(F'(y^*)<0\),
and the displayed derivative identity follows from \(F=E-S_{\mathrm{tap}}\).
\end{proof}

In the binary or symmetric one-vs-rest tail, the tapered suppression
has order \(e^{\beta y}\), whereas entropy restoration has order
\(|y|e^y\) as \(y\to-\infty\). Hence \(\beta\ge1\) supplies the
required far-tail ordering (with the \(|y|\) factor breaking the tie at
\(\beta=1\)), but it does not by itself verify the finite phase-line
conditions of Proposition~\ref{prop:categorical_restoring_crossing}.

\section{Distributional Reweighting Interpretations}
\label{app:distributional_reweighting}

\subsection{Finite-Measure I-Divergence Attenuation}
\label{app:finite_measure_variational}

DRPO's exponential attenuation admits an optimistic finite-measure
reweighting interpretation analogous to divergence-ball DRO, with the
constraint expressed in generalized I-divergence
\citep{csiszar1975divergence}. Optimistic or best-case distributional
optimization also has a formal precedent in likelihood approximation
and policy optimization
\citep{nguyen2019optimistic,song2020optimistic}. Our construction is
related but distinct: it reweights a finite measure of negative-update
mass according to learner-relative remoteness and may reduce that
measure's total mass. It is therefore an analogue of optimistic
divergence-ball reweighting, not a standard ambiguity set over normalized
probability distributions. The analogy is deliberately qualified: unlike
standard ambiguity sets over normalized probability measures, the present
construction permits attenuation of the total negative-update mass. Let
\(d\xi^-(z)=\widehat A^-(z)d\nu(z)\) and
\(h(z)=[D(z)-\tau(z)]_+\). Consider
\begin{equation}
\min_{0\le w\le1}\int wh\,d\xi^-
\quad\mathrm{s.t.}\quad
\int(w\log w-w+1)d\xi^-\le\delta.
\label{eq:finite_measure_variational}
\end{equation}

\begin{proposition}[Finite-measure exponential reweighting]
\label{prop:finite_measure_variational}
Let \(\delta_0=\xi^-(\{h>0\})\). For
\(0<\delta<\delta_0\), the constraint in
Equation~\eqref{eq:finite_measure_variational} is active and has a unique
finite \(\kappa>0\) with
\(w_\kappa(z)=e^{-h(z)/\kappa}\). At \(\delta=\delta_0\), the endpoint is
\(w_0=\mathbf1\{h=0\}\); for \(\delta>\delta_0\), zero objective is
already attainable and no finite one-to-one \(\kappa\)--\(\delta\)
relation remains. The later endpoint \(\delta\ge|\xi^-|\) merely makes
\(w\equiv0\) feasible.
\end{proposition}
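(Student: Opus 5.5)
The plan is to treat Equation~\eqref{eq:finite_measure_variational} as a convex program over the box \(0\le w\le1\): the objective is linear, and the constraint is convex because the integrand \(\phi(w)=w\log w-w+1\) is strictly convex on \([0,1]\). I will produce the exponential solution by pointwise Lagrangian minimization, then calibrate the multiplier against \(\delta\). Throughout I assume \(\xi^-\) is a finite measure, \(|\xi^-|<\infty\), and that \(\delta_0>0\) (otherwise the range \(0<\delta<\delta_0\) is empty).

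\textbf{Step 1: Lagrangian minimizer.} For a multiplier \(\kappa>0\) on the divergence, I minimize pointwise in \(w\in[0,1]\) the quantity \(w\,h(z)+\kappa\,\phi(w)\). Setting the derivative to zero gives \(h+\kappa\log w=0\), that is, \(w=e^{-h/\kappa}\). Because \(h\ge0\), this value already lies in \((0,1]\), so the box constraint never binds. Strict convexity makes it the unique pointwise minimizer. For any feasible \(w\), integrating the pointwise inequality gives
\[
\int wh\,d\xi^- \ \ge\ \int w_\kappa h\,d\xi^- + \kappa\Bigl[\int\phi(w_\kappa)\,d\xi^- - \int\phi(w)\,d\xi^-\Bigr].
\]
Hence \(w_\kappa\) is optimal, and \(\xi^-\)-a.e.\ unique, among feasible \(w\) as soon as \(\int\phi(w_\kappa)\,d\xi^-=\delta\). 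All the integrals involved are finite: \(\phi\le1\), \(0\le w_\kappa\le 1\), and \(h e^{-h/\kappa}\le\kappa/e\), all against a finite measure.

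\textbf{Step 2: calibration of \(\kappa\).} Evaluating the divergence at \(w_\kappa\) gives
\[
\Phi(\kappa):=\int\phi(w_\kappa)\,d\xi^-=\int_{\{h>0\}}\psi(h/\kappa)\,d\xi^-,\qquad \psi(x)=1-(1+x)e^{-x}.
\]
The set \(\{h=0\}\) contributes nothing because \(w_\kappa=1\) there. The function \(\psi\) is continuous and strictly increasing on \([0,\infty)\), with \(\psi(0)=0\) and \(\psi(\infty)=1\). By dominated convergence (with dominating function \(1\)), \(\Phi\) is continuous and strictly decreasing in \(\kappa\), with \(\Phi(0^+)=\xi^-(\{h>0\})=\delta_0\) and \(\Phi(\infty)=0\). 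The intermediate value theorem plus strict monotonicity then yield a unique \(\kappa\in(0,\infty)\) with \(\Phi(\kappa)=\delta\) for every \(0<\delta<\delta_0\).

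\textbf{Step 3: activity of the constraint.} Every feasible \(w\) has objective value at least \(0\), and the value \(0\) is achieved only when \(w=0\) a.e.\ on \(\{h>0\}\). Such a \(w\) has divergence at least \(\xi^-(\{h>0\})=\delta_0>\delta\), so it is infeasible. If an optimum had slack in the constraint, moving it slightly toward \(\mathbf1\{h=0\}\) would keep it feasible and strictly lower the objective. Therefore the constraint is active at the optimum, which matches the KKT certificate of Step 1.

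\textbf{Step 4: endpoints.}
\begin{itemize}
\item At \(\delta=\delta_0\), the weight \(w_0=\mathbf1\{h=0\}\) is feasible with zero objective. It is the \(\kappa\to0\) limit of \(w_\kappa\).
\item For \(\delta>\delta_0\), zero objective is still attainable, and \(w\) may be lowered on \(\{h=0\}\) at no cost. The minimizer is therefore non-unique, the multiplier is zero, and no finite one-to-one \(\kappa\)--\(\delta\) relation remains.
\item \(w\equiv0\) has divergence \(|\xi^-|\), so it becomes feasible exactly when \(\delta\ge|\xi^-|\).
\end{itemize}

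The main obstacle is Step 2 rather than the algebra: justifying continuity, strict monotonicity and the two limits of \(\Phi\) requires the finiteness of \(\xi^-\) and \(\delta_0>0\). I would handle it with dominated convergence using the bound \(\psi\le1\).
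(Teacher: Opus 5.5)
Your proposal is correct and follows essentially the same route as the paper: pointwise Lagrangian stationarity gives \(w_\kappa=e^{-h/\kappa}\), dominated convergence shows that \(\kappa\mapsto\int\phi(w_\kappa)\,d\xi^-\) decreases continuously and strictly from \(\delta_0\) to \(0\), and the endpoints are then read off. Your integrated pointwise inequality and Step~3 supply the optimality and activity arguments that the paper's proof assumes when it begins ``for an active constraint''; the only slip is that your non-uniqueness remark for \(\delta>\delta_0\) needs \(\xi^-(\{h=0\})>0\), but the statement's conclusion does not depend on it.
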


\begin{proof}[Proof of Proposition~\ref{prop:finite_measure_variational}]
Use the convention
\(\phi(0)=1\) for
\(\phi(w)=w\log w-w+1\). For an active constraint, the Lagrangian with
multiplier \(\kappa>0\) is
\[
\mathcal L(w,\kappa)
=\int\{wh+\kappa\phi(w)\}\,d\xi^- -\kappa\delta.
\]
Pointwise stationarity on \(0<w\le1\) gives
\(h+\kappa\log w=0\), hence
\[
w_\kappa(z)=e^{-h(z)/\kappa}.
\]
For \(h=0\), this equals one; for \(h>0\), it lies strictly between
zero and one. Define
\(\mathcal D(\kappa)=\int\phi(w_\kappa)d\xi^-\). Dominated convergence gives
\[
\lim_{\kappa\to\infty}\mathcal D(\kappa)=0,
\qquad
\lim_{\kappa\downarrow0}\mathcal D(\kappa)
=\xi^-(\{h>0\})=\delta_0.
\]
On a set of positive \(h\), increasing \(\kappa\) strictly increases
\(w_\kappa\) toward one and strictly decreases \(\phi(w_\kappa)\).
Thus \(\mathcal D(\kappa)\) is continuous and strictly decreasing from
\(\delta_0\) to zero, proving existence and uniqueness for
\(0<\delta<\delta_0\).

At \(\delta=\delta_0\), the limit is
\(w_0=\mathbf1\{h=0\}\), which has zero objective and divergence
\(\delta_0\). For \(\delta>\delta_0\), the same zero-objective point is
strictly feasible, so the constraint need not be active and there is no
finite one-to-one multiplier relation. Finally,
\(\int\phi(0)d\xi^-=|\xi^-|\), so \(w\equiv0\) becomes feasible only at the later
endpoint \(\delta\ge|\xi^-|\).
\end{proof}

This problem derives learner-relative exponential attenuation. It is
distinct from quality-based hard selection: both are optimistic
reweighting views, but one controls remoteness under a finite negative
measure and the other selects a quality region. Neither problem implies
the other.

\subsection{Relation to Ratio- and Constraint-Based Controls}
\label{app:topr_relation}

\paragraph{Behavior-ratio identity.}
For total log probabilities, define
\[
r_{\mathrm{beh},\theta}(s,a)
=
\frac{\pi_\theta(a\mid s)}{\mu(a\mid s)},
\qquad
D_\theta(s,a)
=
-\log\pi_\theta(a\mid s),
\qquad
D_\mu(s,a)
=
-\log\mu(a\mid s),
\]
where \(\mu\) is the behavior policy. For any \(\beta>0\),
\[
\begin{aligned}
\min\left\{
    r_{\mathrm{beh},\theta}(s,a)^\beta,
    1
\right\}
&=
\min\left\{
    \left(
        \frac{\pi_\theta(a\mid s)}
             {\mu(a\mid s)}
    \right)^\beta,
    1
\right\} \\
&=
\exp\left\{
    -\beta
    \left[
        D_\theta(s,a)-D_\mu(s,a)
    \right]_+
\right\}.
\end{aligned}
\]
Thus, the clipped behavior ratio is exactly
Equation~\eqref{eq:drpo_exp_weight} with the sample-specific threshold
\(\tau=D_\mu(s,a)\) and rate \(\lambda/c=\beta\). At the atomic-action
level, canonical TOPR is the \(\beta=1\) case
\citep{leroux2025tapered}. The corresponding untruncated power weight
\(r_{\mathrm{beh},\theta}^{\beta}\) has the same far-field decay but may
exceed one when \(D_\theta<D_\mu\), whereas the thresholded taper remains
capped at one. Replacing both log probabilities by their
length-normalized values replaces
\(r_{\mathrm{beh},\theta}\) by
\(r_{\mathrm{beh},\theta}^{1/L}\).

Relative to this behavior-ratio form, DRPO treats the threshold and decay
rate as design quantities, recomputes remoteness under the current policy,
and applies the taper only to the negative branch.

\paragraph{Conditional-versus-population constants.}

For a conditionally reused far-branch sample, the coefficient contains
\(\pi/\mu\), so the conditional vector field retains a factor of
\(\mu^{-1}\). In the population field, however, the outer sampling mass
gives
\[
\mu\min\{\pi/\mu,1\}
=
\min\{\pi,\mu\},
\]
and cancels the ratio denominator on the far branch.

\paragraph{Experimental fitted-reference variant.}
The TOPR arm implements the joint fitted-reference
\(\beta\)-TOPR variant; it is not a canonical frozen-behavior
reproduction of \citet{leroux2025tapered}. The canonical population
cancellation above requires the same \(\mu\) to serve as both the outer
sampling mass and the ratio denominator. It therefore does not apply to
the experimental variant, whose outer measure is induced by
prompt-uniform replay with a per-prompt unique-negative mean and whose
denominator is \(\mu_{\mathrm{ref}}\). The experimental arm also uses
full-completion summed log probabilities rather than the
length-normalized ratio identity above.

For Gaussian atoms, the triangle inequality and the one-atom \(G_i\)
calculation in Appendix~\ref{app:gaussian_drpo_boundedness} first give
weighted constants. A \(\mu_{\min}\) expression follows only after
imposing finite support and replacing the weighted terms by their
worst-case bounds.

\paragraph{Refreshed proximal ratios.}
For a PPO-style proximal update, define
\[
r_{\mathrm{prox},\theta}(s,a)
=
\frac{\pi_\theta(a\mid s)}
     {\pi_{\mathrm{old}}(a\mid s)}.
\]
For the clipped surrogate
\citep{schulman2017proximal}, when \(\widehat A<0\), the effective
coefficient relative to
\(\widehat A\nabla_\theta\log\pi_\theta(a\mid s)\), away from the clipping
boundary, is
\[
r_{\mathrm{prox},\theta}
\mathbf 1
\left\{
    r_{\mathrm{prox},\theta}
    \geq
    1-\epsilon
\right\},
\]
rather than a pure indicator. The activation boundary is hard, but the
retained coefficient remains the proximal policy ratio. Refreshing
\(\pi_{\mathrm{old}}\) resets this ratio to one at the beginning of each
outer iteration. The resulting gate therefore constrains movement relative
to the latest anchor but does not directly encode the cumulative
learner-relative remoteness that increases under
Theorem~\ref{thm:reuse_dynamics}.

\paragraph{Movement, tethering, and source control.}
Reducing a fixed positive step size does not restore a stable finite
equilibrium under negative dominance
(Theorem~\ref{thm:aggregate_equilibria}). Reference regularization instead
restores an inward force by tethering the whole policy. When \(q>p\),
stability requires
\[
\gamma>q-p.
\]
When \(p>q\), strengthening the tether contracts the equilibrium toward
the reference
(Proposition~\ref{prop:regularized_restoration}). DRPO instead attenuates
the learner-relative remote negative source while leaving the positive
branch and near-field negative feedback unchanged. These mechanisms are
not mutually exclusive, but the dynamics of their combination lie outside
the fixed-data, fixed-advantage analysis considered here.

\section{Experimental Details}
\label{app:experimental_details}

\subsection{D4RL Locomotion Gradient-Diagnostic Protocol}
\label{app:d4rl_gradient_diagnostic}

\paragraph{Critic fitting and frozen advantages.}
For each of the nine D4RL locomotion dataset cells, we train one canonical
value critic for 100,000 optimizer updates. We then compute, standardize, and
freeze the transition-level learned-critic advantages. All actor seeds for a
given dataset use the same frozen critic outputs.

\paragraph{Positive-only reference actors.}
For seeds 100--109, we train a Gaussian reference actor for 100,000 optimizer
updates using only transitions with positive frozen advantage. The terminal
actor checkpoint is fixed before the gradient diagnostic and serves only to
define learner-relative distance and the implemented-gradient measurement.

\paragraph{Matched negative-transition probe.}
The probe population contains transitions with negative frozen advantage.
Near and far pools use standardized-distance quantiles 0.25 and 0.75. We match
absolute advantage using 20 bins and a 5\% relative tolerance, retain 256
matched pairs per seed, and evaluate the full actor-parameter gradient on 64
pairs per seed. Seven far-distance bins are reported. The probe performs no
optimizer update.

\paragraph{Aggregation and provenance.}
Every far point is normalized by its original matched near partner. Curves are
first aggregated within seed and dataset; the main panel then assigns equal
weight to the nine dataset cells on their shared distance support. Its 95\%
confidence interval uses 10,000 hierarchical bootstrap draws, resampling
dataset cells and then seeds, with bootstrap seed 20260723. The archived
point-level output records dataset, critic, actor, RunSpec, source, and
checkpoint provenance.

\begin{table*}[t]
\centering
\caption{Configuration of the D4RL locomotion gradient diagnostic.}
\label{tab:app_d4rl_gradient_protocol}
\small
\setlength{\tabcolsep}{5pt}
\begin{tabular}{ll}
\toprule
Component & Configuration \\
\midrule
Datasets & HalfCheetah, Hopper, Walker2d $\times$ medium, medium-replay, medium-expert \\
Critic & One learned value critic per dataset; 100,000 optimizer updates \\
Advantage & Transition-level learned-critic advantage; standardized and frozen \\
Reference actor & Positive-only Gaussian actor; 100,000 optimizer updates \\
Seeds & 100--109 \\
Probe population & Negative frozen-advantage transitions \\
Near/far pools & Standardized-distance quantiles 0.25 / 0.75 \\
Advantage matching & 20 bins; 5\% relative tolerance \\
Matched pool & 256 pairs per seed \\
Full-gradient subset & 64 pairs per seed \\
Distance summary & Seven far-distance bins with original-pair normalization \\
Probe updates & None \\
Cross-dataset summary & Within-seed/dataset normalization; equal dataset-cell weighting \\
Uncertainty & Hierarchical bootstrap over dataset cells and seeds \\
\bottomrule
\end{tabular}
\end{table*}

\begin{figure*}[t]
    \centering
    \includegraphics[width=\textwidth]{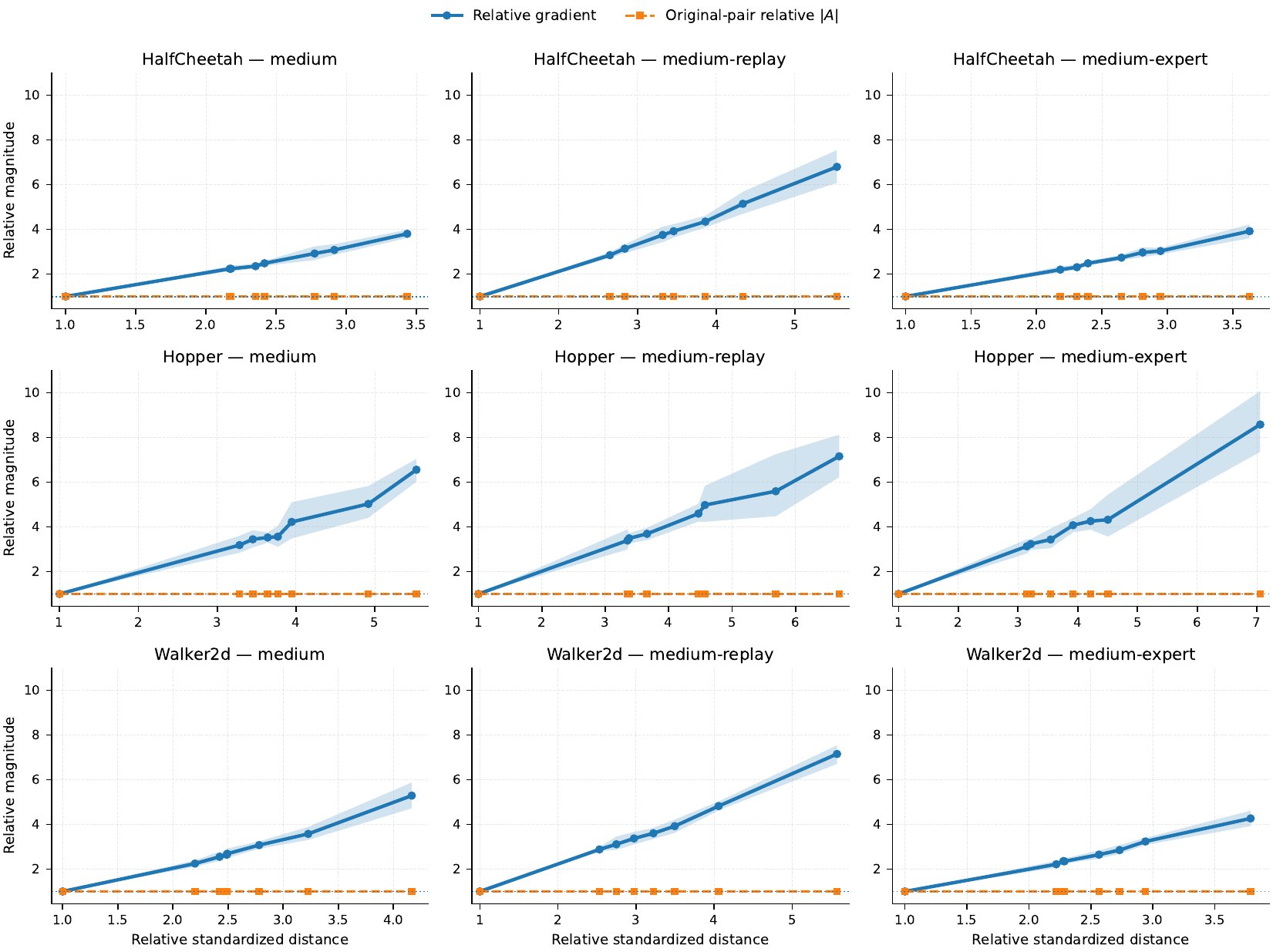}
    \caption{
    \textbf{Per-dataset D4RL locomotion gradient diagnostics.}
    Rows correspond to HalfCheetah, Hopper, and Walker2d; columns correspond to
    the medium, medium-replay, and medium-expert datasets. Each panel reports the
    relative implemented full-parameter actor-gradient diagnostic and matched
    absolute frozen advantage as functions of policy-relative standardized
    distance. Each far transition is normalized by its original matched near
    partner within seed; lines show 10-seed means and shaded regions show seed-level bootstrap 95\%
    confidence intervals. The common near-to-far increase across all nine dataset
    cells demonstrates that the aggregate result is not driven by a single task.
    }
    \label{fig:app_d4rl9_gradient_panels}
\end{figure*}

\subsection{Unified Continuous Environment C-U1}
\label{app:cu1_environment}

C-U1 is the shared controlled continuous environment used by all
continuous mechanism experiments. A context
$s\in\mathbb R^6$
is sampled from
$\mathcal N(0,I_6)$.
For each seed, we independently sample 4096 training contexts and 4096
test contexts from the same distribution. Test contexts never
participate in optimization and are used only to measure
held-out-context generalization.

\paragraph{State-dependent task geometry.}
Each context determines a two-dimensional positive-support center
$a_+(s)$ and a unit task direction $u(s)$. We define
\begin{equation}
\begin{aligned}
    [a_+(s)]_1
    &=
    0.70
    \tanh
    \bigl(
        0.85s_1
        -
        0.30s_2s_3
        +
        0.20\sin(1.6s_4)
    \bigr), \\
    [a_+(s)]_2
    &=
    0.65
    \tanh
    \bigl(
        -0.50s_2
        +
        0.35\cos(1.1s_5)
        +
        0.22s_1s_6
    \bigr).
\end{aligned}
\label{eq:cu1_positive_center}
\end{equation}
The direction angle is
\begin{equation}
\begin{aligned}
    \varphi(s)
    &=
    1.15
    \tanh
    \bigl(
        0.75s_1
        +
        0.50s_3
        -
        0.30s_6
    \bigr) \\
    &\quad+
    0.30\sin(1.35s_2),
\end{aligned}
\label{eq:cu1_direction_angle}
\end{equation}
with
\begin{equation}
    u(s)
    =
    \begin{bmatrix}
        \cos\varphi(s)\\
        \sin\varphi(s)
    \end{bmatrix},
    \qquad
    v(s)
    =
    \begin{bmatrix}
        -u_2(s)\\
        u_1(s)
    \end{bmatrix}.
    \label{eq:cu1_task_basis}
\end{equation}
The unobserved task optimum and the directionally useful negative anchor
are
\begin{equation}
\begin{aligned}
    a_\star(s)
    &=
    a_+(s)+0.70u(s), \\
    a_-(s)
    &=
    a_+(s)-0.50u(s).
\end{aligned}
\label{eq:cu1_task_points}
\end{equation}

\paragraph{Positive and negative actions.}
Each context is associated with four positive actions and eight negative
actions. The positive actions lie on a reward contour of radius
$r_+=0.75$
centered at $a_\star(s)$:
\begin{equation}
\begin{aligned}
    a_{\mathrm{pos},k}(s)
    =
    a_\star(s)
    +
    r_+
    \bigl(
        \cos\phi_k\,u(s)
        +
        \sin\phi_k\,v(s)
    \bigr),
\end{aligned}
\label{eq:cu1_positive_actions}
\end{equation}
where
\begin{equation}
\begin{aligned}
    \{\phi_k\}_{k=1}^{4}
    &=
    \{
        \pi-\theta_1,\,
        \pi+\theta_1,\,
        \pi-\theta_2,\,
        \pi+\theta_2
    \}, \\
    \theta_1
    &=
    0.20,
    \qquad
    \cos\theta_2
    =
    \frac{2(0.70)}{0.75}
    -
    \cos\theta_1.
\end{aligned}
\label{eq:cu1_positive_angles}
\end{equation}
This construction makes the four actions equal-reward while ensuring
that their centroid is exactly $a_+(s)$. It also leaves nonzero
conditional action spread, preventing deterministic maximum-likelihood
variance contraction from being confused with the far-field mechanism.

The eight negative actions lie on a second reward contour of radius
$r_-=1.20$:
\begin{equation}
\begin{aligned}
    a_{\mathrm{neg},j}(s)
    =
    a_\star(s)
    +
    r_-
    \bigl(
        \cos\psi_j\,u(s)
        +
        \sin\psi_j\,v(s)
    \bigr),
\end{aligned}
\label{eq:cu1_negative_actions}
\end{equation}
with
\begin{equation}
\begin{aligned}
    \{\psi_j\}_{j=1}^{8}
    =
    \left\{
        \pi,\,
        \frac{3\pi}{4},\,
        \frac{\pi}{2},\,
        \frac{\pi}{4},\,
        0,\,
        -\frac{\pi}{4},\,
        -\frac{\pi}{2},\,
        -\frac{3\pi}{4}
    \right\}.
\end{aligned}
\label{eq:cu1_negative_angles}
\end{equation}
The first negative action equals $a_-(s)$. All eight negative actions
have the same distance to $a_\star(s)$ and therefore the same reward and
fixed advantage, while having different distances from the current
policy.

\paragraph{Reward and fixed advantages.}
The ground-truth reward is
\begin{equation}
    R(s,a)
    =
    \exp
    \left(
        -
        \frac{
            \lVert a-a_\star(s)\rVert_2^2
        }{
            2(0.75)^2
        }
    \right).
    \label{eq:cu1_reward}
\end{equation}
Advantages are computed once using the fixed baseline $0.40$:
\begin{equation}
    \widehat A(s,a)
    =
    R(s,a)-0.40,
    \label{eq:cu1_advantage}
\end{equation}
and remain frozen throughout optimization. Consequently, all four
positive actions have positive advantages, all eight negative actions
have negative advantages, and negative-action quality is exactly matched
within each context.

\paragraph{Policy parameterization.}
The policy is an isotropic state-conditioned Gaussian,
\begin{equation}
    \pi_\theta(a\mid s)
    =
    \mathcal N
    \left(
        \mu_\theta(s),
        \sigma_\theta^2(s)I_2
    \right).
    \label{eq:cu1_policy}
\end{equation}
Both $\mu_\theta(s)$ and the scalar
$\log\sigma_\theta(s)$
are produced by a shared two-layer ReLU MLP with 64 hidden units per
layer and separate output heads. The initial standard deviation is
$0.60$, and the learnable-variance experiments use no artificial
variance clamp.  The continuous mechanism results reported in the main
text use the fixed-variance branch unless explicitly labeled otherwise. A finite crossing of
$\log\sigma_\theta(s)<-12$
is recorded as a support or variance-boundary event, whereas nonfinite
parameters or outputs are reported separately as NaN/Inf numerical
failure.

Training minibatches are sampled by context, and all actions associated
with a selected context are retrieved together. Thus, the replicated
actions within one context are not treated as independent contexts.
Positive and negative objectives are averaged separately before their
relative coefficient is applied, preventing the four-to-eight action
count ratio from mechanically changing the total negative-update mass.

\paragraph{Shared use across continuous protocols.}
All C-U1 experiments reuse the same contexts, reward geometry, fixed
advantages, and policy architecture. E1 compares equal-advantage
negative actions at different learner-relative distances. E2 applies
only positive updates and measures the same negative actions as phantom
probes. E3 dynamically partitions negative actions using the
standardized distance
\begin{equation}
    d_\theta(s,a)
    =
    \frac{
        \lVert a-\mu_\theta(s)\rVert_2
    }{
        \sigma_\theta(s)
    },
    \label{eq:cu1_standardized_distance}
\end{equation}
with the prespecified near--far threshold $d_\theta=5.0$, and applies
targeted near- and far-field interventions. E4 uses the aligned action
$a_-(s)$ as directionally useful local negative feedback and the
remaining contour actions as additional far-field pressure. The
experiment-specific coefficients, training horizons, seeds, taper
calibrations, and terminal criteria are reported separately with their
corresponding protocols.

\subsection{Controlled Categorical Environment D-U1}
\label{app:du1_environment}

D-U1 denotes the controlled categorical environment family used to
study persistent negative suppression, probability-support boundaries,
and shared-semantic generalization. Its E6 instantiation uses a fixed
catalogue of 64 unordered actions, each associated with a
four-dimensional unit semantic vector. Action identifiers are randomly
permuted for every seed, so numerical action indices carry no geometric
ordering.

For each seed, we independently sample 2048 training contexts and 2048
test contexts from
$s\sim\mathcal N(0,I_6)$.
The two splits follow the same context distribution, and test contexts
never participate in optimization. Results on this split are therefore
reported as held-out-context or unseen-state generalization rather than
OOD generalization.

\paragraph{Semantic action catalogue.}
Let
$\{e_i\}_{i=1}^{K}\subset\mathbb R^4$,
with $K=64$, denote the unit reward embeddings of the categorical
actions. They are sampled once per seed and randomly assigned to action
identifiers. For each context, two fixed linear maps generate a
demonstrated semantic target and a candidate improvement direction:
\begin{equation}
\begin{aligned}
    t_+(s)
    &=
    \operatorname{unit}
    \!\left(
        W_+^\top s
    \right), \\
    \widetilde u(s)
    &=
    W_u^\top s
    -
    \left\langle
        W_u^\top s,
        t_+(s)
    \right\rangle
    t_+(s), \\
    u(s)
    &=
    \operatorname{unit}
    \!\left(
        \widetilde u(s)
    \right).
\end{aligned}
\label{eq:du1_semantic_basis}
\end{equation}
The hidden optimal target and the local negative target are
\begin{equation}
\begin{aligned}
    t_\star(s)
    &=
    \operatorname{unit}
    \!\left(
        t_+(s)+\delta u(s)
    \right), \\
    t_-(s)
    &=
    \operatorname{unit}
    \!\left(
        t_+(s)-\delta u(s)
    \right),
\end{aligned}
\qquad
\delta=0.45.
\label{eq:du1_semantic_targets}
\end{equation}

\paragraph{Reward and action roles.}
The ground-truth reward of action $i$ is determined by its semantic
alignment with the hidden target:
\begin{equation}
    R(s,i)
    =
    \frac{1}{2}
    \left(
        1+
        t_\star(s)^\top e_i
    \right).
    \label{eq:du1_reward}
\end{equation}
The hidden optimal action is
\begin{equation}
    i_\star(s)
    =
    \arg\max_i
    t_\star(s)^\top e_i.
    \label{eq:du1_hidden_optimum}
\end{equation}
It is excluded from all positive demonstrations.

For each context, the four positive actions are the available actions
with the largest similarity to $t_+(s)$. After excluding the hidden
optimum and positive actions, the local negative action is the action
most aligned with $t_-(s)$. The four far-negative actions are selected
by their alignment with $-t_+(s)$. Thus,
\begin{equation}
\begin{aligned}
    \mathcal P(s)
    &=
    \operatorname{Top4}_{i\neq i_\star(s)}
    \,
    t_+(s)^\top e_i, \\
    i_{\mathrm{local}}(s)
    &=
    \arg\max_{i\notin
        \{i_\star(s)\}\cup\mathcal P(s)}
    t_-(s)^\top e_i, \\
    \mathcal F(s)
    &=
    \operatorname{Top4}_{i\notin
        \{i_\star(s),i_{\mathrm{local}}(s)\}
        \cup\mathcal P(s)}
    \,
    \bigl[-t_+(s)^\top e_i\bigr].
\end{aligned}
\label{eq:du1_action_roles}
\end{equation}
The positive and negative advantages are fixed throughout training:
\begin{equation}
    \widehat A(s,i)
    =
    \begin{cases}
        +1,
        & i\in\mathcal P(s),\\
        -1,
        & i=i_{\mathrm{local}}(s)
          \text{ or }i\in\mathcal F(s).
    \end{cases}
    \label{eq:du1_fixed_advantages}
\end{equation}
Consequently, differences between local and far-negative updates cannot
be attributed to different advantage magnitudes.

\paragraph{Shared-semantic categorical policy.}
The policy uses a two-layer tanh MLP with 64 hidden units per layer to
map the context to a unit semantic direction
$q_\theta(s)\in\mathbb R^4$.
Let $\widetilde e_i$ denote the action embedding used by the policy. The
categorical logits and probabilities are
\begin{equation}
\begin{aligned}
    \ell_{\theta,i}(s)
    &=
    \kappa_\theta(s)
    q_\theta(s)^\top
    \widetilde e_i, \\
    \pi_\theta(i\mid s)
    &=
    \frac{
        \exp\!\left(\ell_{\theta,i}(s)\right)
    }{
        \sum_{j=1}^{K}
        \exp\!\left(\ell_{\theta,j}(s)\right)
    }.
\end{aligned}
\label{eq:du1_policy}
\end{equation}
The concentration is either fixed at
$\kappa_\theta(s)=8$
or learned as
\begin{equation}
    \kappa_\theta(s)
    =
    \operatorname{softplus}
    \!\left(
        c_\theta(s)
    \right)
    +
    0.05,
    \label{eq:du1_concentration}
\end{equation}
with initial value 8 and no upper clamp.

In the semantically aligned condition,
$\widetilde e_i=e_i$.
In the shuffled control, the same policy embeddings are randomly
permuted across action identifiers while the reward embeddings and
training samples remain unchanged. This intervention preserves the
catalogue size and categorical parameterization but breaks the
correspondence between policy-side similarity and ground-truth semantic
utility.

\paragraph{Evaluation and failure events.}
We evaluate expected semantic reward, probability assigned to the hidden
optimal action, entropy, effective support, and normalized semantic
extrapolation on both training and held-out contexts. Task-performance
collapse, effective-support or concentration-boundary events, and
NaN/Inf numerical failure are reported separately. A policy can
therefore retain high reward while still approaching a support boundary;
the two outcomes are not treated as equivalent.

\paragraph{Shared use across categorical protocols.}
E6-A fixes the concentration and scans the strength of the local
negative update to identify the transition from the positive-only
ceiling to useful semantic extrapolation and then to performance
reversal. E6-B learns the concentration and applies near/far causal
interventions and matched controls. E6-C compares aligned and shuffled
policy embeddings to determine whether improvements on the hidden action
require the specified semantic correspondence.

E5 serves a distinct but complementary role. Its direct-softmax
diagnostic and categorical causal reconstruction isolate persistent
surprisal growth, logit-gap growth, and simplex-boundary suppression
under repeated negative updates. It shares the fixed-advantage and
near/far intervention logic of D-U1, but it does not use the 64-action
shared-semantic catalogue and does not support the E6 semantic
generalization claim.

\subsection{D4RL Locomotion Datasets}
\label{app:hopper_datasets}

The task-level evaluation uses HalfCheetah, Hopper, and Walker2d under
the medium, medium-replay, and medium-expert datasets
\citep{fu2020d4rl}; the mechanism diagnostics use Hopper, described
below.

Hopper is a continuous-control task with an
11-dimensional observation space and a three-dimensional bounded action
space. We use the D4RL
\texttt{hopper-medium},
\texttt{hopper-medium-replay}, and
\texttt{hopper-medium-expert}
datasets. The medium dataset contains trajectories collected from a
partially trained policy, medium-replay contains the corresponding replay
buffer collected throughout training, and medium-expert combines medium
and expert behavior. Hopper-medium-replay is used for the primary mechanism
diagnostic because its broader behavioral coverage provides a wider range of
learner-relative distances, while all three Hopper datasets are retained for
the task-level performance evaluation.

\subsection{Countdown Task and Puzzle Dataset}
\label{app:countdown_dataset}

Countdown is an arithmetic generation task in which the policy receives
a multiset of integers and a target value, and must generate an
arithmetic expression that evaluates exactly to the target. A valid
output must use every supplied number exactly once and may contain only
addition, subtraction, multiplication, division, and parentheses.

Puzzle instances and the frozen replay pool are constructed by our
registered model-independent generator; the cited works establish
the task provenance and prior LLM use, not the provenance of our
generated corpus.

Let
$x=(\mathcal N,T)$
denote a puzzle, where
$\mathcal N$
is the supplied multiset of numbers and
$T$
is the target. The policy action is a complete token sequence
$y$ representing the generated expression. Task success is determined
by a deterministic verifier:
\begin{equation}
    R(x,y)
    =
    \mathbf 1
    \left[
    \begin{aligned}
        &y
        \text{ is syntactically valid, uses }
        \mathcal N
        \text{ exactly once,} \\
        &\text{and evaluates to }T
    \end{aligned}
    \right].
    \label{eq:countdown_reward}
\end{equation}
Because multiple expressions can solve the same puzzle, evaluation is
based on verifier success rather than exact string matching.

The puzzle corpus is procedurally generated from valid arithmetic
expressions and divided into disjoint training, validation, and test
sets. Duplicate puzzles with the same number multiset and target are
removed across the corpus. Validation puzzles are used for checkpoint
selection, while the test split is reserved for final evaluation.

Countdown provides an external discrete-action setting with
autoregressive sequence generation and shared Transformer parameters.
For the diagnostic in Figure~\ref{fig:external_far_field}, unsuccessful
responses are verifier-matched and grouped by current mean-token surprisal.

\paragraph{Corpus construction and audit.}
The procedurally generated puzzle source contains 6{,}000 training,
500 validation, and 1{,}000 test puzzles, separated by canonical
arithmetic-structure family. For each training puzzle, the positive is
a verified oracle solution, while negatives are generated without a
learner model and stratified into detail-wrong, near-value,
mid-value, and far-value error bins. The corpus contains 9--16 unique
negative expressions per prompt. A fixed-width representation with
16 negative slots was also materialized for downstream compatibility:
4{,}943 prompts already contained 16 unique negatives, while 1{,}057
prompts were padded by cycling existing exact expressions. No new
synthetic negative is introduced by this conversion. This representation
records conversion provenance rather than the multiplicity used by the
paper-facing objectives. The paper-facing trainers deduplicate exact
expressions and optimize over all unique negatives available for each
prompt. Failure statistics recorded at construction time are for
analysis only: learner-relative near/far status is never stored and is
recomputed only by methods that use the corresponding remoteness
coordinate. The corpus and its replay-pool hash are shared by the
protocol-matched V2 methods; initialization, sampling, calibration,
and ratio-denominator rules are method-specific and are reported in
their corresponding protocol subsections.

\subsection{Training and Evaluation Protocols}

\section{Controlled Experimental Protocols}
\label{app:controlled_protocols}

This section specifies the interventions used in
Sections~\ref{sec:controlled_identification}
and~\ref{sec:repulsion_control}. The definitions of C-U1 and D-U1 are given in
Appendix~\ref{app:experimental_details}. Network architectures,
optimizer settings, training budgets, seeds, and evaluation frequencies
are reported in the protocol-specific subsections below.

\subsection{Shared Notation and Experimental Invariants}
\label{app:controlled_shared}

For a negative sample
$z=(s,a)$, define its raw repulsive update as
\begin{equation}
    g_\theta^-(z)
    =
    -\widehat A^-(z)
    \nabla_\theta
    \log\pi_\theta(a\mid s),
    \qquad
    \widehat A^-(z)
    =
    \max\{-\widehat A(z),0\}.
    \label{eq:controlled_negative_update}
\end{equation}
Learner-relative remoteness is measured by
\begin{equation}
    D_\theta(z)
    =
    -\log\pi_\theta(a\mid s),
    \label{eq:controlled_remoteness}
\end{equation}
or by its equivalent standardized Gaussian component when additive
density constants are irrelevant.

Unless explicitly stated otherwise, all branches of one controlled
comparison share:

\begin{enumerate}
    \item the same training and evaluation contexts;
    \item the same positive and negative samples;
    \item the same fixed rewards or advantage labels;
    \item the same initial policy parameters;
    \item the same minibatch order, optimizer, and training horizon;
    \item the same evaluation checkpoints and terminal classification
    rules.
\end{enumerate}

Only the variable named by the intervention is changed. Actions generated
under the same context are aggregated at the context level and are not
treated as independent contexts for statistical inference.

\subsection{Source-Isolation Protocol}
\label{app:source_isolation_protocol}

\paragraph{Purpose.}
The source-isolation experiment asks whether policy-relative remoteness
changes negative-update magnitude when sample quality is held fixed. It
identifies the source of a per-sample difference, not its causal effect on
training.

\paragraph{Continuous product-manifold construction.}
For each C-U1 context, we select multiple negative actions from a common
reward contour:
\begin{equation}
    R(s,a_i)
    =
    R(s,a_j),
    \qquad
    \widehat A(s,a_i)
    =
    \widehat A(s,a_j)
    <0,
    \label{eq:source_equal_quality}
\end{equation}
while ensuring that
\begin{equation}
    D_\theta(s,a_i)
    \neq
    D_\theta(s,a_j).
    \label{eq:source_different_remoteness}
\end{equation}
The reward coordinate and the policy-distance coordinate are therefore
independently controlled. This exact counterfactual comparison is the
reason for using a synthetic product-manifold construction: an external
offline dataset generally cannot provide two actions with precisely
matched task quality and independently prescribed distance from the
same current policy.

The comparison is performed both at policy initialization and after
positive-only pretraining. No negative sample used as a probe is applied
as an optimizer update during this protocol. Let \(u_\theta(s)\) denote
the policy-output coordinate used by the controlled probe, e.g.,
fixed-covariance Gaussian mean coordinates in C-U1 and direct logits in
D-U1. For every probe, we record
\begin{equation}
\begin{aligned}
I_{\mathrm{score}}(z)
&=
\left\|
\nabla_{u}\log \pi_{u}(a\mid s)
\right\|_2,\\
I_{\mathrm{sample}}(z)
&=
|\widehat A(z)|\, I_{\mathrm{score}}(z).
\end{aligned}
\label{eq:source_influence_metrics}
\end{equation}
Because \(|\widehat A|\) is identical across the matched actions,
differences in \(I_{\mathrm{sample}}\) arise from policy-score geometry
rather than advantage magnitude.

For neural implementations, we additionally record the full
parameter-gradient norm and the aggregate gradient obtained from samples
in the same remoteness range. These implementation-level actor-gradient
diagnostics are used as transfer checks, not as theorem-level score
measures.

We report near-to-far ratios of remoteness, score norm, single-sample
gradient norm, aggregate gradient norm, and gradient-direction
coherence. Ratios are first computed within each context and seed and
are then aggregated across the prespecified seeds.

\paragraph{Categorical counterpart.}
D-U1 uses unordered categorical actions with identical fixed negative
labels. The comparison varies the current probability assigned to the
negative action while preserving its reward role and advantage:
\begin{equation}
    \widehat A(s,i)
    =
    \widehat A(s,j)
    <0,
    \qquad
    \pi_\theta(i\mid s)
    \neq
    \pi_\theta(j\mid s).
    \label{eq:categorical_source_match}
\end{equation}
For a direct categorical logit parameterization, the score norm is
bounded and need not grow without limit as probability decreases.
The relevant far-field effect is instead persistence: the negative
update continues to increase the selected action's logit gap even after
its probability has become small. We therefore report probability,
surprisal, logit gap, score norm, and cumulative suppression over
repeated updates. This distinguishes categorical support suppression
from the unbounded Gaussian-distance amplification measured in C-U1.

\subsection{Causal-Transmission Protocol}
\label{app:causal_transmission_protocol}

\paragraph{Purpose.}
The source-isolation protocol identifies the source of abnormal update
magnitude but not its effect on task behavior. Causal transmission is tested in the nonlinear
Gaussian protocol, where the policy mean and variance evolve jointly,
and in the categorical support reconstruction of D-U1.

\paragraph{Dynamic near--far partition.}
At each intervention step, negative samples are partitioned using their
current-policy remoteness. Let
$\mathcal N_\theta$
and
$\mathcal F_\theta$
denote the prespecified near- and far-field sets. Their membership is
recomputed as the policy evolves rather than frozen at initialization.
The thresholds or quantiles are fixed before training and specified with the
corresponding environment.

\paragraph{Interventions.}
For a negative sample $z$, the following multipliers are applied to
$g_\theta^-(z)$:
\begin{equation}
\begin{aligned}
    w_{\mathrm{uncontrolled}}(z)
    &=1,\\
    w_{\mathrm{positive}}(z)
    &=0,\\
    w_{\mathrm{near\text{-}zero}}(z)
    &=
    \mathbf 1[z\notin\mathcal N_\theta],\\
    w_{\mathrm{far\text{-}zero}}(z)
    &=
    \mathbf 1[z\notin\mathcal F_\theta].
\end{aligned}
\label{eq:causal_zero_interventions}
\end{equation}

The far-cap intervention preserves the direction of a far-field update
but limits its detached magnitude to a reference level estimated from
near-field negative samples:
\begin{equation}
    w_{\mathrm{far\text{-}cap}}(z)
    =
    \begin{cases}
        \displaystyle
        \min\left\{
            1,
            \frac{C_{\mathrm{near}}}
                 {I_{\mathrm{sample}}(z)+\epsilon}
        \right\},
        & z\in\mathcal F_\theta,\\[3mm]
        1,
        & \text{otherwise},
    \end{cases}
    \label{eq:causal_far_cap}
\end{equation}
where $C_{\mathrm{near}}$ is the prespecified near-field update-magnitude
reference.

To distinguish selective intervention from a generic decrease in
negative optimization, the global-matched control applies one scalar
to every negative sample:
\begin{equation}
    w_{\mathrm{global}}(z)
    =
    \alpha_{\mathrm{match}}.
    \label{eq:causal_global_weight}
\end{equation}
The scalar is chosen using the training batch so that its detached aggregate
negative-update magnitude matches the corresponding selective
intervention:
\begin{equation}
    \alpha_{\mathrm{match}}
    \sum_{z\in\mathcal B^-}
        I_{\mathrm{sample}}(z)
    =
    \sum_{z\in\mathcal B^-}
        w_{\mathrm{selective}}(z)
        I_{\mathrm{sample}}(z).
    \label{eq:causal_budget_match}
\end{equation}
The matching coefficient is not differentiated through.

\paragraph{Causal contrasts.}
The primary causal contrast is between removing near-field and
far-field negative updates. If Near-zero leaves the dynamics unchanged
while Far-zero or Far-cap alters them, the effect cannot be attributed
to negative feedback in general. The global-matched branch quantifies
how much of the change follows from reducing total negative-update magnitude
and how much requires selecting samples by remoteness.

\paragraph{Temporal and terminal measurements.}
We record policy displacement, reward, negative-gradient magnitude,
mean dynamics and, for learnable-variance runs, scale dynamics, entropy or effective support, and the first
time at which each prespecified event occurs. The analysis checks whether
far-field update magnitude rises before policy drift and whether drift precedes
task-performance loss.

Terminal outcomes are classified separately as:

\begin{enumerate}
    \item \textbf{task-performance collapse}: a sustained loss of the
    prespecified task metric while optimization remains finite;
    \item \textbf{support or variance-boundary event}: categorical
    effective support or Gaussian variance crosses its prespecified
    boundary without requiring NaN or Inf;
    \item \textbf{numerical collapse}: a parameter, loss, gradient, or
    model output becomes NaN or Inf.
\end{enumerate}

A boundary event is not automatically counted as task-performance
collapse, and finite execution is not automatically counted as
convergence.

\subsection{Controlled Negative-Strength Sweep}
\label{app:negative_strength_sweep}

The controlled negative-strength sweep varies only the effective
negative-repulsion strength $q/p$, where $p$ denotes the aggregate
positive-attraction budget and $q$ denotes the aggregate
negative-repulsion budget. All sweep arms share the same data, fixed
labels, initialization, optimizer, training horizon, and event
thresholds. The positive-only baseline corresponds to $q/p=0$.

Held-out-context reward evaluates unseen-state generalization within the
C-U1 controlled protocol. Policy shift is the normalized displacement of
the learned policy from the positive-only target and visualizes how far
negative feedback pushes the policy beyond the imitation solution.
Task-performance collapse, support or variance-boundary events, and
NaN/Inf numerical failures are reported as separate event types.

\subsection{Controlled Taper Comparison}
\label{app:controlled_taper_protocol}

\paragraph{Controlled utility construction.}
In C-U1, the hidden optimum $a_\star(s)$ lies beyond the center of the
positive demonstrations $a_+(s)$. A local negative anchor is placed on
the opposite side of $a_+(s)$, so repulsion from this anchor points
toward $a_\star(s)$. Additional negative actions are placed on matched
reward contours at larger policy-relative distances. Increasing
negative strength therefore produces three observable regimes:
positive-only under-extrapolation, useful local extrapolation, and
far-field over-extrapolation or instability.

D-U1 implements the same logic with unordered semantic actions. The
hidden optimal target lies beyond the demonstrated positive direction,
while the local negative target lies in the opposite semantic
direction. Repelling the local negative action can therefore improve
the shared representation toward the hidden optimal action. Additional
remote negative actions create support pressure without changing their
fixed negative label. Evaluation on independently sampled contexts is
reported as held-out-context generalization.

Let
\[
x(D)
=
\left[
\frac{D-\tau}{c}
\right]_{+},
\qquad
r_x(D)=\sqrt{x(D)}.
\]
The Linear and Quadratic names refer to polynomial order in the
radialized coordinate \(r_\theta=\sqrt{x_\theta}\), rather than in the
squared-remoteness coordinate \(x_\theta\).
\paragraph{Weighting rules.}
For negative-sample remoteness $D$, the compared rules are
\begin{equation}
\begin{aligned}
    \omega_{\mathrm{positive}}(D)
    &=0,\\
    \omega_{\mathrm{uncontrolled}}(D)
    &=1,\\
    \omega_{\mathrm{global}}(D)
    &=\alpha,\\
    \omega_{\mathrm{hard}}(D)
    &=
    \mathbf 1[D\leq\tau],\\
    \omega_{\mathrm{Rec\text{-}L}}(D)
    &=
    \frac{1}{1+\lambda r_x(D)},\\
    \omega_{\mathrm{Rec\text{-}Q}}(D)
    &=
    \frac{1}{1+\lambda r_x(D)^2},\\
    \omega_{\mathrm{exp}}(D)
    &=
    \exp\{-\lambda r_x(D)^2\}.
\end{aligned}
\label{eq:controlled_taper_rules}
\end{equation}
All weighting values are computed from detached remoteness statistics;
the policy cannot reduce its loss by differentiating through the taper.

\paragraph{Near-field-retention matching.}
Let $\mathcal N$ denote the prespecified useful near-field set and define
its retained first-order update magnitude as
\begin{equation}
    \rho_{\mathrm{near}}(\omega)
    =
    \frac{
        \sum_{z\in\mathcal N}
        \omega(D_\theta(z))
        I_{\mathrm{sample}}(z)
    }{
        \sum_{z\in\mathcal N}
        I_{\mathrm{sample}}(z)
    }.
    \label{eq:near_retention}
\end{equation}
Taper parameters are calibrated on training contexts so that the
compared selective rules retain the same target near-field fraction,
up to the predetermined tolerance. Parameters are then frozen before
evaluation. This comparison isolates how the rules treat the far-field
tail after preserving comparable local utility.

\paragraph{Aggregate-budget matching.}
For a weighting function $\omega$, define its detached negative-update
budget as
\begin{equation}
    B(\omega)
    =
    \sum_{z\in\mathcal B^-}
    \omega(D_\theta(z))
    I_{\mathrm{sample}}(z).
    \label{eq:taper_budget}
\end{equation}
The budget-matched global coefficient is
\begin{equation}
    \alpha_{\mathrm{budget}}
    =
    \frac{
        B(\omega)
    }{
        B(\omega_{\mathrm{uncontrolled}})
    }.
    \label{eq:taper_global_match}
\end{equation}
This control applies the same total first-order negative-update budget
without using remoteness. A selective taper can therefore be credited
only for improvements that remain after comparison with its
budget-matched global baseline.

\paragraph{Measurements.}
Each method is evaluated over the full prespecified training horizon.
We report task performance, held-out-context generalization, distance
from the hidden optimum, policy displacement, entropy or variance,
effective support, near-field retention, aggregate negative-update
budget, and all three terminal-event categories. Method comparisons use
the same seeds and paired statistics.

\subsection{External Taper Validation}
\label{app:external_taper_protocol}

\paragraph{Scientific scope.}
External taper experiments test whether the controlled weighting
principle transfers to complex policies. They do not reproduce the exact
counterfactual geometry of C-U1 or D-U1 and therefore do not replace the
controlled source and causal experiments.

\paragraph{Hopper.}
All Hopper taper methods branch from the same actor initialization and
use the same offline transitions and advantage labels. For an action
with inverse-squash coordinate $u$, Gaussian mean $\mu_\theta(s)$, and
diagonal scale $\sigma_\theta(s)$, the remoteness statistic is
\begin{equation}
    D_{\mathrm{Hopper}}(s,a)
    =
    \frac{1}{2}
    \sum_j
    \left(
        \frac{
            u_j-\mu_{\theta,j}(s)
        }{
            \sigma_{\theta,j}(s)
        }
    \right)^2.
    \label{eq:hopper_taper_remoteness}
\end{equation}
The additive Gaussian normalization term is omitted because it does not
change the ordering used by the taper. Remoteness is recomputed under
the current actor and detached before constructing the negative weight.
The mechanism protocol with frozen advantages is kept separate from the
standard offline-RL performance protocol; the former studies transfer of
the far-field control mechanism, whereas the latter reports final
environment return.

\paragraph{Countdown.}
For each replay prompt, the V2 record contains a verified oracle
solution and 9--16 unique programmatically generated negative
expressions stratified by construction-time error type. A fixed-width
representation with 16 negative slots was also materialized for
downstream compatibility: 4{,}943 prompts already contained 16 unique
negatives, while 1{,}057 prompts were padded by cycling existing exact
expressions. This representation records conversion provenance rather
than the multiplicity used by the paper-facing objectives. The
paper-facing trainers deduplicate exact expressions and optimize over
all unique negatives available for each prompt. Learner-relative
near/far status is never stored and is recomputed only by methods that
use the corresponding remoteness coordinate.

All protocol-matched V2 Countdown methods use the same prompt set,
verified oracle solutions, frozen V2 negative bank, validation set,
and maximum training budget. Initialization, sampling, calibration,
and ratio-denominator rules are method-specific and are stated
separately.

\paragraph{Comparisons and reporting.}
The common external comparisons are Positive-only, Uncontrolled,
Global scaling, and exponential DRPO. Domain-specific baselines are
reported in the overall-performance section rather than treated as
controlled taper variants. Hopper reports normalized return and policy
statistics; Countdown reports greedy verifier success, Pass@$k$, valid
expression rate, surprisal, and support statistics. Task-performance,
boundary, and NaN/Inf outcomes remain separate in both domains.
\subsection{Shared Objectives and Negative-Update Controls}
\label{app:shared_methods}

This section defines the protocol controls and remoteness-aware
weighting rules shared by the D4RL and Countdown experiments. Their
task-specific remoteness signals, data construction, and optimization
protocols are described separately in
Appendices~\ref{app:d4rl_methods} and
\ref{app:countdown_methods}.

\paragraph{Shared signed objective.}

Let \(z\) denote an offline transition or response, let
\(A(z)\) denote its fixed signed learning signal during an actor update,
and let \(D_\theta(z)\) denote its learner-relative remoteness under the
current policy. We write

\[
A^{+}(z)=\max\{A(z),0\},
\qquad
A^{-}(z)=\max\{-A(z),0\}.
\]

The shared family of actor objectives is

\[
\mathcal{J}_{\omega}(\theta)
=
\mathbb{E}_{z\sim\mathcal{B}}
\left[
A^{+}(z)\log \pi_\theta(z)
 -
\omega\!\left(D_\theta(z)\right)
A^{-}(z)\log \pi_\theta(z)
\right],
\]

where positive updates retain unit weight and
\(\omega(D)\) controls only the negative branch. During each actor
update, the advantage and remoteness-derived weight are treated as
fixed coefficients; gradients are not propagated through the weighting
rule.

For the remoteness-aware methods, define the normalized excess
remoteness

\[
x_\theta(z)
=
\left[
\frac{D_\theta(z)-\tau}{c}
\right]_{+},
\]

where \(\tau\) is the near-field threshold, \(c>0\) is the task-specific
scale, and \([u]_{+}=\max\{u,0\}\). Consequently, samples inside the
near-field region \(D_\theta(z)\leq\tau\) retain their full negative
weight.

\paragraph{Positive-only.}

Positive-only removes all negative updates:

\[
\omega_{\mathrm{pos}}(D)=0.
\]

It therefore provides an extreme no-repulsion reference. In D4RL it
retains only transitions with positive actor advantages; in Countdown
it retains only verified successful responses under the corresponding
binary learning signal.

\paragraph{Global-\(\alpha\).}

Global-\(\alpha\) applies the same attenuation to every negative sample:

\[
\omega_{\mathrm{global}}(D)=\alpha,
\qquad
0\leq\alpha\leq 1.
\]

This control tests whether performance improvements can be explained by
a global reduction in negative-update magnitude, without using
learner-relative remoteness.

\paragraph{Reciprocal-Linear and Reciprocal-Quadratic.}
For the remoteness-aware methods, define the normalized excess
remoteness
\[
x_\theta(z)
=
\left[
\frac{D_\theta(z)-\tau}{c}
\right]_{+},
\]
and its radialized coordinate
\[
r_\theta(z)=\sqrt{x_\theta(z)}.
\]
For Gaussian policies, \(r_\theta\) is asymptotically proportional to
standardized distance in the far field; for sequence policies, it is a
surprisal-derived radial coordinate rather than a geometric distance in
the discrete action space.
The two reciprocal controls use the same remoteness signal and
near-field threshold as DRPO, but differ in their far-field decay:

\[
\omega_{\mathrm{Rec\text{-}L}}(D)
=
\frac{1}{1+\lambda r_\theta(z)},
\]

and

\[
\omega_{\mathrm{Rec\text{-}Q}}(D)
=
\frac{1}{1+\lambda r_\theta(z)^2}
=
\frac{1}{1+\lambda x_\theta(z)}.
\]

Reciprocal-Linear has the slowest far-field attenuation. The quadratic
variant decays more rapidly but retains a polynomial tail. These
controls isolate whether the benefit comes merely from using a
remoteness signal or specifically from the decay profile selected by
DRPO.

\paragraph{DRPO.}

DRPO uses an exponential remoteness taper:

\[
\omega_{\mathrm{DRPO}}(D)
=
\exp\left\{
-\lambda x_\theta(z)
\right\}.
\]

The weight equals one throughout the near-field region and then decays
exponentially once remoteness exceeds \(\tau\). Thus, DRPO preserves
local negative feedback while suppressing the remote negative tail more
rapidly than the reciprocal alternatives.

\paragraph{Hyperparameter selection.}

Within each task, \(\tau\), \(c\), \(\lambda\), and the
Global-\(\alpha\) coefficient are selected using the registered
validation protocol only. Test-task performance is not used for
hyperparameter selection. All protocol-matched methods share the same
candidate budgets and checkpoint-selection rule.

\subsection{D4RL Baselines and Experimental Protocol}
\label{app:d4rl_methods}

The definitions of Positive-only, Global-\(\alpha\),
Reciprocal-Linear, Reciprocal-Quadratic, and DRPO follow
Appendix~\ref{app:shared_methods}. This section specifies only the
D4RL-specific baselines, remoteness construction, training protocol,
and evaluation procedure.

\paragraph{Tasks and datasets.}

We evaluate HalfCheetah, Hopper, and Walker2d on the medium,
medium-replay, and medium-expert datasets, producing the nine tasks
reported in Table~\ref{tab:d4rl_performance}. The same fixed offline
dataset is used by every protocol-matched method within a task.

\paragraph{Published offline-RL baselines.}

CQL, TD3+BC, and IQL are included as published reference results.
These values are not protocol-matched reruns and are therefore marked
by \(\dagger\) in Table~\ref{tab:d4rl_performance}. The main table
reports their published task-level means. Their original uncertainty
statistics and exact score provenance are recorded separately from the
protocol-matched DRPO experiments.

\paragraph{Gaussian-policy remoteness.}

For an action \(a\in\mathbb{R}^{d_a}\) and current Gaussian policy with
mean \(\mu_\theta(s)\) and standard deviation \(\sigma_\theta(s)\), we use the
following implementation-level standardized remoteness statistic

\[
D_{\mathrm{D4RL}}(s,a)
=
\frac{1}{d_a}
\sum_{j=1}^{d_a}
\left(
\frac{a_j-\mu_{\theta,j}(s)}
     {\sigma_{\theta,j}(s)}
\right)^2.
\]

This dimension-normalized convention differs from
Equation~\eqref{eq:hopper_taper_remoteness} only by a fixed task-level scale.
The associated threshold and scale parameters absorb that constant, and each
protocol keeps its convention fixed across all compared methods.

When the actor uses a transformed action distribution, this quantity is
computed in the policy's native Gaussian coordinates. The resulting
remoteness is used only to determine the negative-update coefficient;
the distance term is detached during the actor update.

\paragraph{Protocol-matched actor comparison.}

Positive-only, Global-\(\alpha\), Reciprocal-Linear,
Reciprocal-Quadratic, and DRPO share the same offline dataset, critic
checkpoint or critic-training protocol, actor initialization, network
architecture, optimizer, minibatch schedule, actor-update budget,
evaluation schedule, and random seeds. They differ only in the
negative-update weighting rule defined in
Appendix~\ref{app:shared_methods}.

The common actor advantage construction is held fixed across these
methods. No method is allowed to change the critic, relabel samples, or
alter the data distribution in order to improve its actor result.

\paragraph{Hyperparameters.}

The registered training and selection configuration is summarized in
Table~\ref{tab:d4rl_registered_config}. Although the remoteness
coordinate is defined consistently across tasks, its empirical
distribution depends on the dataset. We therefore register
task-specific development grids so that each controller covers
comparable operating regimes under the corresponding remoteness
distribution. All grids and selection rules are frozen before
held-out evaluation and are populated directly from the experiment
configuration, rather than reconstructed or adjusted after observing
test results.

\begin{table}[t]
\centering
\caption{Configuration of the D4RL-9 development sweeps. Each
task-specific grid contains ten candidates: five from the coarse sweep
and five from refinement.}
\label{tab:d4rl_registered_config}

\small
\setlength{\tabcolsep}{6pt}
\renewcommand{\arraystretch}{1.10}

\begin{tabular}{@{}ll@{}}
\toprule
Configuration & Value \\
\midrule
Policy-update budget
& $1{,}000{,}000$ \\
Evaluation
& Every $50{,}000$ updates; $10$ episodes \\
Fixed geometry
& $\tau=0$ \\
Global scaling, $\alpha$
& Task-specific grid \\
Reciprocal coefficients, $c$
& Task-specific grids for Linear and Quadratic \\
Exponential multiplier
& $\alpha_{\mathrm{Exp}}=1$ \\
\bottomrule
\end{tabular}
\end{table}

\paragraph{Evaluation and uncertainty.}

Task performance is reported using the standard D4RL normalized return.
For every protocol-matched method, the main table reports the mean over
the registered seeds. The D4RL-9 total is the sum of
the nine task-level means.

Task-performance collapse is assessed from normalized return and
training trajectories. Variance or support-boundary events are reported
separately from task performance. NaN/Inf numerical failures are
reported as a third outcome and are not merged with either low return or
policy-boundary events. Any claim concerning convergence, collapse, or
method ranking is based on the registered terminal audit rather than an
intermediate checkpoint.

\subsection{Countdown Baselines and Experimental Protocol}
\label{app:countdown_methods}

Positive-only, Global-\(\alpha\), Reciprocal-Linear,
Reciprocal-Quadratic, and DRPO follow the shared definitions in
Appendix~\ref{app:shared_methods}. This section defines AsymRE, the
Joint Fitted-Reference \(\beta\)-TOPR variant, and canonical DPO,
together with the Countdown-specific response-bank, remoteness,
training, and evaluation protocols.

\paragraph{AsymRE.}

Asymmetric REINFORCE defines the response-level advantage using a
tunable reward baseline \(V\):

\[
A_{\mathrm{AsymRE}}(x,y)
=
r(x,y)-V,
\]

and optimizes

\[
\mathcal{J}_{\mathrm{AsymRE}}(\theta)
=
\mathbb{E}_{(x,y)\sim\mathcal{B}}
\left[
\bigl(r(x,y)-V\bigr)
\log\pi_\theta(y\mid x)
\right].
\]

For a binary verifier reward, lowering \(V\) emphasizes successful
responses, whereas increasing \(V\) strengthens suppression of failed
responses. In our branch-balanced signed-reward implementation,
\(r\in\{-1,+1\}\) gives the empirical baseline
\(\widehat V=0\). The swept offset therefore determines the
objective baseline as
\(V=\widehat V+\delta_v=\delta_v\). The baseline is selected using the validation protocol and
is held fixed within each run.

Global-\(\alpha\) and AsymRE remain separate comparisons because they
operate on different underlying learning signals. Global-\(\alpha\)
scales the negative part of the common precomputed signed advantage,
whereas AsymRE constructs its signed advantage directly from the raw
verifier reward and the published baseline parameterization. Under
binary rewards and simplified normalization they are closely related,
but the formal comparison preserves their native objectives and tuning
rules.

\paragraph{\(\beta\)-TOPR (joint fitted reference).}

Let \(\mathcal X\) be the training-prompt set. For each
\(x\in\mathcal X\), let \(y_x^+\) be the verified oracle solution and
let \(\mathcal N_x\) be the set of unique negative expressions after
exact-expression deduplication. Define the mean completion-token log
probability

\[
\bar{\ell}_\theta(x,y)
=
\frac{1}{|y|}
\sum_{t=1}^{|y|}
\log\pi_\theta(y_t\mid x,y_{<t}),
\]

and the full-completion summed log probability used only for the ratio,

\[
s_\theta(x,y)
=
|y|\,\bar{\ell}_\theta(x,y)
=
\sum_{t=1}^{|y|}
\log\pi_\theta(y_t\mid x,y_{<t}).
\]

For the jointly fitted reference adapter \(\mu_{\mathrm{ref}}\), define

\[
s_{\mathrm{ref}}(x,y)
=
\sum_{t=1}^{|y|}
\log\mu_{\mathrm{ref}}(y_t\mid x,y_{<t}),
\]

\[
\rho_\theta(x,y)
=
\exp\!\left(s_\theta(x,y)-s_{\mathrm{ref}}(x,y)\right),
\]

and

\[
\begin{aligned}
w_\beta(x,y)
&=
\exp\!\left(
\beta\min\{s_\theta(x,y)-s_{\mathrm{ref}}(x,y),0\}
\right) \\
&=
\min\{\rho_\theta(x,y)^\beta,1\}.
\end{aligned}
\]

The policy objective is

\[
\begin{aligned}
\mathcal J_{\beta\text{-}\mathrm{TOPR}}(\theta)
={}&
\mathbb E_{x\sim\operatorname{Unif}(\mathcal X)}
\Bigg[
\bar{\ell}_\theta(x,y_x^+) \\
&-
\frac{1}{|\mathcal N_x|}
\sum_{y\in\mathcal N_x}
\operatorname{sg}[w_\beta(x,y)]
\bar{\ell}_\theta(x,y)
\Bigg].
\end{aligned}
\]

with actor update

\[
g_{\beta\text{-}\mathrm{TOPR}}(\theta)
=
\nabla_\theta\mathcal J_{\beta\text{-}\mathrm{TOPR}}(\theta).
\]

This objective samples prompts uniformly, assigns unit total
coefficient to the positive branch and unit total coefficient to the
negative branch, and averages unique negatives within each prompt. It
is not a simple uniform average over all response rows. The
ratio-derived weight is detached from the policy gradient.

The reference adapter is fitted with

\[
\mathcal J_{\mathrm{ref}}
=
\mathbb E_{x\sim\operatorname{Unif}(\mathcal X)}
\left[
\frac{1}{2}\bar{\ell}_{\mathrm{ref}}(x,y_x^+)
+
\frac{1}{2|\mathcal N_x|}
\sum_{y\in\mathcal N_x}
\bar{\ell}_{\mathrm{ref}}(x,y)
\right].
\]

The reference adapter is updated once per policy step. Reference
outputs and ratio-derived weights are detached from the policy
gradient. The 0.5/0.5 masses belong only to the reference-fitting
objective, not to the policy objective. The data records do not contain
behavior-policy log probabilities; the ratio denominator is supplied by
the jointly fitted reference adapter during training. The completed
TOPR coefficient scan is a provenance-limited development pilot and does
not establish convergence, statistical significance, steady state, or
a formal method ranking.

\paragraph{Canonical DPO.}
Canonical DPO~\citep{rafailov2023direct} uses the same LoRA
parameterization as the other Countdown methods and is preceded by a
short oracle-positive SFT stage so that preference optimization starts
from a task-capable policy rather than a near-random solver. The
resulting SFT LoRA checkpoint reaches held-out Pass@8 in the
\(0.10\)--\(0.14\) range and initializes both the trainable policy
adapter and the frozen reference adapter. For each prompt \(x\), the verified oracle solution
\(y_x^+\) is paired with every unique verifier-incorrect completion
\(y\in\mathcal N_x\). Using the summed completion log probability
\(s_\theta\) defined above, the loss is

\[
\begin{aligned}
\mathcal L_{\mathrm{DPO}}(\theta)
={}&-
\mathbb E_{x\sim\operatorname{Unif}(\mathcal X)}
\Bigg[
\frac{1}{|\mathcal N_x|}
\sum_{y\in\mathcal N_x}
\log\sigma\Bigl(
\beta\bigl[
 s_\theta(x,y_x^+)-s_{\mathrm{ref}}(x,y_x^+) \\
&\hspace{8.2em}
-s_\theta(x,y)+s_{\mathrm{ref}}(x,y)
\bigr]
\Bigr)
\Bigg].
\end{aligned}
\]

The loss is averaged over unique negatives within each prompt and then
uniformly across prompts. The reference remains frozen throughout the
fixed 1,200-step continuation, and \(\beta\) is selected using the
registered validation sweep.

\paragraph{Method-specific remoteness.}

For DRPO and the other remoteness-taper controls, the learner-relative
coordinate for a prompt \(x\) and completion
\(y=(y_1,\ldots,y_T)\) is the current mean completion-token surprisal

\[
D_{\mathrm{CD}}(x,y)
=
-\frac{1}{T}
\sum_{t=1}^{T}
\log
\pi_\theta
\left(
y_t\mid x,y_{<t}
\right).
\]

Only completion tokens are included; prompt tokens and padding positions
are excluded. Mean rather than summed surprisal is used so that response
length does not mechanically determine DRPO remoteness. The remoteness
score is detached before applying the shared taper functions. AsymRE
does not use remoteness and applies its signed reward directly. The
Joint Fitted-Reference \(\beta\)-TOPR variant uses the dynamic
full-completion policy/reference log-ratio defined above. Canonical DPO
instead optimizes the frozen-reference pairwise log-ratio in its own
objective; neither baseline uses DRPO's mean-token remoteness.

\paragraph{Frozen response-bank construction.}

Before policy optimization, the registered model-independent generator
constructs a frozen prompt-indexed V2 bank. Each training prompt has one
verified oracle solution and 9--16 unique programmatically generated
negative expressions stratified by construction-time error type. The
paper-facing trainers deduplicate exact expressions and use all unique
negatives available for each prompt. The bank stores no learner-relative
near/far label and no behavior-policy log probability. Verifier outcome,
response length, syntax status, number-use status, and arithmetic result
remain separate stored diagnostics even when they share the same binary
training reward.

\paragraph{Training and checkpoint selection.}

AsymRE, the Joint Fitted-Reference \(\beta\)-TOPR variant, and DRPO
initialize their trainable policy LoRA adapters directly from the same
Qwen2.5-0.5B-Instruct backbone~\citep{qwen2024qwen25}. The TOPR arm
additionally creates a reference LoRA copied from its initial policy
adapter. Canonical DPO uses the same LoRA parameterization but instead
initializes both its trainable policy adapter and permanently frozen
reference adapter from the short SFT LoRA checkpoint before the first
DPO update. The registered continuation
methods share the AdamW optimizer~\citep{loshchilov2019decoupled},
learning-rate schedule, minibatch construction, number of optimizer
steps, and validation split unless a method-specific protocol states
otherwise. Hyperparameters are selected using the registered validation
rule.

The registered training and selection configuration is summarized in
Table~\ref{tab:countdown_registered_config}. All entries must be
populated directly from the frozen experiment configuration rather than
reconstructed after observing the test results.

\begin{table}[t]
\centering
\caption{Registered configuration for the Countdown performance experiments.}
\label{tab:countdown_registered_config}

\footnotesize
\setlength{\tabcolsep}{4pt}
\renewcommand{\arraystretch}{1.10}

\begin{tabular}{@{}p{0.38\columnwidth}p{0.56\columnwidth}@{}}
\toprule
Configuration & Registered value \\
\midrule

Training horizon
& $1{,}200$ optimizer steps; no early stopping \\

Update configuration
& Micro-batch $1$, gradient accumulation $8$; AdamW with learning rate
$5\times10^{-5}$, weight decay $0.01$, cosine decay, and $3\%$ warmup \\

AsymRE
& Selected $V=\delta_v=-1$ \\

Joint fitted-reference TOPR
& Selected $\beta=0.25$ \\

Canonical DPO
& One-epoch SFT warm start; selected $\beta=1.0$ \\

DRPO
& $\tau=0$, $c=2$, and selected $\lambda=1.897119985$ \\

Global reference
& $\alpha=0.03125$ ($1/32$) \\

Evaluation
& Pass@8 primary; Pass@64 supplementary \\

\bottomrule
\end{tabular}
\end{table}

\paragraph{Verifier-based evaluation.}

Greedy verifier success is the fraction of held-out puzzles solved by
greedy decoding. Pass@$k$ is the fraction solved by at least one of
the registered \(k\) sampled responses. Valid-expression rate is the
fraction of generated responses that satisfy the parser and number-use
constraints, irrespective of whether they reach the target.

These three metrics are reported separately. In particular, a higher
verifier success rate is not attributed to improved reasoning when it
is accompanied by a deterioration in expression validity.

Task-performance degradation, token- or sequence-support boundary
events, and NaN/Inf numerical failures are audited independently.
Claims about stability or final method ranking use the registered
terminal checkpoint audit rather than finite-step pilot behavior.

\subsection{Countdown Baseline-Coefficient Sensitivity}
\label{app:countdown_baseline_sensitivity}

Figure~\ref{fig:app_countdown_baseline_parameter_response} reports the
complete parameter-response sweeps of AsymRE and the Joint
Fitted-Reference \(\beta\)-TOPR variant. For AsymRE, performance is
maximized at \(\delta_v=-1\) and drops substantially as \(\delta_v\)
increases, placing the strongest tested finite-horizon configuration at
the zero-negative boundary. For joint fitted-reference \(\beta\)-TOPR,
\(\beta=0\) performs poorly, while positive \(\beta\) values recover
performance and form a broad plateau rather than a sharply isolated
optimum. We therefore use \(\delta_v=-1\) for AsymRE and \(\beta=0.25\)
for joint fitted-reference \(\beta\)-TOPR in the main comparison.
Across the registered \(\beta\) sweep, warm-started canonical DPO
fails to preserve its initial task capability and collapses under
every tested setting.

\begin{figure}[t]
    \centering

    \begin{minipage}[t]{0.49\columnwidth}
        \centering
        \vspace{0pt}
        \parbox[t][2.8em][t]{\linewidth}{
            \centering
            \textbf{(a) AsymRE}
        }
        \includegraphics[
            width=\linewidth
        ]{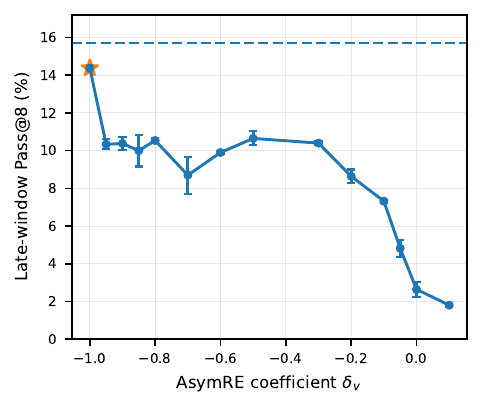}
    \end{minipage}
    \hfill
    \begin{minipage}[t]{0.49\columnwidth}
        \centering
        \vspace{0pt}
        \parbox[t][2.8em][t]{\linewidth}{
            \centering
            \textbf{(b) Joint fitted-reference}\\[-0.1em]
            \textbf{\(\beta\)-TOPR}
        }
        \includegraphics[
            width=\linewidth
        ]{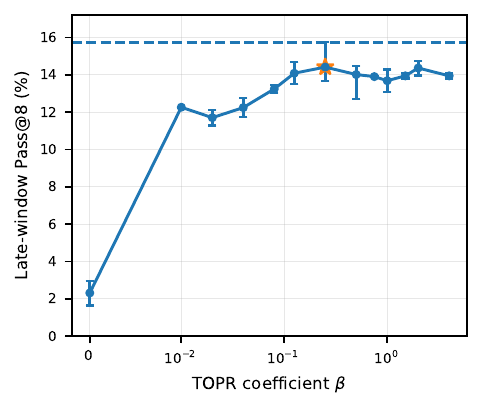}
    \end{minipage}

    \vspace{-0.35em}
    \caption{
        \textbf{Countdown-0.5B baseline-coefficient response.}
        \textbf{(a)} AsymRE late-window Pass@8 versus \(\delta_v\).
        \textbf{(b)} Joint fitted-reference \(\beta\)-TOPR late-window
        Pass@8 versus \(\beta\). Points denote means over the available
        development trajectories; error bars span their minimum--maximum
        range. The dashed line denotes DRPO at
        \(\lambda=1.897\) with \(c=2\), and stars mark the baseline
        settings used in Table~\ref{tab:countdown_performance}:
        \(\delta_v=-1\) and \(\beta=0.25\).
    }
    \label{fig:app_countdown_baseline_parameter_response}
\end{figure}

\subsection{Countdown Taper-Coefficient Sensitivity}
\label{app:countdown_taper_sensitivity}

Figure~\ref{fig:app_countdown_taper_coefficient_response} reports the
complete stored late-window Pass@8 coefficient response for the
Countdown-0.5B taper study under the fixed 1,200-step budget. Here the
late-window metric averages the final five evaluations (steps 800, 900,
1000, 1100, and 1200). Each point is the mean of the two paired
development seeds, and each error bar spans the corresponding pair of
seed values. The horizontal reference is the Positive-only late-window
mean of 14.0\%.

\begin{figure}[t]
    \centering
    \includegraphics[
        width=\columnwidth
    ]{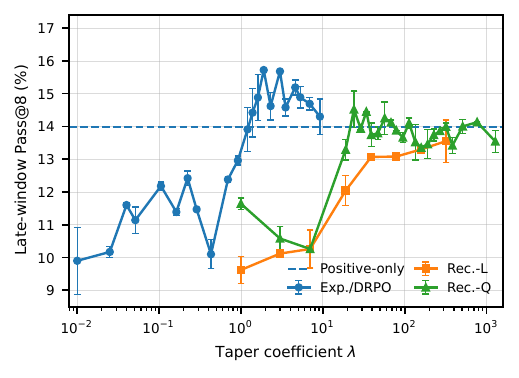}
    \caption{
        \textbf{Countdown-0.5B taper-coefficient response.}
        Late-window Pass@8 is shown against the taper coefficient $\lambda$
        under the fixed 1,200-step budget. Curves report two-seed late-window means;
        error bars span the paired seed values, and the dashed horizontal
        line denotes Positive-only. Exp./DRPO is shown over
        $\lambda\in[0.01,9.2103]$, Reciprocal-Linear through $319$, and
        Reciprocal-Quadratic through $1279$.
    }
    \label{fig:app_countdown_taper_coefficient_response}
\end{figure}

The exponential response rises above Positive-only over an intermediate
coefficient region, reaches a maximum two-seed late-window mean of 15.7\%,
and returns toward the Positive-only level at the largest evaluated
coefficients. Reciprocal-Linear increases toward a broad plateau but
remains slightly below Positive-only, with a maximum late-window mean of
13.6\%. Reciprocal-Quadratic reaches 14.5\% at $\lambda=24$ and then
fluctuates around a broad reference-level plateau through $\lambda=1279$.
These complete curves show that the observed difference is a
response-shape effect rather than a comparison of isolated best points.
Because this study uses two paired development seeds and a fixed horizon,
it supports a bounded tail-shape sensitivity claim, not convergence,
statistical significance, or a universal method ranking.

\clearpage
\end{document}